\documentclass[mnsc]{informs_nojournal}

\RequirePackage{tgtermes}
\RequirePackage{newtxtext}
\RequirePackage{newtxmath}
 
\RequirePackage{endnotes}

\DoubleSpacedXI

\usepackage{algorithm}

\usepackage{algorithmic}

\usepackage{tikz}
\usetikzlibrary{arrows.meta, positioning}

\usepackage{natbib}
 \bibpunct[, ]{(}{)}{,}{a}{}{,} 
 \def\bibfont{\small}

\EquationsNumberedThrough

\TheoremsNumberedThrough      
 
\ECRepeatTheorems

\usepackage{graphicx} 
\usepackage{multirow} 
\usepackage{amsfonts} 
 
\usepackage[title]{appendix} 
\usepackage{xcolor} 
\usepackage{textcomp} 
\usepackage{manyfoot} 
\usepackage{booktabs} 
 
\usepackage{listings} 
 
\usepackage{subcaption}

\usepackage{mathtools}

\usepackage{hyperref}        
\usepackage{url}

\usepackage[capitalise,noabbrev]{cleveref}

\usepackage{enumitem}
\newcounter{subeqn}[section] \renewcommand{\thesubeqn}{\theequation\alph{subeqn}} 
\newcommand{\subeqn}{ 
  \refstepcounter{subeqn} 
  \tag{\thesubeqn} 
}

\newcommand{\AppendixStretch}{ 
  \renewcommand{\baselinestretch}{0.9}\selectfont
}

\newcommand{\otilde}{\mathcal{\tilde{O}}}

\newcommand{\bbR}{\mathbb{R}}

\newcommand{\bbE}{\mathbb{E}}

\newcommand{\lspo}{\ell_{\textnormal{SPO}}}

\newcommand{\rspo}{R_{\textnormal{SPO}}}

\newcommand{\X}{\mathcal{X}}

\newtheorem{condition}{Condition} 
 
\newtheorem{fact}{Fact}

\RUNAUTHOR{Wan, Liu, Grigas, and Shen}

\RUNTITLE{Decision-Focused Sequential Experimental Design}

\TITLE{Decision-Focused Sequential Experimental Design: A Directional Uncertainty-Guided Approach}

\ARTICLEAUTHORS{ 
\AUTHOR{Beichen Wan}
\AFF{Department of Statistics and Operations Research, University of North Carolina, Chapel Hill, 27599,  \EMAIL{bcwan@ad.unc.edu}}
\AUTHOR{Mo Liu}
\AFF{Department of Statistics and Operations Research, University of North Carolina, Chapel Hill, 27599,  \EMAIL{mo\_liu@unc.edu}}  
\AUTHOR{Paul Grigas}
\AFF{Department of Industrial Engineering and Operations Research, University of California, Berkeley, Berkeley, CA, 94720,  \EMAIL{pgrigas@berkeley.edu}}
\AUTHOR{Zuo-Jun Max Shen}
\AFF{Department of Industrial Engineering and Operations Research, University of California, Berkeley, Berkeley, CA, 94720,  \EMAIL{maxshen@berkeley.edu}} 
}

\ABSTRACT{ 
We consider the sequential experimental design problem in the predict-then-optimize paradigm. In this paradigm, the outputs of the prediction model are used as coefficient vectors in a downstream linear optimization problem. Traditional sequential experimental design aims to control the input variables (features) so that the improvement in prediction accuracy from each experimental outcome (label) is maximized. However, in the predict-then-optimize setting, performance is ultimately evaluated based on the decision loss induced by the downstream optimization, rather than by prediction error. This mismatch between prediction accuracy and decision loss renders traditional decision-blind designs inefficient.
\textit{\textbf{Methodology/results:}} To address this issue, we propose a directional-based metric to quantify predictive uncertainty. This metric does not require solving an optimization oracle and is therefore computationally tractable. We show that the resulting sequential design criterion enjoys strong consistency and convergence guarantees.
\textit{\textbf{Managerial implications:}} Under a broad class of distributions, we demonstrate that our directional uncertainty–based design attains an earlier stopping time than decision-blind designs. This advantage is further supported by real-world experiments on an LLM job allocation problem.
}

\begin{document}

\maketitle

\section{Introduction}
Experimental design studies how to strategically collect data by choosing experiments to run (i.e., which features to collect labels for) in order to maximize the information gained from experimental outcomes. It has long served as a fundamental tool for data collection. By carefully designing informative experiments, one can collect labeled data that supports the construction of predictive models. Here, the labeled data refers to pairs of (group, response) or, equivalently, (feature, label) or (covariate, outcome). Under a limited labeling budget, the goal of experimental design is to identify data acquisition strategies that lead to an accurate and reliable prediction model.

This data collection problem becomes particularly important when labels are expensive or time-consuming to obtain, especially in sequential settings where experiments are conducted over time and each newly observed label informs which experiment to run next. For example, in AI training, human annotators may be required to evaluate prompts for large language models (LLMs) \citep{ouyang2022training}, whereas in clinical trials, recruiting participants and measuring outcomes involve ethical, temporal, and financial constraints \citep{anderer2022adaptive}. Classical experimental design provides a rich framework for controlling inputs (e.g., covariates or feature configurations) to maximize the information gained from corresponding outputs (e.g., treatment responses or outcomes).
Because the information gained from labeling depends on the labeled data already collected, sequential experimental design chooses the next experiment adaptively based on past observations to maximize expected information gain (or uncertainty reduction) about the target of interest  \citep{castro2005faster, sugiyama2009pool, cai2016batch, beygelzimer2009importance}.

In sequential experimental design, the ``value'' of a candidate label depends on how uncertainty and performance are measured. Much of the regression-oriented literature optimizes surrogate objectives tied to (squared) prediction error, implicitly assuming that uniformly reducing predictive uncertainty translates into improved task performance. However, in many scenarios, the output of a predictive model is used directly as input to a downstream optimization problem to determine a decision policy (e.g., setting prices or choosing order quantities). In this setting, often referred to as the predict-then-optimize and/or contextual optimization setting, the performance metric of primary interest is the \emph{decision loss} induced by decisions made using the model’s predictions, rather than a standalone measure of predictive error. Consequently, smaller predictive uncertainty does not always imply smaller decision loss, as illustrated in Section~\ref{sec:intro_example}. This observation motivates the study of sequential experimental design with a decision-focused approach.

In this paper, we focus on the predict-then-optimize setting where the downstream decision problem is a linear optimization problem. Designing optimal sequential policies in this setting is challenging because uncertainty propagates through multiple stages: predictive uncertainty perturbs the parameters of the downstream optimization problem, and these perturbations translate discontinuously into changes in decisions (e.g., order quantities in inventory management or allocation choices in transportation systems). These decision changes further propagate into changes in decision loss. Consequently, relying on simple prediction-uncertainty proxies is misleading and inefficient for evaluating uncertainty in final decisions. On the other hand, directly evaluating uncertainty in decision loss is often both computationally and statistically intractable.

To design a sequential data collection metric that is both aligned with decision loss and computationally tractable, we propose a \emph{directional uncertainty}-guided approach. Building on the theoretical observation that the scale of the prediction does not affect decision-making in linear optimization (Fact~\ref{lem:spo_scale_invariant}), our design method concentrates on the \emph{directional} information of predictions. The central idea is to normalize each predicted cost vector onto the unit ball before evaluating its associated uncertainty. This normalization emphasizes how prediction errors project onto decision-relevant directions that influence the resulting downstream actions. 
Through this lens, our approach yields data acquisition rules that prioritize designs that are informative for downstream decisions, while de-emphasizing uncertainties that are irrelevant to decision making.

Since this directional-guided design method does not require solving the downstream optimization problem, it is straightforward to implement and intuitively appealing. However, an important question remains:

\textit{\bfseries Can the simple normalization of predictions yield a data collection metric that is better aligned with decision making and provides stronger theoretical guarantees?}

In this paper, we answer this question by showing that the directional uncertainty–based design approach achieves a strictly earlier stopping time than decision-blind designs that rely solely on prediction uncertainty. Our key contributions are summarized as follows.

\begin{itemize}
\item For contextual stochastic linear optimization, we introduce a decision-focused sequential experimental design method that prioritizes labels according to \emph{directional uncertainty}. This design method is motivated by the fact that the downstream linear optimization is independent of the prediction scale.

\item  We establish non-asymptotic guarantees for our directional uncertainty-guided design method in terms of the excess decision-risk bounds. By comparing the stopping time, we show that in general, our directional uncertainty-based method is no worse than traditional prediction uncertainty-based design methods.

\item Under a mild \emph{suboptimality gap} assumption, which states that there exists a positive gap in expected decision loss between optimal prediction models and suboptimal ones (Assumption~\ref{assu:gap}), we prove that our method achieves an earlier stopping time than decision-blind benchmark algorithms. This result highlights the benefit of normalization in experimental design in the predict-then-optimize setting. We further provide natural examples of distributions that satisfy the suboptimality gap assumption in Section~\ref{sec:example}.

\item Through numerical experiments based on a dataset for cardiovascular disease diagnosis as well as a \emph{real-world} study on prompt engineering for LLMs, we show that our policy attains the same decision risk with substantially less data than supervised or decision-blind sequential alternatives. These results confirm the practical advantages of the directional uncertainty-based approach.
\end{itemize}

The organization of the main body is as follows. In Section~\ref{sec:setting}, we introduce the formulation and setting of sequential experimental design in the predict-then-optimize setting. To illustrate the practical importance of this problem, we present a motivating application in Section~~\ref{sec:intro_example}. Section~\ref{sec:literature} reviews related work on decision-focused learning and experimental design. Our algorithm and its theoretical guarantees are provided in Section~\ref{sec:alg}. To further highlight the advantage of the directional-uncertainty approach, Section~\ref{sec:refinedlow} considers a class of distributions under which the proposed method guarantees an earlier stopping time than the decision-blind benchmark. Justifications of the assumptions are given in Section~\ref{sec:theoretical_justification}. Section~\ref{sec:numerical} examines the numerical performance of our sequential design method in two contexts—cardiovascular diagnosis and LLM job assignment. Both studies demonstrate that the directional–uncertainty–based design can achieve an earlier stopping time than traditional benchmarks. Finally, the conclusion is provided in Section \ref{sec:conclude}.

\subsection{Sequential Design for Predict-then-Optimize}\label{sec:setting}

In this section, we introduce the sequential experimental design problem in the \textit{predict-then-optimize} paradigm, which is a popular class of methods for addressing contextual stochastic linear optimization. In this setting, a decision maker (also a data collector) uses the collected data to fit a predictive model that predicts the unknown parameters in the downstream decision-making problem. Specifically, the decision-making problem is a linear optimization problem with unknown coefficients in the objective function. (For example, a logistics planner needs to select shipping routes for future deliveries, but must first predict fuel costs or travel times based on observable contextual features such as weather or traffic conditions.) The sequential design focuses on the data collection stage, where, by actively designing experiments, the (noisy) outcome of the designed experiments can be observed. The ultimate goal is to use a small number of experiments to obtain a prediction model that minimizes the expected decision loss. 

Let us first specify some notation used throughout the paper. We use $ \Vert \cdot \Vert $ to denote the $\ell_2$ norm. Let the sample space ${\mathcal{X}} = \{\mathrm{X}_{(i)}\}_{i=1}^m$ be the set of possible designs (treatments), where each $\mathrm{X}_{(i)}\in \bbR^\kappa$ stands for a possible design vector. Let $\mathcal{C}\subseteq\bbR^d$ denote the space of possible outcomes (i.e., the label space). We use $\mathcal{D}$ to denote the unknown joint distribution of the design $x$ and the outcome $c$. Since we focus on predicting the relationship between the feature $x\in \mathcal{X}$ and the outcome $c \in \mathcal{C}$, for simplicity, we assume for any $x= \mathrm{X}_{(i)}$, the marginal density of $x$, $\mu(x)$ is known. We denote the radius of the label space by $\rho(\mathcal{C}) := \max_{c \in \mathcal{C}} \Vert c \Vert $. Given a finite hypothesis class $\mathcal{H}$, our goal is to identify a predictive model $h\in\mathcal{H}:\mathcal{X}\rightarrow\mathcal{C}$ that minimizes the expected decision loss. Throughout the paper, we assume that $\mathcal{H}$ is finite with cardinality at most $|\mathcal{H}|$, a standard assumption in experimental design; see, e.g., \cite{zhao2024experimental}. When $\mathcal{H}$ is infinite, our analysis extends by replacing $|\mathcal{H}|$ with a complexity measure such as a covering number; see, e.g., \cite{wainwright2019high}.

In contextual stochastic linear optimization, there is a  downstream decision-making problem whereby, given each feature $x$, we aim to find a decision $w$ that minimizes the conditional expected cost: 
\begin{align}\label{equ:downstream}
    \min_{w\in S}\bbE[c^\top w|x] = \min_{w\in S}\bbE[c^\top |x]w,
\end{align} where $S$ denotes the polyhedral feasible region. 
We assume that the feasible region $\mathcal{S}$ is bounded and define its diameter as $\Delta(\mathcal{S}) := \sup_{w_1, w_2 \in \mathcal{S}} \Vert w_1 - w_2 \Vert$. Let $w^*(\hat{c}) := \arg\min_{w\in S}\hat{c}^\top w $ denote an oracle for solving Problem \eqref{equ:downstream} under the predicted cost vector $\hat{c}$. For simplicity, when Problem \eqref{equ:downstream} has multiple optimal solutions, we use a consistent tie-breaking rule (e.g., ordering extreme points of the polyhedron), thus the oracle $w^*$ yields a unique solution for every input cost vector $c$.  To quantify the decision loss in the downstream linear optimization, we adopt the Smart Predict-then-Optimize (SPO) loss introduced in \cite{elmachtoub2022smart},  defined by $$\lspo(\hat{c},c) = c^\top w^*(\hat{c}) - c^\top w^*(c).$$
Intuitively, the SPO loss quantifies the difference between the cost induced by $w^*(\hat{c})$ and the optimal decision $w^*(c)$, given that the actual cost vector is $c$.
The SPO risk is defined as the expectation of the SPO loss, i.e., $\rspo(h) := \bbE_{(x,c)\sim\mathcal{D}} [\lspo(h(x),c)]$. Then, for a given hypothesis class $\mathcal{H}$, our goal is to identify a predictive model that minimizes the SPO risk:  $\min_{h \in \mathcal{H}} \rspo(h)$. 
Let $\mathcal{H}^*$ denote the set of predictors that achieve the minimum SPO risk, and let $h^* \in \mathcal{H}^*$ represent one such optimal predictor. 

For convenience, we further introduce the following notations. For the given hypothesis class $\mathcal{H}$, we use $\hat{\mathcal{C}}$ to denote the range of the predicted labels, namely, $\hat{\mathcal{C}}:= \{c \in \bbR^d: c = h(x), \text{ for some } x \in \mathcal{X} \text{ and } h \in \mathcal{H}\}$. We define the maximum SPO loss by $\omega_{\lspo}(\mathcal{\hat C}, \mathcal{C}):= \sup_{\hat c \in \hat C, c \in \mathcal{C}} \lspo(\hat c, c)$.  We further define $ \gamma := \sum_{i=1}^m \max_{h_1, h_2 \in \mathcal{H}} \left\Vert \frac{h_1(x_i)}{\Vert h_1(x_i) \Vert} - \frac{h_2(x_i)}{\Vert h_2(x_i) \Vert} \right\Vert$,  as the cumulative maximum directional difference across the hypothesis class for each design.

In this paper, the sequential design process is as follows.
At the beginning of iteration $t$, we select a candidate design $x_t \in \mathcal{X}$ and conduct an experiment to observe its outcome $c_t$. Based on this new information from $c_t$, we then shrink the hypothesis class $H_t$ to remove some suboptimal prediction models $h$ from $H_t$. At the end of iteration $t$, the best model from the remaining hypothesis class $H_t$ is selected as the output.

	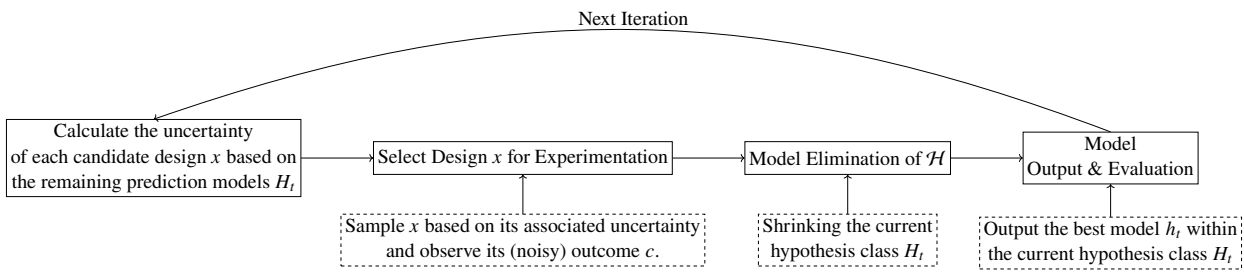
\begin{figure}[H]
		\centering
		\resizebox{\linewidth}{!}{ 
	       \begin{tikzpicture}[
                font=\Large,
				node distance=1.8cm and 2.2cm,
				box/.style={rectangle, draw, align=center, minimum width=3.5cm, minimum height=1cm},
				comment/.style={align=center, font=\Large	},
				methodbox/.style={rectangle, draw, dashed, align=center, font=\Large, minimum width=3.5cm},
				arrow/.style={->, thick}
				]

				\node[box] (uncertainty) {Calculate the uncertainty \\of each candidate design $x$ based on \\ the remaining prediction models $H_t$};
				\node[box,right=  of uncertainty] (expdesign) {Select Design $x$ for Experimentation};
				\node[box, right= of expdesign] (elim) {Model Elimination of $\mathcal{H}$};
 
				\node[box, right= of elim] (eval) {Model\\Output \& Evaluation};

				\draw[arrow] (uncertainty) -- (expdesign);
				\draw[arrow] (expdesign) -- (elim);
				\draw[arrow] (elim) -- (eval);

				\draw[arrow, bend right=20] (eval.north) to node[above] {Next Iteration} (uncertainty.north);

                \node[methodbox, below=1.2cm of expdesign] (exp_meth) {
					Sample $x$ based on its associated uncertainty \\ and observe its (noisy) outcome $c$.
				};
				
				\node[methodbox, below=1.2cm of elim] (elim_meth) {
					Shrinking the current \\ hypothesis class $H_t$ 
				};

				\node[methodbox, below=1cm of eval] (eval_comment) {Output the best model $h_t$ within\\ the current hypothesis class $H_t$};

				\draw[arrow] (exp_meth.north) -- (expdesign.south);
				\draw[arrow] (elim_meth.north) -- (elim.south);
				 
				\draw[arrow] (eval_comment.north) -- (eval.south);
				
			\end{tikzpicture} 
		}

			\caption{\normalsize Illustration of Sequential Experimental Design.}
			\label{fig:flowchart}
	\end{figure}
    
In Figure \ref{fig:flowchart}, at each iteration, when selecting design $x$, the high-level idea is to find a good metric to evaluate the uncertainty of each design $x$. Intuitively, among the remaining set of potential optimal prediction models $h\in H_t$, if the uncertainty (disagreement) of the prediction, $h(x)$, is small, then this design $x$ has less contribution to the model elimination, and thus is less likely to be selected.

An important question is which metric should be used to quantify uncertainty. For a given design $x$ and a class of candidate prediction models $H$, we compare the following three uncertainty metrics at $x$:
\begin{enumerate}[
    label=\textbf{Metric \arabic*.},
    leftmargin=3em,
    align=left
]
    \item $\max_{h_1,h_2\in H}\left\{\|h_1(x)-h_2(x)\|\right\}$ \hfill (decision-blind)
    \item $\max_{h_1,h_2\in H}\left|\lspo\!\big(h_1(x),h_2(x)\big)\right|$ \hfill (computationally intractable)
    \item $\max_{h_1,h_2\in H}\left\{
        \left\|\frac{h_1(x)}{\|h_1(x)\|}-\frac{h_2(x)}{\|h_2(x)\|}\right\|
    \right\}$ \hfill (directional)
\end{enumerate}

Although \textit{Metric 1} (based on the $\ell_2$ norm) is widely used in sequential data collection (e.g., \cite{sugiyama2009pool,beygelzimer2009importance,cai2016batch}), it may be poorly aligned with the SPO loss (see Example \ref{example:motivation}). In contrast, \textit{Metric 2} measures uncertainty directly in terms of the SPO loss, but it is typically computationally prohibitive: evaluating $\lspo$ requires solving the downstream optimization problem for each predicted cost vector, and quantifying uncertainty over a large set of candidate predictions can require a very large number of optimization-oracle calls. While prior work (e.g., \cite{liu2023active}) proposes margin-based surrogates to approximate such uncertainty via upper bounds, these approaches still rely on repeatedly solving the optimization problem and can remain computationally expensive. 

Motivated by these considerations, we propose \textit{Metric 3} and show that normalizing predictions yields an uncertainty measure that is both computationally efficient and more closely aligned with the SPO loss. Intuitively, this directional uncertainty discards the scale information in the predictions, which is appropriate because the SPO loss is scale-invariant (see Fact~\ref{lem:spo_scale_invariant}).

Before presenting our sequential data collection policy, we illustrate one application of the proposed sequential design in job assignment in Section \ref{sec:intro_example}.

\subsection{Application Example: Contextual Job Assignment}
\label{sec:intro_example}

We consider a job assignment problem in which $m$ jobs must be assigned to $n$ agents to minimize total cost. Let $w_{i,j}\in\{0,1\}$ indicate whether job $i$ is assigned to agent $j$, and let $c_{i,j}$ denote the random cost of assigning job $i$ to agent $j$, which depends on contextual information $x$. The problem can be formulated as
\begin{align}
    \min_{w \in \mathbb{R}^{m\times n}}& ~~ \bbE\!\left[\sum_{i=1}^m\sum_{j=1}^n c_{i,j} w_{i,j} \mid x\right] \label{obj:example} \\
    \text{s.t.}\quad
    & \sum_{i=1}^m w_{i,j} \le A_j, \quad \forall j=1,\dots,n, \label{con:1supply} \\
    & \sum_{j=1}^n w_{i,j} = B_i, \quad \forall i=1,\dots,m, \label{con:2demand} \\
    & w_{i,j} \in \{0,1\}, \quad \forall i=1,\dots,m,\  j=1,\dots,n. \label{con:3binary}
\end{align}
Constraint~\eqref{con:1supply} captures agent capacity limits, while \eqref{con:2demand} ensures that each job’s demand is satisfied.

A motivating specific problem is LLM task allocation. Different LLMs exhibit heterogeneous performance across tasks and incur different computational costs. When computational resources are limited, it is desirable to assign more complex tasks to high-capacity models and simpler tasks to lightweight models. In this setting, $m$ denotes the number of tasks, $n$ the number of available LLMs, $x$ the task or document type, and $c_{i,j}$ the (negative) performance score of model $j$ on task $i$. The assignment decision $w$ is obtained by solving Problem~\eqref{obj:example}, where $A_j$ represents the processing capacity of model $j$ and $B_i=1$ for all $i$.

In the above example, building the prediction model requires a training set with labeled samples, and obtaining this training set can be costly (e.g., human evaluation of the performance of LLMs on processed tasks). Hence, it is important to apply an efficient data collection algorithm.

\subsection{Related Work}\label{sec:literature}
To position our paper, we summarize some related work in decision-focused learning and sequential experimental design.

\textit{Decision-focused learning and contextual optimization}. Contextual optimization studies how to use contextual features to estimate uncertain parameters in optimization. In the predict-then-optimize regime, where point predictions of the uncertain parameters are used directly, higher predictive accuracy does not necessarily yield a smaller decision loss. Decision-focused learning (DFL), therefore, incorporates the downstream decision-making problem into the training of prediction models. Given the large and growing literature, we cite a representative subset of works most relevant to our linear optimization setting and refer readers to surveys such as \citet{mandi2024decision,sadana2025survey} for additional references and broader perspectives. 
    \cite{bertsimas2020predictive} uses local prediction models to estimate the downstream decision loss. \cite{zhu2022joint} considers the joint estimation and optimization from a robust perspective. \cite{er2025decision} further addresses the necessary and sufficient conditions for the existence of the decision-unbiased predictions for linear optimization.
    A stream of research by \cite{donti2017task} \cite{amos2017optnet} \cite{mandi2020smart} applied implicit differentiation to perform gradient updates on the prediction model parameters. Another stream of research is to use an analytical surrogate loss function that can be differentiated through. \citet{elmachtoub2022smart} and \citet{huang2024decision} propose different surrogate loss functions and establish their consistency under various conditions. Other related DFL methods include \cite{chung2022decision}, \cite{feng2022developing,feng2025contextual}, and \cite{cristian2025efficient}. In the context of data collection for predict-then-optimize, the sample complexity and generalization error bounds developed in \citet{ho2020risk,el2022generalization, hu2022fast} can be used to derive stopping-time guarantees for data collection. Although the motivation for DFL is clear, theoretically establishing its statistical benefit is nontrivial; see, for example, \cite{hu2022fast}. A stream of papers \cite{elmachtoub2023estimate,elmachtoub2025dissecting,lan2025bias} studies this benefit relative to the two-stage approach under model misspecification.

The above works on decision-focused learning either assume that the data are given a priori or that they are collected in an i.i.d. manner. In contrast, our paper is the first to study the SPO loss in a sequential experimental design setting with noisy observations, where the data collected are inherently non-i.i.d. Moreover, our paper is the first to demonstrate a statistical benefit of DFL over decision-blind design under specific structural assumptions on the noise distributions.

\textit{Sequential Experimental Design}. Experimental design has a long history in statistics and operations research. Classical work develops optimality criteria (e.g., A-, D-, and E-optimality) and studies structured designs such as Box--Behnken; see, e.g., \cite{kiefer1959optimum, box1960some}. These ideas have broad applications in physical and engineering experiments. More recently, sequential and adaptive procedures select the next experiment based on the data collected so far, resulting in stagewise designs that improve sample efficiency; see, for example, \cite{anderer2022adaptive,che2024optimization, zhao2025pigeonhole, li2025strong}. In regression settings, a closely related line of work—often discussed under the name \emph{pool-based selective sampling}—studies policies that sequentially select which inputs to label to improve a predictor efficiently \citep{castro2005faster,sugiyama2009pool,cai2016batch,beygelzimer2009importance,zhu2025power}. These methods can be viewed as sequential design rules in which the \emph{information value} of a candidate point is scored, and labels are acquired accordingly. Within this stream, \citet{beygelzimer2009importance} propose an importance-weighted algorithm (IWAL) that generalizes classification algorithms to regressions. IWAL measures, for a candidate feature vector, the maximum difference in loss over all possible labels and uses that quantity to set a sampling probability. While elegant, computing these scores becomes challenging when the loss departs from squared error, limiting direct applicability in decision-oriented contexts.

In the decision-focused setting, \cite{bennouna2025data} consider the offline data collection problem in a noiseless setting. To incorporate the SPO loss into the sequential data collection process, \citet{liu2023active} develop a margin-based sequential design using the notion of \emph{distance to degeneracy}. They show theoretical and empirical gains by utilizing the distances to the margins. However, their approach assumes access to margin information, which is computationally demanding when the feasible region is complex, and it treats near-degenerate points uniformly without distinguishing their relative information values.

In contrast, our sequential experimental design is computationally tractable by targeting \emph{directional uncertainty}. Conceptually, our work bridges classical sequential experimental design and decision-focused learning by tailoring information gathering to task structure.

\section{Algorithm and Performance Guarantees}\label{sec:alg}

In this section, we motivate the proposed uncertainty metric and present the corresponding algorithm. We then establish theoretical performance guarantees under mild assumptions.

\subsection{Algorithm Motivation}
\label{sec:alg_motivation}

In sequential experimental design problems, a common approach is to use prediction uncertainty to determine the next design sequentially. At each step, we aim to observe the outcome of an experiment whose predictive distribution has high uncertainty, rather than an experiment that the model has already learned well. A typical method for quantifying prediction uncertainty is to use the $\ell_2$ distance among the predictions produced by all hypotheses. However, this $\ell_2$ based criterion is decision-blind, and hence, when considering the SPO loss, it may lead to inefficiency in the predict-then-optimize setting.
To illustrate why a standard distance metric may fail to align with the SPO loss, let us consider the following example in Figure \ref{fig:predicted_cost_vector_with corresponding_optimizer}.

\begin{figure}[htbp]
  \centering
  \begin{minipage}[t]{0.40\textwidth}
    \centering
    \includegraphics[width=\linewidth]{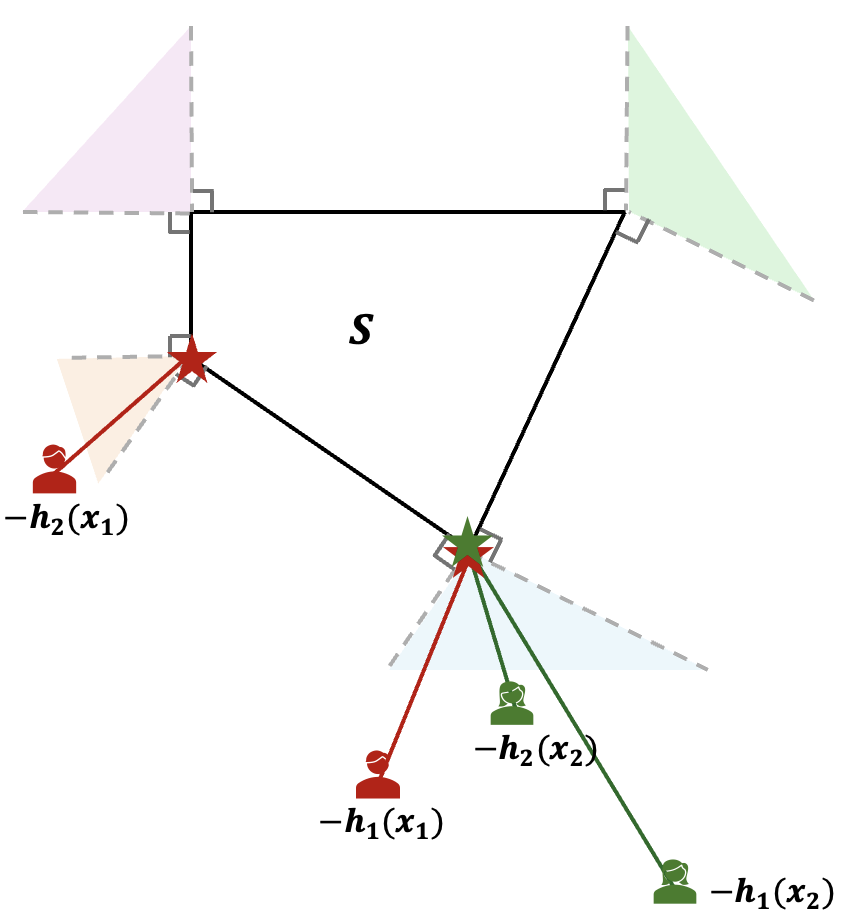}
    \caption{Cost Vector Prediction with Corresponding Optimizer}
    \label{fig:predicted_cost_vector_with corresponding_optimizer}
  \end{minipage}
  \hfill                 
  \begin{minipage}[t]{0.48\textwidth}
    \centering 
    \includegraphics[width=\linewidth]{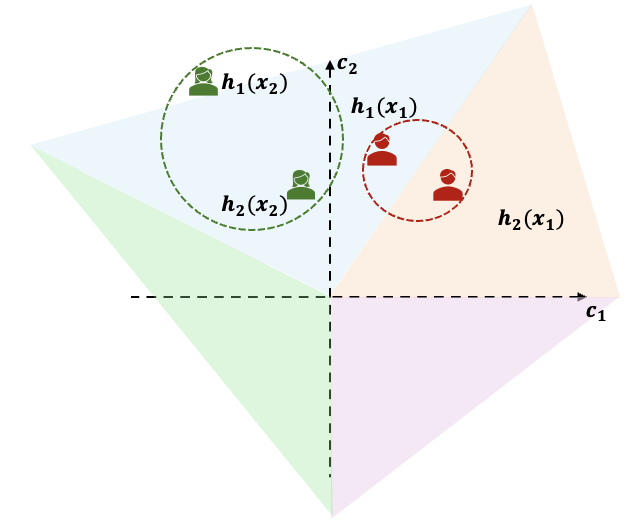}
    \caption{Cost Vector Prediction with $\ell_2$ Distance} 
    \label{fig:predicted_cost_vector_with_ell-2_distance}
  \end{minipage}
\end{figure}

\begin{example}\label{example:motivation}
Consider a two-dimensional setting where the current hypothesis class contains two predictors $\mathcal{H} = \{h_1,h_2\}$. Suppose there are two designs $\mathcal{X} = \{x_1,x_2\}$, denoted by red ($x_1$) and green ($x_2$) dots respectively in Figure \ref{fig:predicted_cost_vector_with corresponding_optimizer}.  Let $h_1(x_1),h_1(x_2),h_2(x_1),h_2(x_2)$ denote the corresponding predictions. Figure \ref{fig:predicted_cost_vector_with corresponding_optimizer} shows the feasible region $\mathcal{S}$, predicted cost vectors, and their corresponding optimizers. For ease of illustration, we represent each cost vector by its negative direction. Because of the linearity of the objective function, the optimal solution corresponds to the vertex of the cone containing the predicted cost vector. To evaluate the uncertainty of the two possible designs, Figure \ref{fig:predicted_cost_vector_with_ell-2_distance} shows the location of prediction values and the area of predictions that corresponds to the optimality of each vertex. In Figure \ref{fig:predicted_cost_vector_with_ell-2_distance}, although the size of the green circle is larger, indicating that $\mathcal{H}$ exhibits greater $\ell_2$ uncertainty on $x_2$ than on $x_1$, the predictions at $x_2$ yield the same optimizer. Thus, the uncertainty of SPO loss on $x_2$ is zero. On the other hand, the predictions of $h_1(x_1)$ result in a different optimizer from $h_2(x_1)$. It implies that the uncertainty of SPO loss on $x_1$ is nonzero. Hence, from the perspective of the SPO loss, $x_1$ has a larger uncertainty, even though $x_2$ exhibits a larger $\ell_2$ uncertainty. \hfill \Halmos
\end{example}

Example \ref{example:motivation} shows that there may exist a misalignment between decision-focused uncertainty and $\ell_2$ distance. This misalignment is due to a scale-invariant property of SPO loss, as pointed out by \citet{elmachtoub2022smart} and stated in the following fact.

\begin{fact}[Fact 1: The SPO loss is scale invariant.]
    \label{lem:spo_scale_invariant}
    For all $\hat{c}$ and $\alpha>0$, it holds that $w^*(\hat{c}) = w^*(\alpha\hat{c})$. Consequently, we have $\lspo(\alpha \hat{c},c) = \lspo(\hat{c},c)$.
\end{fact}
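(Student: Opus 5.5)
The plan is to show directly that, for $\alpha>0$, the objectives $\hat{c}^\top w$ and $(\alpha\hat{c})^\top w$ induce the same ordering on $S$. The equality of the SPO losses then follows at once from the definition of $\lspo$.

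\emph{Step 1 (same optimal sets).} For any $w,w'\in S$ we have $(\alpha\hat{c})^\top w \le (\alpha\hat{c})^\top w'$ if and only if $\hat{c}^\top w\le \hat{c}^\top w'$, since multiplying an inequality by $\alpha>0$ or by $1/\alpha>0$ preserves it. Hence
\[
\arg\min_{w\in S}(\alpha\hat{c})^\top w=\arg\min_{w\in S}\hat{c}^\top w .
\]
Both sets are nonempty because $S$ is a bounded polyhedron, hence compact, and the objective is continuous.

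\emph{Step 2 (tie-breaking).} This is the only point needing care, and it is mild. The oracle $w^*$ applies a fixed, consistent tie-breaking rule, for example a fixed ordering of the extreme points of $S$. That rule depends only on the set of optimal solutions and not on the cost vector that produced it. Since Step 1 shows the two optimal sets coincide, the rule returns the same point, so $w^*(\alpha\hat{c})=w^*(\hat{c})$. I would state explicitly that this is how "consistent tie-breaking" is to be read. If ties were broken by some rule depending on $\hat{c}$ itself, scale invariance could fail, so this interpretation is essential.

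\emph{Step 3 (loss equality).} Substitute into the definition of the SPO loss:
\[
\lspo(\alpha\hat{c},c)=c^\top w^*(\alpha\hat{c})-c^\top w^*(c)=c^\top w^*(\hat{c})-c^\top w^*(c)=\lspo(\hat{c},c).
\]
The term $c^\top w^*(c)$ involves only the true cost $c$ and is untouched by the scaling. No real obstacle is expected beyond making the tie-breaking interpretation in Step 2 explicit.
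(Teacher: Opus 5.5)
Your proposal is correct and follows essentially the same route as the paper: the minimizer sets of $\hat{c}^\top w$ and $(\alpha\hat{c})^\top w$ over $S$ coincide because $\alpha>0$, the consistent tie-breaking rule then gives $w^*(\alpha\hat{c})=w^*(\hat{c})$, and substituting into the definition of $\lspo$ finishes the argument. Your remarks that the tie-breaking rule must depend only on the optimal set, and that this set is nonempty by compactness of $S$, make precise what the paper states informally as ``we can choose the oracle consistently.''
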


Fact \ref{lem:spo_scale_invariant} shows that for a fixed true cost vector $c$, $\lspo(\cdot,c)$ is scale invariant in the prediction $\hat{c}$.  
This motivates us to utilize directional information to quantify predictive uncertainty. Accordingly, we propose a directional uncertainty metric that better aligns with the SPO loss while remaining tractable; see Figure \ref{fig:angle_uncertainty} (vs. the standard $\ell_2$ metric in Figure \ref{fig:l2_uncertainty}).

\begin{figure}[htbp]
  \centering
   
  \begin{minipage}[t]{0.4\textwidth}
    \centering
    \includegraphics[width=\linewidth]{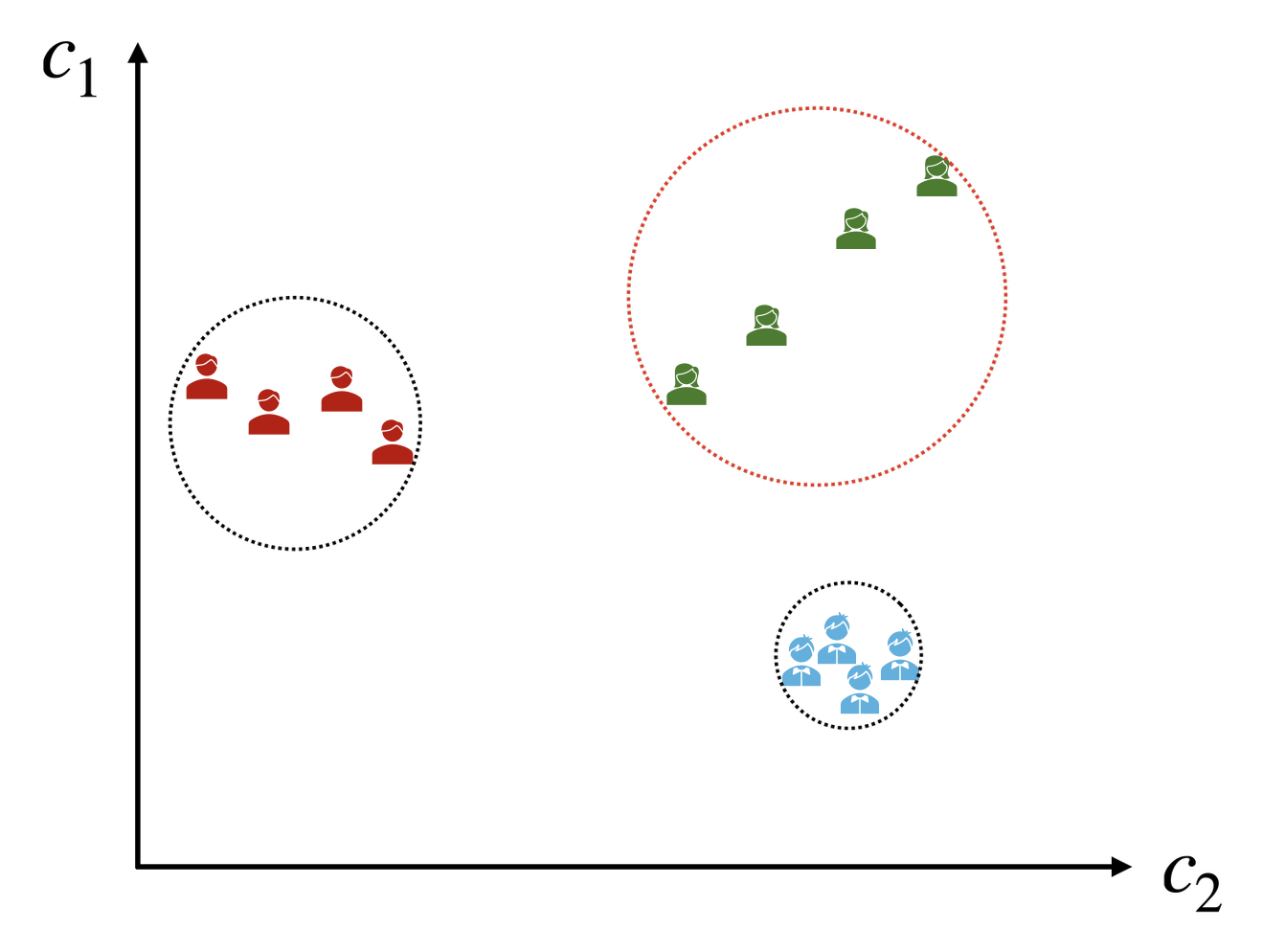}
    \caption{$\ell_2$ Based Uncertainty}
    \label{fig:l2_uncertainty}
  \end{minipage}
  \hfill                 
  \begin{minipage}[t]{0.4\textwidth}
    \centering 
    \includegraphics[width=\linewidth]{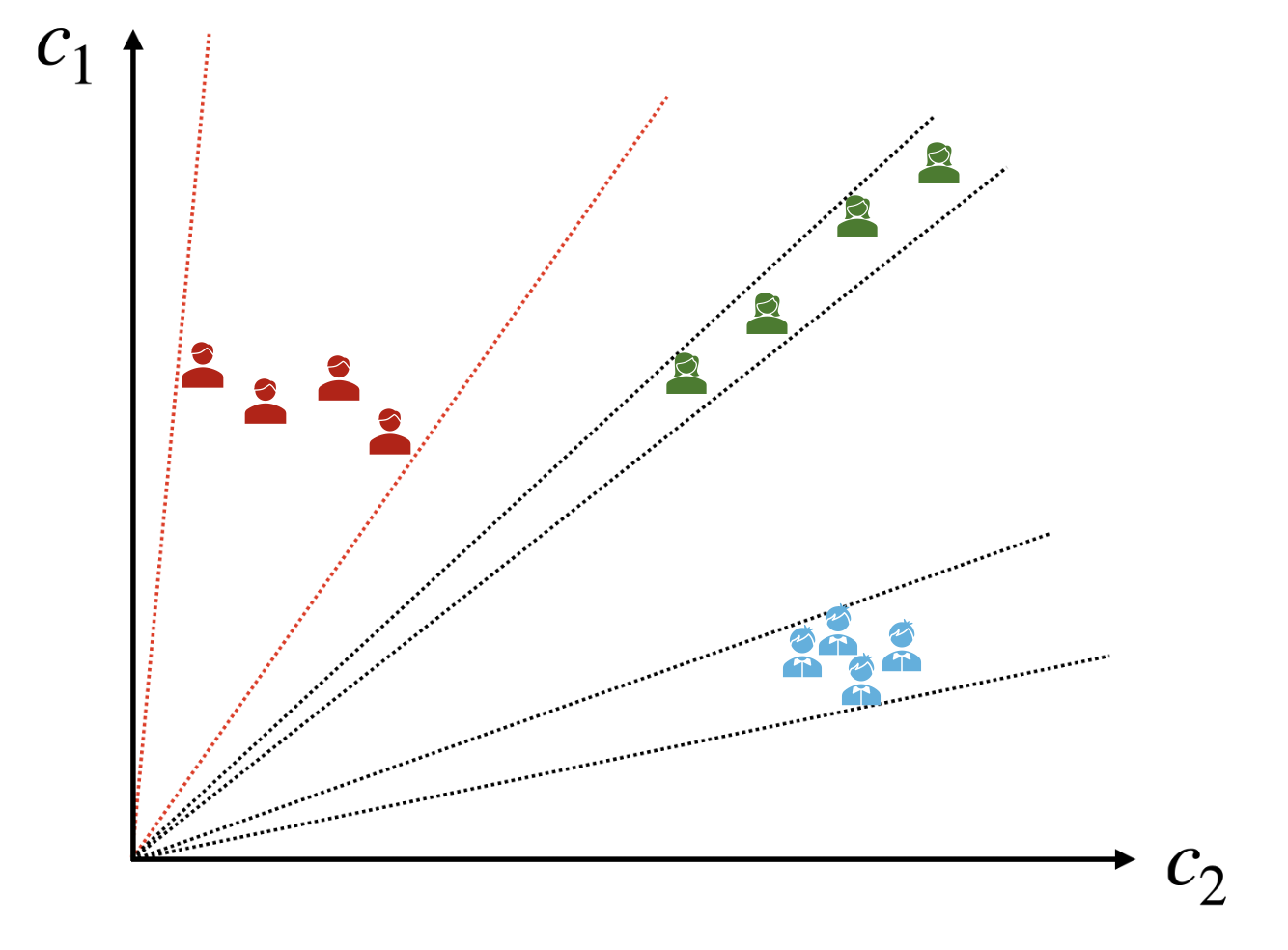}
    \caption{Direction-Based Uncertainty} 
    \label{fig:angle_uncertainty}
  \end{minipage}
\end{figure}

\vskip -1cm
In Figures \ref{fig:l2_uncertainty} and \ref{fig:angle_uncertainty}, there are three possible designs (represented by red, green, and blue colors of dots) and four candidate prediction models, $h_1,...,h_4 \in \mathcal{H}$. We assume the target cost vector $c \in \mathbb{R}^2$ has two dimensions, and each dot indicates a model's predicted cost vector for a given design. The circle stands for the maximum $\ell_2$ distance among the predictions for these three designs. Based on the sizes of the circles in Figure \ref{fig:l2_uncertainty}, the green design has the largest uncertainty and thus should be most likely to be selected by a standard approach. However, based on our new directional uncertainty in Figure \ref{fig:angle_uncertainty}, the red design has the largest angle difference, and thus is most likely to be selected, correctly reflecting the decision-focused nature of the problem.

\subsection{Importance Weighted Sequential Experimental Design}\label{sec:alg_weighted}

\begin{algorithm}[tp]
	\caption{Importance-Weighted Sequential Design Based on Directional Uncertainty (IWSD-DU)}\label{alg:general-iwal}
	\begin{algorithmic}[1]
		\STATE \textbf{Input:}   
		Initial slackness $r_0$

		\STATE Set $W_0 \gets \emptyset$, $n_0 \gets 0$, $H_0 \gets \mathcal{H}$

		\FOR{$t =$  1,2,$\cdots$, $T$}

		\STATE For each potential design $\mathrm{X}_{(i)} \in \mathcal{X}$, calculate $p_{t,i} =  \max_{h_1, h_2 \in H_t}\left\{\left\|\frac{h_1(\mathrm{X}_{(i)})}{\|h_1(\mathrm{X}_{(i)})\|} - \frac{h_2(\mathrm{X}_{(i)})}{\|h_2(\mathrm{X}_{(i)})\|}\right\|\right\}$, $\pi_{t,i} \gets \frac{p_{t,i}}{\sum_{j=1}^m p_{t,j}}$ 
        
        \IF {$\sum_{j=1}^m p_{t,j}>0$}
        \STATE Sample $x_t$ according to the probability $\pi_t = (\pi_{t,1},\dots,\pi_{t,m})$. Let $Q_{t,i}\in\{0,1\}$ denotes the indicator for the realization of $x_t$
 
        \ELSE 
        \STATE stop the algorithm and return $h_T$
        \ENDIF

		\STATE Conduct experiment and obtain a label $c_t$ associated with $x_t$ 
		\STATE Update $W_t \gets W_{t-1} \cup \{(x_t, c_t, \pi_t,Q_t)\}$, $n_t \gets n_{t-1} + 1$
		
		\STATE Let empirical reweighted loss $\hat{\ell}_{}^t(h) \gets \frac{1}{t} \sum_{(x_t, c_t, \pi_t,Q_t) \in W_t} \sum_{i=1}^m \frac{\mu(x_i)}{\pi_{t,i}}Q_{t,i} \lspo(h(x_i), c_i)$
		
		\STATE Update $h_t \gets \arg\min_{h \in H_{t-1}} \hat{\ell}_{}^t (h)$ and $\hat{\ell}_{}^{t, *} \gets \min_{h \in H_{t-1}} \hat{\ell}_{}^t (h)$.

		\STATE Update the confidence set of the predictor $H_t$ by $H_t \gets \{h\in H_{t-1}: \hat{\ell}_{}^{t}(h) \le \hat{\ell}_{}^{t,*} + r_{t} \}$

        \STATE Update $r_{t+1} \gets r_t \sqrt{\frac{t}{t+1} \frac{\log(2(t+1))}{\log (2t)}}$

		\ENDFOR
		 
		\STATE \textbf{Return} $h_T$
	\end{algorithmic}
\end{algorithm}

Our importance-weighted sequential design algorithm based on directional uncertainty (IWSD-DU) is stated in Algorithm \ref{alg:general-iwal}. The algorithm operates in a sequential design environment where, at the beginning of each iteration $t$, we can actively select a sample $x_t$ from the pool ${\mathcal{X}}$.
The algorithm maintains a confidence set of predictors $H_t \subseteq \mathcal{H}$. 
Initially, the confidence set of the predictor is set as the entire hypothesis space $\mathcal{H}$. To obtain the sample weights in each period, we calculate the maximum $\ell_2$ norm of prediction difference for the predictors within the current confidence set $\mathcal{H}_t$. In this step, calculating the labeling probability is computationally easy, especially when the confidence set $\mathcal{H}_t$ contains a finite number of predictors.  
To determine which design point to query, we sample $x_t = \mathrm{X}_{(i)}$ with probability $\pi_{t,i}$, where $\pi_{t,i} = \frac{p_{t,i}}{\sum_{j=1}^m p_{t,j}}$ and 
$p_{t,i} = \max_{h_1, h_2 \in H_t} \left\{ \left\| \frac{h_1(x_i)}{\|h_1(x_i)\|} - \frac{h_2(x_i)}{\|h_2(x_i)\|} \right\| \right\}$. 
Intuitively, $p_{t,i}$ measures the maximum directional disagreement in the predicted cost vector at $x_i$ among all hypotheses in the current hypothesis class $H_t$. A larger value of $p_{t,i}$ corresponds to greater uncertainty at $x_i$, resulting in a higher sampling probability~$\pi_{t,i}$.

For ease of notation, let $Q_{t,i}\in\{0,1\}$ denote the indicator random variable that equals one if the outcome of $x_t$ corresponds to feature $i$, for all $i\in[m]$, that is,  $Q_{t,i}\gets\mathbb{I}\{x_t = \mathrm{X}_{(i)}\}$. After designing an experiment $x_t$, we observe an outcome $c_t$ drawn from the conditional distribution of $c$ given $x_t$ and add the sample $(x_t, c_t)$ and its corresponding weight $\pi_{t,i}$ into the existing training set $W_{t-1}$.
After each iteration, we update the prediction model by minimizing the empirical re-weighted loss, which is defined as

\begin{align*}
	\ell^{\mathtt{rew}}(h; (x_t, c_t, \pi_t)) :=  \frac{\mu(x_i)}{\pi_{t,i}}\lspo(h(x_i), c_i), ~&\text{if design $x_i$ is chosen.}
\end{align*}

In Algorithm \ref{alg:general-iwal}, the random variables at iteration $t$ are $(x_t,c_t,\pi_t,Q_t)$.
  
With slight abuse of notation, in Algorithm \ref{alg:general-iwal}, the re-weighted loss function at iteration $t$ can be rewritten as

\begin{align*}
	\ell^{\mathtt{rew}}(h; (x_t, c_t, \pi_t)) := \sum_{i=1}^{m} \frac{\mu(x_i)}{\pi_{t,i}} Q_{t,i}\lspo(h(x_i), c_i).
\end{align*}

In Algorithm \ref{alg:general-iwal}, the confidence set for the optimal prediction models is constructed by allowing a slackness $r_t$ of the empirical reweighted loss, compared to the current minimum empirical loss. Specifically, $H_t \gets \{h\in H_{t-1}: \hat{\ell}_{}^{t}(h) \le \hat{\ell}_{}^{t,*} + r_{t} \}$. Because this confidence set is nested, the directional uncertainty $p_{t,i}$ associated with each design $x_i$ is nonincreasing in $t$. As the number of observations $t$ grows sufficiently large, the directional uncertainties of all designs approach zero, which induces a stopping time for the sequential experimental design.

\subsection{Preliminaries for Theoretical Guarantees}\label{sec:Lipsschitz}

In this subsection, we discuss the preliminaries for our algorithm's theoretical guarantees.  For simplicity, we use random variable $z_t\in \mathcal{Z}:= \mathcal{X}\times \mathcal{C}\times (0,1]^{m}\times \{0,1\}^{m}$ to denote the tuple of random variables $z_t := (x_t,c_t,\pi_t, Q_t)$. Thus, $z_t$ depends on $z_1, ..., z_{t-1}$ and thus is non i.i.d.. We define $\mathcal{F}_{t-1}$ as the $\sigma$-field of all random variables until the end of iteration $t-1$, i.e., $z_1, ..., z_{t-1}$. 
Given the sampling probability $\pi_t = (\pi_{t,1},\dots,\pi_{t,m})$, $Q_t=(Q_{t,1},\dots, Q_{t,m})$ is a  random vector that is independent of other randomness and $\bbE[Q_{t,i}] = \pi_{t,i}$, the expectation of the re-weighted loss is 
\begin{align*}
	\bbE[\ell^{\mathtt{rew}}(h;z_t)] &= \sum_{i=1}^{m} \bbE[ \frac{\mu(x_i)}{\pi_{t,i}} Q_{t,i} \lspo(h(x_i), c_i)] 
    \\ &= \sum_{i=1}^{m} \frac{\mu(x_i)}{\pi_{t,i}} \bbE[  Q_{t,i} \lspo(h(x_i), c_i)]    \\ &= \sum_{i=1}^{m} \frac{\mu(x_i)}{\pi_{t,i}} \bbE[  Q_{t,i}]\bbE[\lspo(h(x_i), c_i)] \\
    &=  \bbE[\lspo(h(x), c)] = \rspo(h).
\end{align*}

This implies that the expectation of the re-weighted loss is an unbiased estimator for the risk of $h$. Thus, under certain assumptions (Assumption \ref{assumption:tau_lower_bound}), the empirical re-weighted loss is expected to converge to the risk, and its minimizer $h_t$ over the confidence set $\mathcal{H}_{t-1}$ is expected to converge to the true SPO risk minimizer $h^*$.

To establish convergence, as shown in the proof of Theorem~\ref{thm:prediction}, the reweighted loss 
$\ell^{\mathtt{rew}}(h; (x_t, c_t, \pi_t))$ must admit a uniform upper bound at each iteration. Intuitively, this requirement ensures that when the sampling probability $\pi_t$ is small, the resulting excess SPO loss is proportionally small, which induces a ``Lipschitz-like'' behavior. Since the SPO loss is not Lipschitz continuous in general, we impose an additional assumption to guarantee this property.
A sufficient condition for this ``Lipschitz-like'' property is a lower-bound condition on the angle between the predicted cost vector and the degenerate cost vectors. The formal statement of this assumption is provided below, together with the definition of the distance to degeneracy.

\begin{definition}[Distance to Degeneracy, \cite{el2022generalization}]
    \label{def:distance_to_deg}
    For a polyhedron $\mathcal{S}$, the set of degenerate cost vectors is defined as $\mathcal{C}^o:=\{\hat{c}\in \mathbb{R}^d: \min_{w\in \mathcal{S}}\hat{c}^\top w \text{ has multiple solutions} \}$. The distance to degeneracy of a prediction $\hat{c}$ is defined as $\nu_\mathcal{S}(\hat{c}):=\inf_{c\in \mathcal{C}^o}\{\|c-\hat{c}\|\}$. 
\end{definition}

The distance to degeneracy quantifies the largest perturbation of the prediction $\hat c$ that still preserves the optimality of the current decision $w^*(\hat c)$. If the distance to degeneracy $\nu_\mathcal{S}(c)$ for a cost vector $c$ is large, then adding a small noise or slightly perturbing the prediction will not change the optimizer of the downstream problem. On the other hand,  if $\nu_\mathcal{S}(c)$ is relatively small, then perturbing the vector might lead to different solutions in the downstream problem. Hence, for a prediction $\hat{c}$, if $\nu_\mathcal{S}(\hat{c})$ is large, then we are more confident about the decision loss incurred by $\hat{c}$.
This distance to degeneracy serves as a key concept in showing the ``Lipschitz'' property of the SPO loss, as shown in Assumption \ref{assumption:tau_lower_bound}.

\begin{assumption}[Directional Margin Condition]
    \label{assumption:tau_lower_bound}
    There exists a constant $\eta >0$ such that for all $h \in\mathcal{H}, x\in \mathcal{X}$, it holds that $ \frac{\nu_\mathcal{S}(h(x))}{\|h(x)\|} \geq \eta$.
\end{assumption}

Assumption \ref{assumption:tau_lower_bound} is a very mild assumption. Actually, given the set of possible designs $\mathcal{X}$, all finite hypothesis classes $\mathcal{H}$ can satisfy Assumption \ref{assumption:tau_lower_bound} easily with a parameter $\eta>0$ by a simple equivalent transformation, as detailed in Section \ref{sec:justification_angle} later.

\begin{remark}[Geometric Interpretation of Directional Margin Condition.]
    Intuitively, Assumption \ref{assumption:tau_lower_bound} ensures a lower bound for $\frac{\nu_\mathcal{S}(h(x))}{\|h(x)\|}$, which is equivalent to an angular condition, as shown in Figure \ref{fig:angle_margin}. To illustrate this angle condition, we first observe that the label space $\bbR^d$ can be partitioned into multiple normal cones. Specifically, for a bounded polyhedron $\mathcal{S}$, there exists a convex hull representation $\mathcal{S} = \text{conv}\{v_1,\dots,v_k\}$, where $v_1,\dots,v_k \in \mathbb{R}^d$ denote the vertices of $\mathcal{S}$. For each vertex $v_j$, define a cone $\mathcal{K}_j = \{\hat{c}\in \mathbb{R}^d: v_j = \arg \min_{w\in \mathcal{S}} \hat{c}^\top w\}$ to be the corresponding normal cone of the cost vector $\hat c$ whose optimizer is $v_j$. (We can assume for each $c$ there exists only one negative normal cone that contains it by applying the tie-breaking rule we previously assumed). Using this representation, we have $\bbR^d = \cup_{j=1}^k \mathcal{K}_j  $. 
  
Consequently, for all  $h \in\mathcal{H}, x\in \mathcal{X}$, each prediction $h(x)$ lies in a corresponding negative normal cone $\mathcal{K}_j$. The boundary of the cone is the degenerate cost vector defined in Definition \ref{def:distance_to_deg}. As shown in Figure \ref{fig:angle_margin}, by some simple trigonometric functions, $ \frac{\nu_\mathcal{S}(h(x))}{\|h(x)\|} \geq \eta$ is equivalent to 
$$\left\|\frac{h(x)}{\|h(x)\|}-\frac{c^o}{\|c^o\|}\right\|\ge 2 \sin\left(\frac{\theta}{2}\right) =  2 \sin\left(\frac{\arcsin(\eta)}{2}\right), \quad \text
{ for all degenerate cost vector } c^o \in \mathcal{K}_j.$$ 

\begin{figure}[ht]
    \centering
\includegraphics[width=0.6\textwidth]{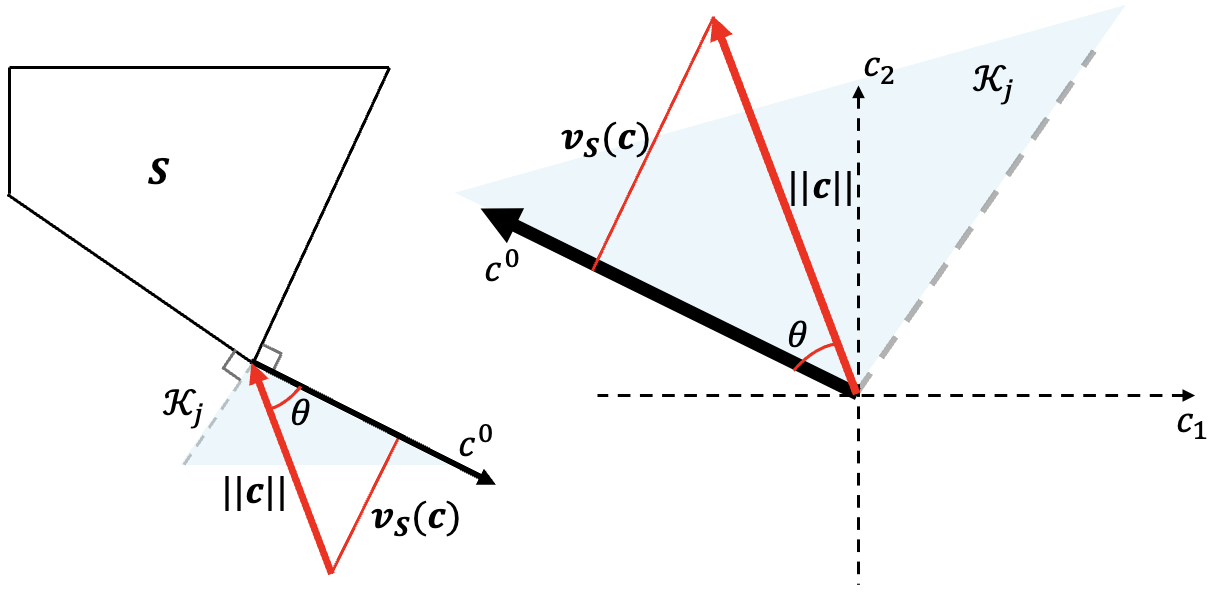}
\caption{Directional Margin Condition}
\label{fig:angle_margin}
\end{figure}
\vskip -0.5cm
This implies that the lower bound of $ \frac{\nu_\mathcal{S}(h(x))}{\|h(x)\|} $ is equivalent to the lower bound of the angle between $h(x)$ and its closest boundary of the cone that $h(x)$ lies in.\hfill \Halmos
\end{remark}

With Assumption \ref{assumption:tau_lower_bound}, Lemma \ref{lem:SPO_Lipschitz} shows that the SPO loss has a``Lipchitz" like property.
\begin{lemma}[``Lipschitz-like" property of $\lspo$ under directional margin condition]
    \label{lem:SPO_Lipschitz}
    Under Assumption \ref{assumption:tau_lower_bound}, for any $h_1, h_2 \in \mathcal{H}$ and any $(x,c)\in \mathcal{X}\times \mathcal{
    C
    }$, there exists a constant $L = \frac{\Delta(\mathcal{S})\rho(\mathcal{C})}{ 2\eta}$ such that $\lspo(.,c)$ is L-Lipchitz with respect to the directional difference, namely,
    $$
    |\lspo(h_1(x),c) - \lspo(h_2(x),c)| \leq L \left\|\frac{h_1(x)}{\|h_1(x)\|}-\frac{h_2(x)}{\|h_2(x)\|}\right\|.
    $$
\end{lemma}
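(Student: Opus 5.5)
Write $u_k := h_k(x)/\|h_k(x)\|$ for $k=1,2$. By Fact~\ref{lem:spo_scale_invariant}, $w^*(h_k(x)) = w^*(u_k)$. Hence
\[
\lspo(h_1(x),c)-\lspo(h_2(x),c) = c^\top\bigl(w^*(u_1)-w^*(u_2)\bigr).
\]
The argument splits into two cases according to whether the two optimizers coincide.

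\emph{Same optimizer.} If $w^*(u_1)=w^*(u_2)$, the left-hand side is zero and the inequality holds trivially.

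\emph{Different optimizers.} Suppose $w^*(u_1)\neq w^*(u_2)$. By Cauchy--Schwarz and the definitions of $\rho(\mathcal{C})$ and $\Delta(\mathcal{S})$,
\[
\bigl|c^\top\bigl(w^*(u_1)-w^*(u_2)\bigr)\bigr| \le \rho(\mathcal{C})\,\Delta(\mathcal{S}).
\]
It therefore suffices to show that $\|u_1-u_2\|\ge 2\eta$ whenever the optimizers differ.

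\emph{Key step: a lower bound on $\|u_1-u_2\|$.} Distance to degeneracy is positively homogeneous: $\mathcal{C}^o$ is a cone, so $\nu_\mathcal{S}(\alpha\hat c)=\alpha\,\nu_\mathcal{S}(\hat c)$ for $\alpha>0$. Assumption~\ref{assumption:tau_lower_bound} then gives $\nu_\mathcal{S}(u_k)\ge\eta$ for $k=1,2$. Now consider the segment $u(s)=(1-s)u_1+su_2$ for $s\in[0,1]$. The normal cones $\mathcal{K}_j$ partition $\bbR^d$, are convex up to their boundaries, and their boundaries lie in $\mathcal{C}^o$. Since $u_1$ and $u_2$ lie in different cones, the segment must cross $\mathcal{C}^o$ at some point $u(s^*)$.

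\emph{Main obstacle: justifying the crossing.} This is the step that needs care, because of the tie-breaking rule: the segment could in principle switch from one cone to another without passing through a degenerate vector. The way around this is to note that the set of cost vectors with a unique optimizer at a given vertex is open. So if the segment never met $\mathcal{C}^o$, the interval $[0,1]$ would be covered by disjoint relatively open sets, one for each vertex. Connectedness then forces a single vertex throughout, contradicting $w^*(u_1)\neq w^*(u_2)$.

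\emph{Concluding the bound.} With $u(s^*)\in\mathcal{C}^o$ in hand,
\[
\eta\le \nu_\mathcal{S}(u_1)\le \|u_1-u(s^*)\| = s^*\|u_1-u_2\|
\]
and likewise $\eta\le (1-s^*)\|u_1-u_2\|$. Adding the two inequalities gives $\|u_1-u_2\|\ge 2\eta$. Consequently
\[
|\lspo(h_1(x),c)-\lspo(h_2(x),c)| \le \rho(\mathcal{C})\Delta(\mathcal{S}) \le \frac{\Delta(\mathcal{S})\rho(\mathcal{C})}{2\eta}\,\|u_1-u_2\| = L\,\|u_1-u_2\|,
\]
which is the claimed bound.
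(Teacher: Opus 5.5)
Your proof is correct and reaches the same constant $L$, but by a genuinely different route from the paper's. The paper imports the growth inequality $\hat c^\top(w-w^*(\hat c))\ge \frac{\nu_{\mathcal S}(\hat c)}{\Delta(\mathcal S)}\|w-w^*(\hat c)\|^2$ from Theorem 8 of El Balghiti et al. It applies this at $h_1(x)$ and $h_2(x)$, taking the other prediction's optimizer as $w$, and normalizes using the directional margin condition. Adding the two inequalities and applying Cauchy--Schwarz gives a Lipschitz bound on the decision map itself, $\|w^*(u_1)-w^*(u_2)\|\le \frac{\Delta(\mathcal S)}{2\eta}\|u_1-u_2\|$, which is then multiplied by $\|c\|\le\rho(\mathcal C)$. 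You instead prove a dichotomy from first principles. Either the optimizers coincide and the difference is $0$, or the difference is at most $\rho(\mathcal C)\Delta(\mathcal S)$ while the directions are at least $2\eta$ apart. You establish the separation from homogeneity of $\nu_{\mathcal S}$, openness of each unique-optimizer region, and connectedness of the segment $[u_1,u_2]$. The two routes are closer than they look. On a polytope the growth inequality in fact holds in the linear form $\hat c^\top(w-w^*(\hat c))\ge \nu_{\mathcal S}(\hat c)\|w-w^*(\hat c)\|$; the quadratic form follows from $\|w-w^*(\hat c)\|\le\Delta(\mathcal S)$. Summing the linear form at $u_1$ and $u_2$ gives exactly your bound $\|u_1-u_2\|\ge 2\eta$ whenever the optimizers differ. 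Your argument buys self-containedness and transparency. It needs no external theorem, and it handles the tie-breaking rule explicitly: since $\nu_{\mathcal S}(u_k)\ge\eta>0$, each $u_k$ has a genuinely unique optimizer. It also makes plain that the ``Lipschitz'' property is really a separation phenomenon of the piecewise-constant optimizer map. The paper's argument buys brevity and portability. The add-and-Cauchy--Schwarz template works for any feasible region satisfying a quadratic growth inequality at the optimizer, e.g.\ strongly convex sets, where $w^*$ varies continuously and no dichotomy exists. Your connectedness argument, by contrast, is tied to $\mathcal S$ being a polytope.
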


The proof of Lemma \ref{lem:SPO_Lipschitz}, along with the remaining proofs, is provided in the appendix.

\subsection{Risk Bound and Rates}\label{sec:bounds}

With the “Lipschitz-like"  property established in Lemma \ref{lem:SPO_Lipschitz}, we are ready to show the convergence of IWSD-DU (Algorithm \ref{alg:general-iwal}) in Theorem \ref{thm:prediction}. The high-level idea of the proof is to construct a martingale difference sequence and apply Azuma's inequality to prove the risk bound under a finite sample guarantee.

\begin{theorem}[Risk Bound] \label{thm:prediction} Suppose that Assumption \ref{assumption:tau_lower_bound} holds. Let $\delta \in (0, 1]$ be a given parameter, and set $r_0 \gets \max\{2 \omega_\ell(\hat{\mathcal{C}},\mathcal{C}), 2\gamma L \sqrt{\log(2|\mathcal{H}|/\delta)}\}$ where $L$ is the Lipschitz constant in Lemma \ref{lem:SPO_Lipschitz}. Then, with probability at least $1 - \delta$, for all $T \geq 1$, the SPO risk satisfies 
\begin{equation*}
    \rspo(h_T)- \rspo(h^*)\leq 4\gamma L \sqrt{\frac{\log(2T|\mathcal{H}|/\delta)}{T}}.
\end{equation*}
\end{theorem}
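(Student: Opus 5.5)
Following the IWAL analysis of \citet{beygelzimer2009importance}, I will compare hypotheses pairwise through a martingale difference sequence and apply Azuma's inequality uniformly over pairs and over $t$.

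For fixed $h \in \mathcal{H}$ and time $t$, define
\[
Z_t(h) := \ell^{\mathtt{rew}}(h;z_t) - \ell^{\mathtt{rew}}(h^*;z_t) - \big(\rspo(h)-\rspo(h^*)\big).
\]
By the unbiasedness computation preceding Definition~\ref{def:distance_to_deg}, applied conditionally on $\mathcal{F}_{t-1}$ (the probabilities $\pi_t$ are $\mathcal{F}_{t-1}$-measurable), $\{Z_t(h)\}$ is a martingale difference sequence.

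The key step is a uniform bound on $|Z_t(h)|$. This bound should hold as long as both $h$ and $h^*$ survive in $H_{t-1}$, which I enforce by induction. If design $i$ is drawn, the reweighted difference equals
\[
\frac{\mu(x_i)}{\pi_{t,i}}\big(\lspo(h(x_i),c)-\lspo(h^*(x_i),c)\big).
\]
By Lemma~\ref{lem:SPO_Lipschitz}, this is at most
\[
\mu(x_i)\,\frac{\sum_j p_{t,j}}{p_{t,i}}\, L\, \Big\|\tfrac{h(x_i)}{\|h(x_i)\|}-\tfrac{h^*(x_i)}{\|h^*(x_i)\|}\Big\|
\;\le\; L\sum_j p_{t,j} \;\le\; L\gamma,
\]
because both hypotheses lie in $H_t$, so their directional gap is at most $p_{t,i}$, and $\mu\le 1$.

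This is the main obstacle. The bound $\gamma$ controls each term but does not exactly match the nested sets. There is also an index subtlety: $\pi_t$ is computed from the current set, and the expected-risk term is bounded by the same quantity. I also expect to need $\omega_\ell$ to handle the degenerate case $p_{t,i}=0$. When $p_{t,i}=0$, all surviving hypotheses agree in direction at $x_i$, so by Fact~\ref{lem:spo_scale_invariant} their losses coincide, and the zero-probability design contributes nothing to the difference. The $r_0 \ge 2\omega_\ell$ term only covers the trivial small-$t$ regime.

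With increments bounded by $2\gamma L$, Azuma gives, for each $h$ and $T$,
\[
\Big|\tfrac1T\sum_{t\le T} Z_t(h)\Big| \le 2\gamma L\sqrt{\log(2T|\mathcal{H}|/\delta)/T}
\]
with probability $1-\delta/(T(T+1))$ or similar, after a union bound over $h$. Summing over $T$ yields the event $E$ with probability at least $1-\delta$. The recursion for $r_t$ gives $r_T = r_0\sqrt{\log(2T)/(T\log 2)}$, which dominates $2\gamma L\sqrt{\log(2T|\mathcal{H}|/\delta)/T}$ given the choice of $r_0$. I will verify this comparison and, if needed, absorb constants.

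On $E$, I argue by induction on $T$:
\begin{enumerate}
\item[(i)] $h^*\in H_T$. Indeed, $\hat\ell^T(h^*)-\hat\ell^T(h_T)\le$ the deviation $\le r_T$, since $\rspo(h^*)\le\rspo(h_T)$.
\item[(ii)] Since $h_T$ minimizes $\hat\ell^T$ over $H_{T-1}\ni h^*$, we have $\hat\ell^T(h_T)-\hat\ell^T(h^*)\le 0$.
\end{enumerate}
From (ii),
\[
\rspo(h_T)-\rspo(h^*) \le \hat\ell^T(h_T)-\hat\ell^T(h^*) + 2\gamma L\sqrt{\log(2T|\mathcal{H}|/\delta)/T},
\]
which gives the stated bound, with the factor $4$ leaving slack for the union-bound constants.
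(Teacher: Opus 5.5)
Your proposal is essentially the paper's argument. The reweighted gaps $\ell^{\mathtt{rew}}(h;z_t)-\ell^{\mathtt{rew}}(h^*;z_t)$ are bounded by $L\gamma$ via Lemma~\ref{lem:SPO_Lipschitz} and $\pi_{t,i}=p_{t,i}/\sum_j p_{t,j}$ while $h,h^*$ survive. Azuma with a union bound over $\mathcal{H}$ and $T$ then controls the centered sums, and an induction keeps $h^*$ in the confidence set.

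Two of your departures from the paper are improvements:
\begin{itemize}
\item You handle designs with $p_{t,i}=0$, where the paper's unbiasedness computation would divide by $\pi_{t,i}=0$.
\item Your last step uses $\hat\ell^T(h_T)\le\hat\ell^T(h^*)$, so $\rspo(h_T)-\rspo(h^*)$ is bounded by the deviation alone, independent of $r_T$. The paper instead bounds every survivor by $2r_T$ and sets $r_T=2\gamma L\sqrt{\log(2T|\mathcal{H}|/\delta)/T}$. But the recursion with the stated $r_0$ only gives $r_T\ge 2\gamma L\sqrt{\log(2T|\mathcal{H}|/\delta)/T}$.
\end{itemize}
Your comparison of $r_T$ with the deviation is correct: it reduces to $(\log(2|\mathcal{H}|/\delta)-\log 2)(\log(2T)-\log 2)\ge 0$.

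The one place to tighten is the Azuma constant. The factor-$4$ slack helps only in the final bound. It does not help in the elimination step, where $r_0$ is prescribed and nothing can be absorbed. If you feed $|Z_t(h)|\le 2\gamma L$ into the standard Azuma inequality, the deviation comes out of order $4\gamma L\sqrt{\log(2T|\mathcal{H}|/\delta)/T}$, which the prescribed $r_T$ need not dominate.

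Use the bounded-range (Hoeffding--Azuma) form instead. Since $|\ell^{\mathtt{rew}}(h;z_t)-\ell^{\mathtt{rew}}(h^*;z_t)|\le L\gamma$, $Z_t(h)$ lies in an $\mathcal{F}_{t-1}$-measurable interval of length $2L\gamma$. The tail is then $2\exp(-T\epsilon^2/(2L^2\gamma^2))$. This gives exactly your $2\gamma L\sqrt{\log(2T|\mathcal{H}|/\delta)/T}$ with per-round failure probability $\delta/(T(T+1))$ after the union over $\mathcal{H}$.

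Finally, to make ``enforce by induction'' rigorous, apply Azuma to $Z_t(h)\,\mathbb{I}\{h,h^*\in H_{t-1}\}$. For each fixed $h$ this is a bounded martingale difference, and it coincides with $Z_t(h)$ on the event your induction maintains.
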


In practice, the stopping time for the experimental design is determined by a tolerance for the decision loss, which is denoted by $\epsilon$. Based on Theorem \ref{thm:prediction}, we can derive an upper bound for the stopping time for the IWSD-DU under the tolerance $\epsilon$ in Theorem \ref{thm:stoppingtime1}.

\begin{theorem}[Sample complexity]\label{thm:stoppingtime1}Under the assumption of Theorem \ref{thm:prediction}, given any $\epsilon >0$ and $\delta \in (0,1)$, define 
$ \tau := \frac{3(4\gamma L)^2}{\epsilon^2}\log \frac{2|\mathcal{H}|}{\epsilon^2 \delta}$. We have that $\forall T\geq \tau$, with probability $1-\delta$, we have $\rspo(h_T)-\rspo(h^*)\leq \epsilon$.
\end{theorem}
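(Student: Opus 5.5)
The plan is to deduce the statement directly from Theorem~\ref{thm:prediction}. No new probabilistic argument is needed, because Theorem~\ref{thm:prediction} already holds simultaneously for all $T\ge 1$ on a single event of probability at least $1-\delta$. On that event, the excess risk of $h_T$ is at most $4\gamma L\sqrt{\log(2T|\mathcal{H}|/\delta)/T}$. So it suffices to show the deterministic inequality
\[
4\gamma L\sqrt{\frac{\log(2T|\mathcal{H}|/\delta)}{T}}\le \epsilon \quad\text{for all } T\ge \tau .
\]
Writing $a:=(4\gamma L)^2/\epsilon^2$, this is equivalent to
\[
T\ge a\log(2|\mathcal{H}|/\delta)+a\log T .
\]

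The key step is the usual inversion of an inequality of the form $T\ge a\log(bT)$, with $b=2|\mathcal{H}|/\delta$. I would argue in two stages.

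First, $f(T):=T-a\log(bT)$ is nondecreasing for $T\ge a$. Since $\tau\ge 3a\log(b/\epsilon^2)$, one can check $\tau\ge a$ in the relevant regime. Indeed, $b\ge 2$, and when $\epsilon\le 4\gamma L$ (otherwise the bound is trivially at most $\epsilon$ for $T\ge1$, or one reduces to that case) we get $\log(b/\epsilon^2)\ge \log 2$, so $3\log(b/\epsilon^2)\ge 1$. Hence it suffices to verify $f(\tau)\ge 0$.

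Second, for $f(\tau)\ge0$, split $\tau=\tau_1+\tau_2+\tau_3$ into three equal thirds, each equal to $a\log(b/\epsilon^2)$. One third dominates $a\log b$, since $\epsilon^2\le 1$ after normalization, or after absorbing constants. The remaining two thirds, $2a\log(b/\epsilon^2)$, must dominate $a\log\tau$. Here I would use the elementary bound $\log\tau\le \log(3a)+\log\log(b/\epsilon^2)$ together with $\log x\le x$-type estimates. This reduces the requirement to $\log(3a)+\log\log(b/\epsilon^2)\le 2\log(b/\epsilon^2)$, that is, $3a\log(b/\epsilon^2)\le (b/\epsilon^2)^2$.

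The main obstacle is exactly this last comparison. The constant $a=(4\gamma L)^2/\epsilon^2$ involves $\gamma L$, while the factor $b/\epsilon^2$ inside the logarithm carries only $\epsilon^{-2}$, not $(\gamma L)^2/\epsilon^2$. So the inequality only goes through cleanly when $4\gamma L$ is suitably bounded, or after rescaling so that $\gamma L\lesssim 1$; I would first try to assume WLOG that $\gamma L\le 1$ via the normalization of losses, or otherwise track the constant carefully.

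If that fails, the fallback is the standard lemma: $T\ge 2a\log(2ab)$ implies $T\ge a\log(bT)$. I would then check that $\tau$ dominates $2a\log(2ab)$ under $\epsilon\le 1$, using the factor $3$ and the $\epsilon^{-2}$ inside the log to absorb $\log a$. Once the deterministic inequality holds, intersecting with the high-probability event of Theorem~\ref{thm:prediction} gives $\rspo(h_T)-\rspo(h^*)\le\epsilon$ for all $T\ge\tau$, with probability at least $1-\delta$.
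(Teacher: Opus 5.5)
You follow the paper's route. You use the single uniform-in-$T$ event of Theorem~\ref{thm:prediction}, reduce to the deterministic inequality $T\ge a\log(bT)$ with $a=(4\gamma L)^2/\epsilon^2$ and $b=2|\mathcal{H}|/\delta$, and absorb $\log\tau$ via $\log\log\le\log$. Your monotonicity step for $T>\tau$ is something the paper omits. The obstacle you flag is real, and it is exactly the step the paper's proof asserts without justification. Grouped correctly, $\log b+\log\tau=\log\frac{b}{\epsilon^2}+\log\bigl(3(4\gamma L)^2\bigr)+\log\log\frac{b}{\epsilon^2}$. The $\epsilon^{-2}$ inside $a$ pairs with $\log b$, so no $\epsilon\le 1$ is needed there. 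Comparing with $\tau/a=3\log\frac{b}{\epsilon^2}$ term by term, after $\log\log\le\log$, closes only when $3(4\gamma L)^2\le\frac{2|\mathcal{H}|}{\epsilon^2\delta}$. The paper's ``pick $C=3(4\gamma L)^2$'' tacitly assumes precisely this.

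Neither of your repairs removes that condition.
\begin{itemize}
\item \emph{Normalization.} Setting $\gamma L\le 1$ is not without loss of generality. Rescaling the costs by $s>0$ multiplies $L$, all SPO risks, and hence the meaningful tolerance $\epsilon$ by $s$. So $\gamma L/\epsilon$ is invariant while $\log\frac{2|\mathcal{H}|}{\epsilon^2\delta}$ is not; the stated $\tau$ is not scale-invariant.
\item \emph{Fallback lemma.} It needs $3a\log\frac{b}{\epsilon^2}\ge 2a\log(2ab)$, i.e.\ $b\ge 4(4\gamma L)^4\epsilon^2$, which $\epsilon\le 1$ does not imply. The $\epsilon^{-2}$ in the logarithm absorbs only the $\epsilon$-part of $\log a$, never the $\log(\gamma L)$ part. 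A related slip occurs in your $\tau\ge a$ step: $\epsilon\le 4\gamma L$ does not give $b/\epsilon^2\ge 2$.
\end{itemize}

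In fact, for $b>\epsilon^2$ the required inequality at $T=\tau$ is \emph{equivalent} to $(b/\epsilon^2)^2\ge 3(4\gamma L)^2\log(b/\epsilon^2)$. This fails, e.g., for $\gamma L=100$, $\epsilon=1$, $|\mathcal{H}|=2$, $\delta=1/2$: then $\tau\approx 10^6$, and the bound of Theorem~\ref{thm:prediction} at $T=\tau$ is about $1.6>\epsilon$. So no argument that only feeds $\tau$ into Theorem~\ref{thm:prediction} can prove the statement for all parameter values.

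Two versions do go through:
\begin{itemize}
\item Keep $\tau$ and add the hypothesis $3(4\gamma L)^2\le 2|\mathcal{H}|/(\epsilon^2\delta)$. Then each of the three terms above is at most $\log\frac{b}{\epsilon^2}$.
\item Enlarge $\tau$ to $3a\log(ab)=\frac{3(4\gamma L)^2}{\epsilon^2}\log\frac{2|\mathcal{H}|(4\gamma L)^2}{\epsilon^2\delta}$. Then $\tau\ge a\log(b\tau)$ reduces to $2u\ge\log 3+\log u$ with $u=\log(ab)>0$. This always holds, since $\min_{u>0}(2u-\log u)=1+\log 2>\log 3$. Monotonicity for $T\ge\tau$ is routine: $T-a\log(bT)$ is nondecreasing on $T\ge a$, and $\tau\ge a$ once $ab\ge e^{1/3}$.
\end{itemize}
Either version keeps the $\otilde(\epsilon^{-2}\log(1/\delta))$ rate of Proposition~\ref{sublinear_iwal}.
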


The stopping time $\tau$ characterized in Theorem~\ref{thm:stoppingtime1} provides explicit control over the number of design points queried. By combining this stopping-time with the risk bound from Theorem~\ref{thm:prediction}, we obtain Proposition \ref{sublinear_iwal} that elucidates the asymptotic dependence on $T$.

\begin{proposition} \label{sublinear_iwal}
Under the same setting as Theorem~\ref{thm:prediction}, with probability at least $1-\delta$ for all $T \ge 1$, the excess SPO risk satisfies $\rspo(h_T) - \rspo^* \le \otilde(T^{-1/2}\sqrt{\log(1/\delta)})$. Equivalently, with probability at least $1-\delta$, there exists a stopping time $\tau \sim \otilde(\epsilon^{-2}\log(1/\delta))$ such that $\forall T \ge \tau$ we have $\rspo(h_T) - \rspo(h^*) \le \epsilon$.
\end{proposition}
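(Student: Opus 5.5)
The proposition is essentially a repackaging of Theorem~\ref{thm:prediction} and Theorem~\ref{thm:stoppingtime1} in $\otilde$ notation, so I would not redo any probabilistic argument. The plan is to work on the single high-probability event $E$ of Theorem~\ref{thm:prediction}, which has probability at least $1-\delta$ and on which the risk bound holds simultaneously for all $T\ge1$. Both claims are then deterministic consequences of that bound on $E$.

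For the rate, I start from
\[
\rspo(h_T)-\rspo(h^*)\le 4\gamma L\sqrt{\frac{\log(2T|\mathcal{H}|/\delta)}{T}}
\]
and split the logarithm as $\log(2T|\mathcal{H}|/\delta)=\log(2|\mathcal{H}|)+\log T+\log(1/\delta)$. Using $\sqrt{a+b}\le\sqrt a+\sqrt b$, the right-hand side is at most
\[
4\gamma L\left(\sqrt{\frac{\log(2T|\mathcal{H}|)}{T}}+\sqrt{\frac{\log(1/\delta)}{T}}\right).
\]
Here $\gamma$ and $L$ are problem constants independent of $T$ and $\delta$, and the $\log T$ and $\log|\mathcal{H}|$ factors are absorbed by $\otilde$. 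The result is $\otilde(T^{-1/2}(1+\sqrt{\log(1/\delta)}))$, which is $\otilde(T^{-1/2}\sqrt{\log(1/\delta)})$ in the regime $\delta\le e^{-1}$ that $\otilde$ statements implicitly assume; I will state that convention explicitly.

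For the stopping time, I invoke Theorem~\ref{thm:stoppingtime1} directly. It gives
\[
\tau=\frac{3(4\gamma L)^2}{\epsilon^2}\log\frac{2|\mathcal{H}|}{\epsilon^2\delta}=\frac{48\gamma^2L^2}{\epsilon^2}\Bigl(\log(2|\mathcal{H}|)+2\log(1/\epsilon)+\log(1/\delta)\Bigr),
\]
which is $\otilde(\epsilon^{-2}\log(1/\delta))$ after absorbing $\log(1/\epsilon)$ and $\log|\mathcal{H}|$. The theorem also guarantees $\rspo(h_T)-\rspo(h^*)\le\epsilon$ for all $T\ge\tau$ on the same event.

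To justify ``equivalently'', I also check the direct inversion. Setting $4\gamma L\sqrt{\log(2T|\mathcal{H}|/\delta)/T}\le\epsilon$ reduces to $T\ge a\log(bT)$, with $a=(4\gamma L/\epsilon)^2$ and $b=2|\mathcal{H}|/\delta$. The standard lemma says $T\ge 2a\log(2ab)$ suffices, and this choice matches $\tau$ up to constants. Both directions hold on the same event $E$, so a single probability of $1-\delta$ covers both statements.

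The only delicate point is bookkeeping the additive constant terms inside the logarithms so that the $\otilde$ claims hold uniformly. No substantive obstacle is expected.
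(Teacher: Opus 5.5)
Your proposal is correct and follows the paper's proof essentially verbatim. Both work on the single probability-$(1-\delta)$ event from Theorem~\ref{thm:prediction}, split $\log(2T|\mathcal{H}|/\delta)$ so the $\log T$ and $\log|\mathcal{H}|$ factors are absorbed into $\otilde(\cdot)$, and obtain the stopping-time claim by expanding the $\tau$ of Theorem~\ref{thm:stoppingtime1} into $\log(1/\delta)$ plus terms logarithmic in $|\mathcal{H}|$ and $1/\epsilon$; your explicit $\delta\le e^{-1}$ convention and the direct inversion check of $T\ge a\log(bT)$ are extra care the paper omits, but they do not change the argument.
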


Proposition~\ref{sublinear_iwal} implies a sample complexity on the order of $\otilde(\epsilon^{-2})$, which matches the typical sample complexity of the supervised learning approach in the predict-then-optimize regime (e.g., \cite{el2022generalization}). This result shows that the experimental design algorithm based on directional-uncertainty enjoys at least the same theoretical guarantees as these decision-blind approaches (e.g., see Theorem 2 for general reweighted regression problem in \cite{beygelzimer2009importance}). In the next section, we further compare our directional-uncertainty–based algorithm with the traditional $\ell_2$-norm–based method.

\section{Comparison with Decision-blind Sequential Design}\label{sec:refinedlow}

Traditional experimental design methods are typically decision-blind, focusing solely on improving predictive accuracy. However, for downstream linear optimization, the directional accuracy of parameter estimates is often more critical for decision quality than their overall magnitude (e.g., as measured by the $\ell_2$ norm). In this section, we theoretically illustrate the advantages of our proposed design method by contrasting our directional-uncertainty approach with standard benchmarks, as summarized in Figure~\ref{fig:decision_flowchart}.

\begin{figure}[ht]
\centering

\resizebox{\linewidth}{!}{ 
\begin{tikzpicture}[
    node distance=1.8cm and 2.2cm,
    box/.style={rectangle, draw, align=center, minimum width=3.5cm, minimum height=1cm},
    comment/.style={align=center, font=\normalsize	},
    methodbox/.style={rectangle, draw, dashed, align=center, font=\normalsize, minimum width=3.5cm},
    arrow/.style={->, thick}
]

\node[box] (expdesign) {Experimental Design};
\node[box, right=2.8cm of expdesign] (elim) {Model\\Elimination};
\node[box, right=of elim] (train) {Model\\Training};
\node[box, right=0.8cm of train] (eval) {Model\\Evaluation};

\draw[arrow] (expdesign) -- (elim);
\draw[arrow] (elim) -- (train);
\draw[arrow] (train) -- (eval);

\draw[arrow, bend right=20] (eval.north) to node[above] {Downstream Decision-making} (expdesign.north);

\node[methodbox, below=1.2cm of expdesign] (exp_meth) {
Collect Informative Data\\[2pt]
\textbf{Our method:} Directional Uncertainty\\
\textbf{Benchmark 1:} Prediction Uncertainty\\
\textbf{Benchmark 2:} Directional Uncertainty
};

\node[methodbox, below=1.2cm of elim] (elim_meth) {
Shrinking hypothesis class $H_t$\\[2pt]
\textbf{Our method:} SPO Loss\\
\textbf{Benchmark 1:} Squared Loss\\
\textbf{Benchmark 2:} Squared Loss
};

\node[methodbox, below=1.2cm of train] (train_meth) {
Select best prediction model\\
from the hypothesis class $H_t$
\\[2pt]
\textbf{Our method:} SPO Loss\\
\textbf{Benchmark 1:} Squared Loss\\
\textbf{Benchmark 2:} Squared Loss
};

\node[comment, below=1.2cm of eval] (eval_comment) {SPO Loss};

\draw[arrow] (exp_meth.north) -- (expdesign.south);
\draw[arrow] (elim_meth.north) -- (elim.south);
\draw[arrow] (train_meth.north) -- (train.south);
\draw[arrow] (eval_comment.north) -- (eval.south);

\end{tikzpicture} 
}

\caption{\normalsize Illustration of Directional Uncertainty-Based Method and Benchmark Methods for Experimental Design.}
\label{fig:decision_flowchart}
\end{figure}
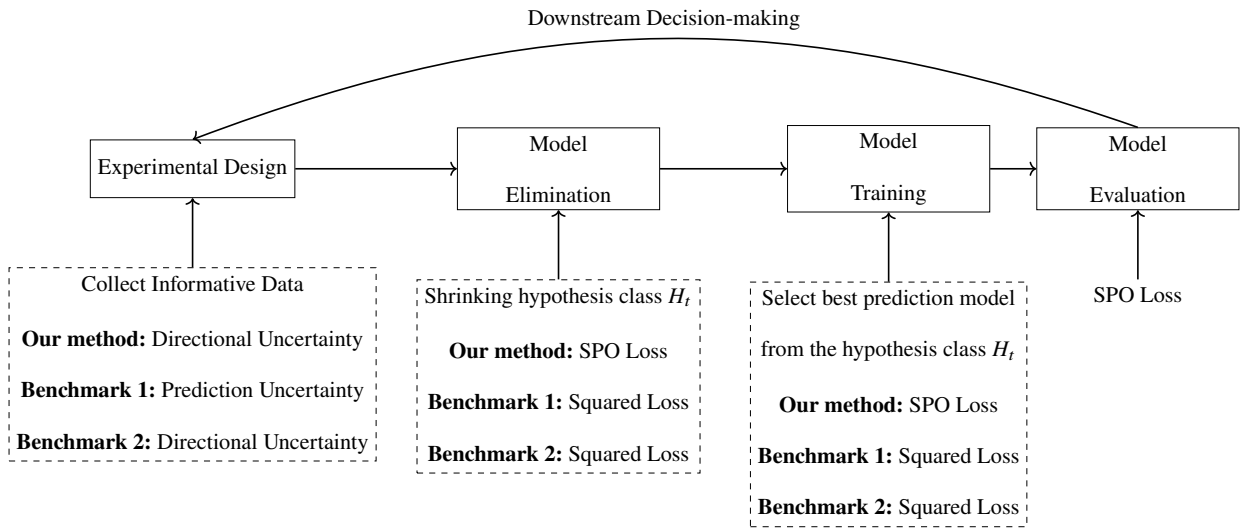
\vskip -1cm
We compare our method against two decision-blind benchmarks. The first benchmark is purely prediction-focused: its experimental design criterion is based on standard prediction uncertainty (measured by the $ \ell_2 $ norm), and its prediction model $ h_t  $ is trained and evaluated using the squared $ \ell_2 $ norm (i.e., standard least squares). This algorithm is formally stated in Algorithm \ref{alg:blind} in Appendix \ref{append:benchmarkalg}, and the only differences between this benchmark and Algorithm \ref{alg:general-iwal} are Line 4 and Line 12. The second benchmark represents a hybrid approach: it uses a design criterion based on directional uncertainty (normalized $ \ell_2  $ norm), but, like the first benchmark, it still relies on the squared $ \ell_2  $ norm for the training and elimination of the model $ h_t $.

\begin{remark}[Consistency of metrics for training and elimination]
It is well known in machine learning that the loss used for model training should match the loss used for model elimination. A mismatch between these losses leads to biased model selection and invalidates convergence guarantees; see \cite{beygelzimer2009importance, liu2023active}. For this reason, the two benchmarks above are the only reasonable baselines that preserve theoretical guarantees.
\end{remark}

\subsection{Comparison Under Optimality Gap Assumption}
\label{sec: compare}

To first compare our method with Benchmark Algorithm 1, we introduce the following mild assumption. This assumption ensures that a suboptimal hypothesis does not yield an SPO risk that is arbitrarily close to the optimal SPO risk.

\begin{assumption}[Optimality gap for decision risk]\label{assu:gap}
    There exsits a constant $\alpha>0$, such that $\rspo(h) - \rspo^* \ge \alpha$ for any $h \in \mathcal{H}\setminus \mathcal{H}^*$.
\end{assumption}

Assumption \ref{assu:gap} is considered moderate not only due to the finite cardinality of the hypothesis set $ \mathcal{H}  $, but also because the SPO risk $ \mathcal{R}_{\text{SPO}}(h)  $ is typically discontinuous with respect to the predicted cost vector, which often results in a non-zero optimality gap. We provide a more detailed discussion and an illustrative example of this optimality gap in Section \ref{sec:example}.
Given a fixed constant $ \alpha > 0  $, we denote the class of distributions that satisfy Assumption \ref{assu:gap} by $ \mathcal{P}_\alpha $, i.e.,
$
\mathcal{P}_\alpha := \left\{\mathcal{D}: \mathcal{R}_{\text{SPO}}(h) - \mathcal{R}_{\text{SPO}}^* \ge \alpha \text{ for any } h \in \mathcal{H}\setminus \mathcal{H}^* \right\}
.$
Within this class of distributions $\mathcal{P}_\alpha $, Theorem \ref{thm:better} demonstrates that our proposed Algorithm \ref{alg:general-iwal} has a bounded and finite stopping time to identify the best prediction model $ h^*  $ for any distribution $ \mathcal{D} \in \mathcal{P}_\alpha $. In contrast, we show that the decision-blind algorithms may not converge to the true optimal hypothesis.

\begin{theorem}[Earlier stopping time than decision-blind design]
\label{thm:better} Suppose that Assumptions \ref{assumption:tau_lower_bound} and  \ref{assu:gap} hold with $\alpha > 0$. We have the following results:
\begin{enumerate}
    \item For Algorithm \ref{alg:general-iwal}, for any $\delta \in (0,1)$, there exists a constant $T_0$ (depending on $\alpha$ and $\delta$), such that for any distribution $\mathcal{D}\in \mathcal{P}_\alpha$ and for any $T \ge T_0$, with probability at least $1 - \delta$, the candidate prediction set $H_T$ has excluded all suboptimal prediction models, i.e., $H_T \subseteq \mathcal{H}^*$.
    \item For the decision-blind design (Algorithm \ref{alg:blind}), there exists a distribution of $\mathcal{D} \in \mathcal{P}_\alpha$ and a constant $\delta_0$, such that for any $T$, the probability of having some suboptimal prediction models within $H_T$ is larger than $\delta_0$, i.e., with probability at least $\delta_0$, there exists $h \in H_T$, such that $\rspo(h)> \rspo^*$.
\end{enumerate}
\end{theorem}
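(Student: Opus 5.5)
For part 1, I plan to reuse the high-probability event from the proof of Theorem~\ref{thm:prediction}. On that event, Azuma's inequality applied to the martingale difference sequence of reweighted loss differences gives, simultaneously for all $t\ge1$ and all $h_1,h_2\in H_{t-1}$,
\[
\bigl|\hat\ell^t(h_1)-\hat\ell^t(h_2)-(\rspo(h_1)-\rspo(h_2))\bigr|\le 2\gamma L\sqrt{\log(2t|\mathcal{H}|/\delta)/t}.
\]
This deviation bound follows because each increment is bounded by $\mu(x_i)\pi_{t,i}^{-1}L\,p_{t,i}\le L\sum_j p_{t,j}\le L\gamma$, using Lemma~\ref{lem:SPO_Lipschitz}. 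The same event implies two things. First, $h^*$ is never eliminated, because the slack $r_t$ dominates the deviation under the choice of $r_0$; note that $r_t$ scales as $r_0\sqrt{\log(2t)/(t\log 2)}$, i.e.\ of order $\sqrt{\log t/t}$. Second, for any $h\in H_{t-1}$,
\[
\hat\ell^t(h)-\hat\ell^{t,*}\ge \hat\ell^t(h)-\hat\ell^t(h^*)\ge \rspo(h)-\rspo^*-2\gamma L\sqrt{\log(2t|\mathcal{H}|/\delta)/t}.
\]

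Under Assumption~\ref{assu:gap}, every suboptimal $h$ has $\rspo(h)-\rspo^*\ge\alpha$. It is therefore eliminated at round $t$ as soon as
\[
\alpha> r_t+2\gamma L\sqrt{\log(2t|\mathcal{H}|/\delta)/t}.
\]
The right-hand side is $\otilde(t^{-1/2})$, with constants depending only on $\gamma,L,|\mathcal{H}|,\delta,r_0$ and not on $\mathcal{D}$. It is therefore below $\alpha$ for all $t\ge T_0$, where $T_0=\otilde(\alpha^{-2}\log(1/\delta))$, obtained by inverting as in Theorem~\ref{thm:stoppingtime1}. Since the sets $H_t$ are nested, $H_T\subseteq\mathcal{H}^*$ for all $T\ge T_0$. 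One caveat: if all $p_{t,i}$ vanish, the algorithm stops early. In that case all remaining hypotheses agree in direction at every design, so by Fact~\ref{lem:spo_scale_invariant} they have equal SPO risk, and $h^*$ is still in the set. This yields part 1.

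For part 2, I will construct an explicit counterexample in which the squared-loss minimizer is SPO-suboptimal. Take $d=2$, a single design ($m=1$, so $\mu=1$), and $S$ equal to the segment between $(1,0)$ and $(0,1)$. Let $\mathcal{H}=\{h_1,h_2\}$ consist of constant predictors, and let $c$ take two values with positive probability. I will choose $h_1$ close to $\bbE[c]$ in $\ell_2$ but on the wrong side of the degeneracy line $c_1=c_2$, so that its SPO risk exceeds that of $h_2$ by at least $\alpha$. I will choose $h_2$ far from $\bbE[c]$ in $\ell_2$ but on the correct side. Both predictors must stay a positive angle away from the diagonal, so that Assumption~\ref{assumption:tau_lower_bound} holds. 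With $\mathcal{H}$ finite, Assumption~\ref{assu:gap} holds with the explicit gap, so $\mathcal{D}\in\mathcal{P}_\alpha$.

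The benchmark eliminates with the squared loss. Its confidence set therefore only removes hypotheses whose squared-loss risk exceeds the minimum, and $h_2$ is the squared-loss-suboptimal one. By the same concentration argument applied to the squared loss, with probability at least $1-\delta'$ the set $H_T$ contains $h_1$ for every $T$: $h_1$ is the squared-risk minimizer and is never eliminated. Since $\rspo(h_1)>\rspo^*$, taking $\delta_0=1-\delta'$ (say $\delta_0=1/2$) gives the claim.

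The main obstacle is ensuring that this bound holds uniformly in $T$ and is not merely asymptotic. I would handle it by reusing the anytime union bound over $t$ that is built into the decaying slack $r_t$, exactly as in Theorem~\ref{thm:prediction}. For a clean $\delta_0$, I would alternatively use the deterministic version: with one design point and i.i.d.\ labels, $h_1$ survives whenever its empirical squared loss is within $r_t$ of the minimum. Anytime concentration gives this uniformly with probability at least $1-\delta'$.
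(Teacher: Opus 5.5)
Your proposal is correct. Part 1 follows the paper's route in substance. Both arguments rest on the anytime Azuma event from the proof of Theorem~\ref{thm:prediction}, with increments bounded by $L\gamma$ via Lemma~\ref{lem:SPO_Lipschitz}, combined with the gap $\alpha$. You make the elimination step explicit, $\hat\ell^t(h)-\hat\ell^{t,*}\ge\hat\ell^t(h)-\hat\ell^t(h^*)\ge\alpha-\mathrm{dev}_t>r_t$, and that is exactly what $H_T\subseteq\mathcal{H}^*$ requires. The paper instead bounds only $\rspo(h_T)$ and then asserts the inclusion, implicitly relying on Claim (A) holding for every $h\in H_{T-1}$. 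Your check that the slack dominates the deviation is also right: with $a=\log(2|\mathcal{H}|/\delta)$ and $b=\log(2t)$, it reduces to $(a-\log 2)(b-\log 2)\ge 0$.

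Part 2 uses a genuinely different counterexample.
\begin{itemize}
\item \textbf{Paper:} it takes $\|h_1(x)\|=\|h_2(x)\|$ with $\bbE[c|x]$ bisecting their angle, so the two squared risks tie exactly. The empirical difference is then $2\bar\Delta^\top\bar\epsilon_T$, which has mean zero, and a CLT gives a constant probability that the SPO-suboptimal model is preferred at time $T$.
\item \textbf{You:} you make the squared-risk minimizer SPO-suboptimal. The benchmark then keeps, and in fact converges to, the wrong model with probability close to one, uniformly in $T$.
\end{itemize}
Your route gives a stronger, non-asymptotic conclusion. It also avoids the weak points of the CLT step, which controls only the event at the single time $T$ (not survival through all earlier eliminations) and leaves the constant $p_\infty$ loosely specified. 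The paper's construction makes a sharper point about misalignment: the squared loss has no systematic bias toward the wrong model, yet the decision-blind rule still cannot resolve a decision-relevant difference. Yours is the classical misspecification mechanism.

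One tightening. Your ``deterministic version'' still invokes concentration, so it depends on the benchmark's $r_0$, which the theorem leaves unspecified. You can remove that dependence by choosing the support of $c$ so that $\|h_1-c\|<\|h_2-c\|$ for every realization. For example, take $h_1=s(1.1,1)$ and $h_2=s(1,5)$, and let $c$ be uniform on $\{s(1,1.1),\,s(1,1.3)\}$ with $s\ge 5\alpha$. Then:
\begin{itemize}
\item $h_1$ is the empirical minimizer at every round for every $r_0\ge 0$, so it lies in $H_T$ with probability one for every $T$.
\item $\rspo(h_1)-\rspo(h_2)=0.2s\ge\alpha$.
\item Both predictors stay a fixed positive angle from the diagonal, so Assumption~\ref{assumption:tau_lower_bound} holds.
\end{itemize}
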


Theorem \ref{thm:better} demonstrates a fundamental limitation of decision-blind algorithms. We achieved this by constructing a specific distribution where two distinct prediction models have a prediction error that is symmetric around the true conditional mean, yet these models lead to different optimal decisions. This scenario clearly shows that decision-blind algorithms, which minimize prediction error (e.g., squared $\ell_2$ loss), may fail to converge to the true optimal hypothesis (i.e., the one minimizing SPO risk). This theoretical result reinforces our earlier observation regarding the misalignment between prediction and decision objectives.

Theorem \ref{thm:better} establishes an earlier stopping time for our proposed Algorithm \ref{alg:general-iwal}, this performance gain stems from two key differences relative to Benchmark 1: the integration of the SPO loss criterion and the normalization of the prediction vector in the experimental design. 

Notably, normalizing prediction vectors can independently accelerate convergence even without using the SPO loss criterion. In particular, Example~\ref{example:normalized_better} illustrates that, under certain distributional settings, normalization leads to an earlier stopping time compared with the traditional decision-blind design. This improvement occurs even when the squared $\ell_2$ loss is still used for model training and elimination, demonstrating that the directional-uncertainty (normalized $\ell_2$ difference) criterion can provide benefits on its own.

\begin{example}[Directional uncertainty yields an earlier stopping time under squared loss]\label{example:normalized_better}
In this example, we show that Benchmark 2 is better than Benchmark 1, as depicted in Figure \ref{fig:decision_flowchart}. Specifically, Benchmark 2 in Figure \ref{fig:decision_flowchart} still strategically leverages directional uncertainty (normalized $\ell_2$ difference) to guide the experimental design criterion, but, unlike our proposed method, it uses the standard squared $\ell_2$ loss for both model training and hypothesis elimination.

Suppose that there are two candidate designs $x_1$ and $x_2$, and that we have two candidate prediction models, $h_1$ and $h_2$. The prediction values under two prediction models with designs $x_1$ and $x_2$ are shown in Figure \ref{fig:normalized_better_example}. The distributions of $\bbE[c|x_1]$ and $\bbE[c|x_2]$ are also shown in Figure \ref{fig:normalized_better_example}. Particularly, for design $x_2$, all three vectors $\bbE[c|x_2]$, $h_1(x_2)$ and $h_2(x_2)$ have the same direction. For design $x_1$, the prediction $h_1(x_1)$ is in the same cone as $\bbE[c|x_1]$, while $h_2(x_1)$ is in a different cone. Thus, in terms of the SPO loss, $h_1$ is optimal. In terms of the squared loss, there is flexibility to control whether $h_1$ or $h_2$ minimizes the expected squared loss over two designs $x_1$ and $x_2$. As an illustration, suppose that $x_1$ and $x_2$ occur with equal probability, and that $\|h_1(x_1) - \bbE[c|x_1]\| = \|h_2(x_2) - \bbE[c|x_2]\|$, as well as 
$\|h_2(x_1) - \bbE[c|x_1]\| = \|h_1(x_2) - \bbE[c|x_2]\|$.
Consequently, in this example, $h_1$ and $h_2$ have the same squared loss. As an illustration, we further assume that $\| h_1(x_2) - h_2(x_2)\| \ge \| h_1(x_1) - h_2(x_1)\| $. Thus, in Benchmark 1, when quantifying the uncertainty using the $\ell_2$ norm, design $x_2$ has a higher probability of being sampled.

\begin{figure}[ht]
    \centering
    \includegraphics[width=0.6\linewidth]{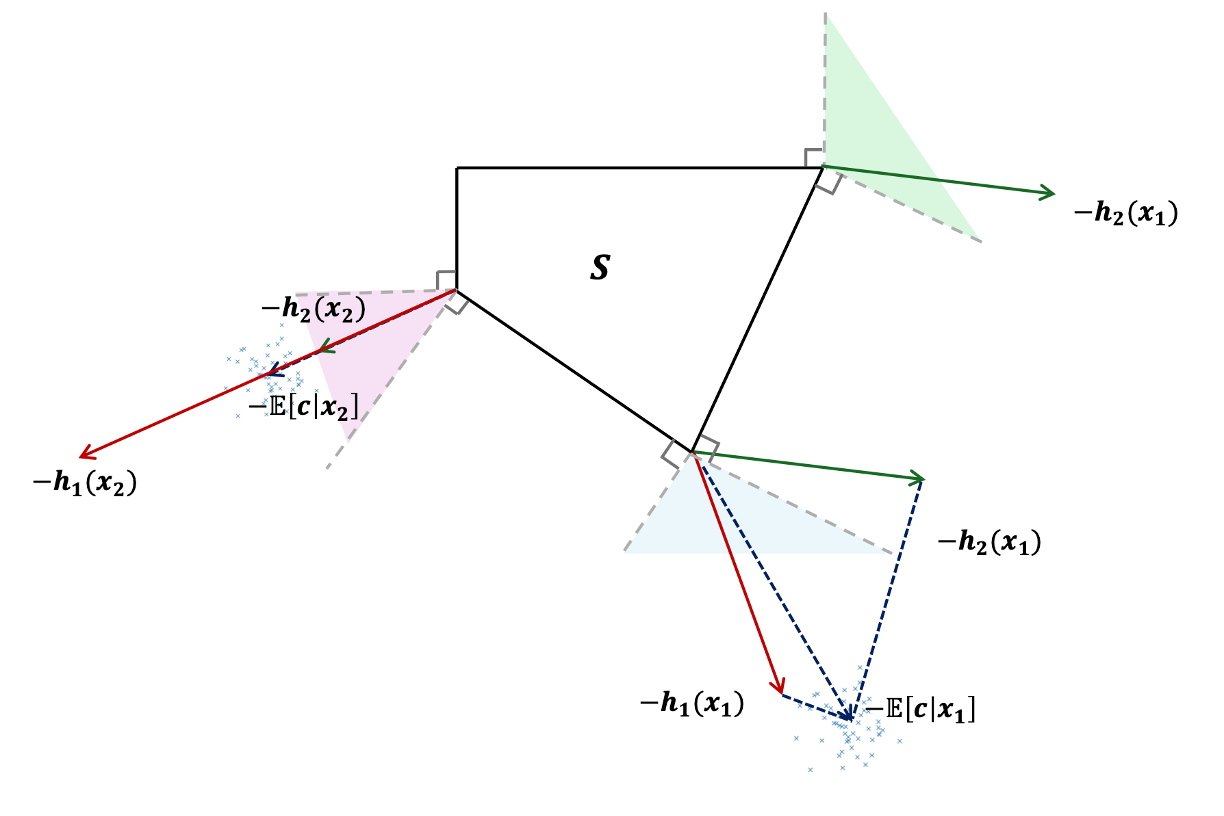}
    \caption{An Example Showing the Benefit of Normalization When Using the Squared Loss for Decision-Focused Learning}
    \label{fig:normalized_better_example}
\end{figure}

\vskip -1cm
In Benchmark 1, since $h_1$ and $h_2$ have the same expected squared loss, for any sufficiently large iteration $T$, by the central limit theorem, the empirical re-weighted loss of $h_1$ and $h_2$ will be approximately the same. Consequently, there exists a constant $\delta_0$ (e.g., $\delta_0 = 1/4$) such that, with probability at least $\delta_0$, $h_2$ is included in the confidence set $H_T$. In other words, after $T$ iterations, with probability at least $\delta_0$, the optimal predictor $h_1$ still exhibits a higher empirical reweighted squared loss than $h_2$, leading to the incorrect selection of $h_2$.

On the other hand, when we use the directional uncertainty, the uncertainty associated with design $x_2$ is zero, since $h_1(x_2)$ and $h_2(x_2)$ are in the same direction. In this case, the algorithm based on the directional uncertainty will select design $x_1$ almost surely. Since $h_1(x_1)$ is closer to $\bbE[c|x_1]$ than $h_2(x_1)$ to $\bbE[c|x_1]$, by the central limit theorem, for any sufficiently large $T$, the empirical squared loss of $h_1$ is smaller than $h_2$, and thus $h_2$ will be excluded from the confidence set $H_T$. In other words, $\forall \delta>0$ and  any sufficiently large $T$, with probability at least $1-\delta$, Benchmark~2 satisfies $H_T = \mathcal{H}^* = \{h_1\}$ with probability at least $1 - \delta$. \hfill \Halmos
\end{example}

Example \ref{example:normalized_better} demonstrates that utilizing the directional uncertainty criterion for experimental design yields superior performance, even when the squared $\ell_2$ loss is retained as both the training and hypothesis elimination criterion. This result is attributable to a simple yet important property that the SPO loss is inherently scale-invariant with respect to the cost vector prediction.

\section{Justifications and Examples of Assumptions}\label{sec:theoretical_justification}

Recall that the convergence guarantee of Algorithm \ref{alg:general-iwal} requires Assumption \ref{assumption:tau_lower_bound}, while its theoretical performance advantage over benchmark methods requires Assumption \ref{assu:gap}. In this section, we demonstrate that these assumptions are mild by providing concrete examples and simple transformations that satisfy them.

\subsection{Justification of the Directional Margin Condition}\label{sec:justification_angle}

In this section, we show that for any instances of the finite hypothesis class $\mathcal{H}$, Assumption \ref{assumption:tau_lower_bound} can be achieved by taking a simple pre-processing step. Let us first introduce the concept of the equivalent hypothesis class.

\begin{definition}[Equivalent Hypothesis Class]
\label{def:equiv_hypothesis}
Given two hypothesis classes $\mathcal{H}_1$ and $\mathcal{H}_2$ and a sample space $\mathcal{X}$, we say that $\mathcal{H}_1$ and $\mathcal{H}_2$ are equivalent with respect to $\mathcal{X}$ if the following conditions hold:
\begin{enumerate}
    \item For every $x \in \mathcal{X}$ and every $h \in \mathcal{H}_1$, there exists $\tilde{h} \in \mathcal{H}_2$ such that $R_{SPO}(h(x)) = R_{SPO}(\tilde{h}(x))$.
    \item For every $x \in \mathcal{X}$ and every $h \in \mathcal{H}_2$, there exists $\tilde{h} \in \mathcal{H}_1$ such that $R_{SPO}(h(x)) = R_{SPO}(\tilde{h}(x))$.
\end{enumerate}
\end{definition}

The definition of an equivalent hypothesis class essentially states that for every hypothesis in the original class, there exists a corresponding hypothesis in the transformed class with the same SPO risk. Consequently, replacing a given hypothesis class with an equivalent one does not alter the fundamental optimization problem of identifying the hypothesis that minimizes the SPO risk.

Now we state the procedure of pre-processing and how it works: First, for any $\mathcal{S}= \text{conv}\{v_1,\dots,v_k\}$, we can calculate non-degenerate cost vectors $c_1,\dots c_k$ with respect to each cone, where $c_i$ is any vector in the interior of $\mathcal{K}_i$, let $\eta = \min_{i=1,...,k} \frac{\nu_{\mathcal{S}}(c_i)}{\|c_i\|}$. We then pass the original hypothesis class into Algorithm~\ref{alg:aug-hypothesis} to obtain the augmented class $\tilde{\mathcal{H}}$. Proposition \ref{prop:equiv_hypothsis} shows that $\tilde{\mathcal{H}}$ is an equivalent hypothesis class with the desired non-degeneracy properties.

\begin{algorithm}
        \caption{Pre-processing: Augmented Hypothesis Class}\label{alg:aug-hypothesis}
    \begin{algorithmic}
		\STATE \textbf{Input:} Hypothesis class $\mathcal{H}$, non-degenerate cost vectors $c_1,...,c_k$
    \STATE $\tilde{\mathcal{H}} \gets \mathcal{H}$
        \FOR{$i = 1,...,n$}
            \FOR{$j = 1,...,|\mathcal{H}|$}
             
            \IF{$\frac{\nu_{\mathcal{S}}(h_j(x_i))}{\|h_j(x_i)\|}<\eta$}
            \STATE Let $\tilde{h}_j(x) \gets \begin{cases}
			c, &\text{if $x = x_i$}\\
			h_j(x), & \text{otherwise},
		\end{cases}$
        
        where $c\in\{c_1,...,c_k\}$ is the non-degenerate cost vector corresponding to the cone $\mathcal{K}$ which contains $ h_j(x_i)$
        \ENDIF
        
            \ENDFOR
        \ENDFOR

        \STATE \textbf{Return} $\tilde{\mathcal{H}}$
    \end{algorithmic}
\end{algorithm}

\begin{proposition}
\label{prop:equiv_hypothsis} Given the set of possible designs $\mathcal{X}$, for any hypothesis prediction model class $\mathcal{H}$, let $\tilde{\mathcal{H}}$ be the pre-processed class output by Algorithm \ref{alg:aug-hypothesis} after computing non-degenerate cost vectors $c_1,...,c_k$ and $\eta = \min_{i=1,...,k} \frac{\nu_{\mathcal{S}}(c_i)}{\|c_i\|}$, the following two arguments hold:
    \begin{enumerate}
        \item Assumption \ref{assumption:tau_lower_bound} is satisfied with parameter $\eta$ under $\tilde{\mathcal{H}}$.
        \item $\mathcal{H}$ and $\tilde{\mathcal{H}}$ are equivalent hypothesis classes.
    \end{enumerate}
\end{proposition}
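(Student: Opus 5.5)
The proof handles the two parts of Proposition \ref{prop:equiv_hypothsis} separately. I read Algorithm \ref{alg:aug-hypothesis} as replacing each hypothesis $h_j$ by its modified version $\tilde h_j$, with all modifications for a fixed $j$ accumulated over the loop over designs $x_i$. So $\tilde h_j(x_i)=h_j(x_i)$ whenever $\nu_{\mathcal S}(h_j(x_i))/\|h_j(x_i)\|\ge\eta$, and otherwise $\tilde h_j(x_i)=c_\ell$, where $\mathcal K_\ell$ is the cone containing $h_j(x_i)$. The tie-breaking rule from Section \ref{sec:setting} makes the partition $\bbR^d=\cup_\ell\mathcal K_\ell$ well defined, so $\ell$ is unique.

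\emph{Part 1 (Assumption \ref{assumption:tau_lower_bound} holds with $\eta$).} Fix $\tilde h_j\in\tilde{\mathcal H}$ and $x_i\in\mathcal X$. There are two cases.
\begin{itemize}
\item If $x_i$ was not modified, then by the test in the algorithm $\nu_{\mathcal S}(\tilde h_j(x_i))/\|\tilde h_j(x_i)\|=\nu_{\mathcal S}(h_j(x_i))/\|h_j(x_i)\|\ge\eta$.
\item If $x_i$ was modified, then $\tilde h_j(x_i)=c_\ell$ for some $\ell$. By the definition $\eta=\min_{\ell'}\nu_{\mathcal S}(c_{\ell'})/\|c_{\ell'}\|$ we get $\nu_{\mathcal S}(c_\ell)/\|c_\ell\|\ge\eta$.
\end{itemize}
We also need $\eta>0$, so that the assumption is not vacuous. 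Each $c_\ell$ lies in the interior of $\mathcal K_\ell$, so $\nu_{\mathcal S}(c_\ell)>0$. To justify this, I would note that the degenerate set $\mathcal C^o$ is exactly the union of the cone boundaries. The set $\mathcal C^o$ is closed, being a finite union of hyperplane pieces $\{c: c^\top v_a=c^\top v_b\}$ intersected with closed cones. An interior point is therefore at positive distance from it. The main delicacy is this identification of $\mathcal C^o$ with the cone boundaries, together with the interior being nonempty. I would cite the normal-fan structure of a polytope, under which the normal cones of the vertices are full-dimensional and their boundaries are exactly where the minimizer is non-unique.

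\emph{Part 2 (equivalence).} Definition \ref{def:equiv_hypothesis} is stated pointwise, with $R_{SPO}(h(x))$. I interpret this as the conditional decision risk $\bbE[\lspo(h(x),c)\mid x]$, which depends on $h(x)$ only through $w^*(h(x))$. The key observation is that $w^*(\tilde h_j(x_i))=w^*(h_j(x_i))$ for every $i$ and $j$. This is trivial at unmodified points. At modified points, $h_j(x_i)$ and $c_\ell$ lie in the same cone $\mathcal K_\ell$, and by the definition of $\mathcal K_\ell$ both have the unique minimizer $v_\ell$. Hence $\lspo(\tilde h_j(x_i),c)=\lspo(h_j(x_i),c)$ for all $c$, which gives equal conditional risk at each $x$ and, after integrating over $\mu$, equal $\rspo$.

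The map $h_j\mapsto\tilde h_j$ then supplies the required partner in both directions: for $h\in\mathcal H$ take $\tilde h$, and for $\tilde h\in\tilde{\mathcal H}$ take the $h$ it was built from. If one instead reads the algorithm as $\tilde{\mathcal H}\supseteq\mathcal H$, with the modified copies added, then Part 2 is unaffected. Part 1, however, would require discarding the unmodified originals. I would therefore fix the replacement reading explicitly at the start of the proof.
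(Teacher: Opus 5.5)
Your proof is correct and matches the paper's argument. For Part 1 it uses the same two-case check: unmodified predictions pass the algorithm's test, and replaced ones equal some $c_\ell$ whose ratio is at least $\eta$ by definition of the minimum. For Part 2 it uses the same invariance $w^*(\tilde h_j(x_i))=w^*(h_j(x_i))$ within a cone, together with the natural correspondence $h_j\leftrightarrow\tilde h_j$.

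Your additions make explicit what the paper leaves implicit: you verify $\eta>0$ from the closedness of $\mathcal{C}^o$, and you commit to the replacement (not augmentation) reading of Algorithm~\ref{alg:aug-hypothesis}, without which Part 1 would fail. One small wording fix: when $h_j(x_i)$ is itself degenerate, $v_\ell$ is the tie-broken oracle output rather than a unique minimizer, but this still gives $w^*(h_j(x_i))=w^*(c_\ell)$.
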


Proposition~\ref{prop:equiv_hypothsis} implies that Assumption~\ref{assumption:tau_lower_bound} can be easily satisfied for any hypothesis class via the preprocessing step in Algorithm~\ref{alg:aug-hypothesis}.

\subsection{Examples of Optimality Gap}\label{sec:example}

In Section~\ref{sec: compare}, we show that the optimality gap (Assumption~\ref{assu:gap}) can be used to formally establish the superior performance of our decision-focused algorithm relative to the decision-blind benchmark. We also note that, empirically, our algorithm continues to outperform the decision-blind alternative even in the absence of such a gap, as demonstrated in Appendix~\ref{append:distribution_of_gap}.

In this section, we present examples illustrating the existence of the optimality gap in Assumption~\ref{assu:gap} without relying on the standard well-specification assumption, namely that the optimal predictor $h^*(x)$ equals the conditional expectation $\mathbb{E}[c \mid x]$. Our construction is based on three mild regularity conditions: (i) a relaxation of well-specification, (ii) a sensitivity condition for a particular experimental design, and (iii) a regularity condition on the prediction set and the polyhedral structure of the feasible decision space.

\begin{condition}\label{assu:risk_minimizer}
The minimum risk equals $\rspo(\bbE[c|\cdot])$, i.e., 
     $\min_{h\in \mathcal{H}}\rspo(h) = \bbE[\lspo(\bbE[c|x],c)]$.
\end{condition}
The above condition is a relaxed, well-specified condition on the hypothesis class. It is less restrictive than assuming $h^*(x) = \bbE[c \mid x]$, since $\ell_{\mathrm{SPO}}(\alpha\, \bbE[c \mid x], c) = \ell_{\mathrm{SPO}}(\bbE[c \mid x], c)$ for any $\alpha > 0$. The intuition is that any predictor proportional to the conditional expectation is also optimal under the SPO loss.

Under Condition \ref{assu:risk_minimizer}, for any design $\X_{(j)} \in \X$ we have 
$$
\begin{aligned}
\bbE[\lspo(h(x),c) - \lspo(h^*(x),c)|x = \mathrm{X}_{(j)}] 
&= \bbE[\lspo(h(x),c) - \lspo(\bbE[c|x],c)|x = \mathrm{X}_{(j)}] 
\\&= \bbE[c^\top \left(w^*(h(x)) - w^*(\bbE[c|x]\right)|x = \mathrm{X}_{(j)} ]
\\ &= \bbE[c^\top |x = \mathrm{X}_{(j)} ]\left(w^*(h(\mathrm{X}_{(j)})) - w^*(\bbE[c|\mathrm{X}_{(j)}])\right)\geq 0,
\end{aligned}
$$
where the last inequality used the fact that $w^*(\bbE[c|\mathrm{X}_{(j)}])$ is the minimizer of $\min_{w \in \mathcal{S}}(\bbE[c|\mathrm{X}_{(j)}])^\top w $. Hence, under condition \ref{assu:risk_minimizer}, the difference in the risk is always non-negative conditional on any $x \in \mathcal{X}$. This enables us to lower bound the difference of $\rspo$ and $\rspo^*$ by conditioning on any specific design $\mathrm{X}_{(i)} \in \X$. 

In order to further lower bound the difference of $\rspo$ and $\rspo^*$ condition on $\mathrm{X}_{(i)} \in \X$, we need conditions on a necessary condition of sub-optimal decisions. We first give the following definition.

Define $\Delta$ as the supremum over the norm of the set of vector $v$ such that $ \forall x \in \X$, $h^* \in \mathcal{H}^*$, its downstream optimizer $w^*(h^*(x)) = w^*\left(\frac{h^*(x)}{\|h^*(x)\|}\right)$ stays the same after perturbation $\frac{h^*(x)}{\|h^*(x)\|} \gets \frac{h^*(x)}{\|h^*(x)\|} + v$, i.e.,

$$    \Delta := \left\{ \sup_{v \in \bbR^d:\|v\|\le 2}: \|v\|~\Bigg|~ w^*\left(\frac{h^*(x)}{\|h^*(x)\|} + v\right) = w^*(h^*(x)),~~ \forall x\in\mathcal{X},\forall h^*\in H^* \textnormal{} \right\},$$

Intuitively, $\Delta$ serves as the uniform upper bound on the magnitude for which it is safe to change the prediction without changing the output decision.  
Now we can state a necessary condition for making a suboptimal decision. 

\begin{condition}
    \label{assu:x_delta}
    There exists $\mathrm{X}_{(i)} \in \mathcal{X}$ with $ \underline{\mu}:= \mu(\mathrm{X}_{(i)})>0$ such that 
     
    $$\left\|\frac{h^*(\mathrm{X}_{(i)})}{\|h^*(\mathrm{X}_{(i)})\|} - \frac{h'(\mathrm{X}_{(i)})}{\|h'(\mathrm{X}_{(i)})\|} \right\| > \Delta, \quad \text{ for all }h' \in \mathcal{H}\setminus \mathcal{H}^*, h^* \in \mathcal{H}^*. $$
\end{condition}

Condition~\ref{assu:x_delta} states that there exists a design $\mathrm{X}_{(j)} \in \mathcal{X}$ with strictly positive probability mass $\underline{\mu}$ such that a suboptimal prediction leads to a suboptimal downstream decision. Combined with Condition~\ref{assu:risk_minimizer}, this allows us to lower bound $\rspo - \rspo^*$ by the objective difference induced by a suboptimal decision. To translate this decision-level distinction into a gap in SPO risk, we impose the third condition.

\begin{condition}
    \label{assu:risk_gap}
    Suppose that the norm of the conditional expectation $\bbE[c \mid x]$ is bounded below by a positive constant $\underline{c}$, that is, $\|\bbE[c \mid x]\| \ge \underline{c}$ for all $x$. Recall the the convex hull representation of $\mathcal{S} $ is $ \mathcal{S} = \text{conv}\{v_1,\dots,v_k\}$, define the minimum distance between vertices of $\mathcal{S}$ as
    $\underline{\gamma} := \min_{i,j \in \{1,\dots,k\},i \neq j} \|v_i - v_j\|$ and suppose $ \underline{\gamma}>0$.
\end{condition}

Condition~\ref{assu:risk_gap} requires that neither the norm of the true conditional expectation nor the minimum distance between vertices of $\mathcal{S}$ is arbitrarily close to zero. Together, Conditions~\ref{assu:risk_minimizer}, \ref{assu:x_delta}, and \ref{assu:risk_gap} serve as mild regularity assumptions that rule out pathological or irregular distributions and are commonly satisfied in practice. We are now ready to present an example that shows how to construct an optimality gap under these conditions.

\begin{proposition}
\label{prop:gap_example}
Under Conditions~\ref{assu:risk_minimizer}, \ref{assu:x_delta}, and \ref{assu:risk_gap}, the optimality gap in Assumption~\ref{assu:gap} holds with $\alpha = \underline{\mu}\,\eta\,\underline{c}\,\underline{\gamma}$.
\end{proposition}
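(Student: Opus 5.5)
Fix any $h'\in\mathcal{H}\setminus\mathcal{H}^*$ and $h^*\in\mathcal{H}^*$. The plan is to lower bound $\rspo(h')-\rspo^*$ by the contribution of the single design $\mathrm{X}_{(i)}$ from Condition~\ref{assu:x_delta}, and then bound that contribution using Conditions~\ref{assu:risk_gap} and Assumption~\ref{assumption:tau_lower_bound}.

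\textbf{Step 1: reduce to one design.} Using Condition~\ref{assu:risk_minimizer} and the display preceding Condition~\ref{assu:x_delta}, the conditional excess risk is nonnegative at every design. Therefore
\[
\rspo(h')-\rspo^* \;\ge\; \underline{\mu}\,\bar c^\top\bigl(w^*(h'(\mathrm{X}_{(i)}))-w^*(\bar c)\bigr),
\qquad \bar c:=\bbE[c\mid x=\mathrm{X}_{(i)}].
\]

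\textbf{Step 2: the decisions differ.} Let $u^*$ and $u'$ denote the normalized directions of $h^*(\mathrm{X}_{(i)})$ and $h'(\mathrm{X}_{(i)})$. Set $v=u'-u^*$, so $\|v\|\le 2$. By Condition~\ref{assu:x_delta}, $\|v\|>\Delta$. By the definition of $\Delta$ as a supremum, some perturbation of norm $\|v\|$ must change the decision, and I want to conclude that this particular $v$ does, i.e.\ $w^*(h')\neq w^*(h^*)$. Here $\Delta$ is read as the largest radius for which every perturbation preserves the decision; if needed, I would make this reading explicit. In addition, $w^*(h^*(\mathrm{X}_{(i)}))=w^*(\bar c)$: since $\bar c$ attains the minimum risk by Condition~\ref{assu:risk_minimizer}, we may take $h^*$ to induce the same decision as $\bar c$ at each design. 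Hence $w^*(h'(\mathrm{X}_{(i)}))$ is a vertex $v_a$ different from $w^*(\bar c)=v_b$.

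\textbf{Step 3: quantify the objective gap.} This is the main obstacle. The goal is
\[
\bar c^\top(v_a-v_b)\ \ge\ \eta\,\|\bar c\|\,\|v_a-v_b\|\ \ge\ \eta\,\underline c\,\underline\gamma .
\]
The argument uses the margin. Since $\bar c$ (equivalently $h^*$, up to direction) lies in the normal cone of $v_b$ at angular distance at least $\arcsin\eta$ from its boundary, the perturbation $\bar c-\nu\,\frac{v_a-v_b}{\|v_a-v_b\|}$ keeps $v_b$ optimal for every $\nu<\nu_\mathcal{S}(\bar c)$. Taking $\nu\to\nu_\mathcal{S}(\bar c)$ gives $\bigl(\bar c-\nu\tfrac{v_a-v_b}{\|v_a-v_b\|}\bigr)^\top(v_a-v_b)\ge0$, so
\[
\bar c^\top(v_a-v_b)\ \ge\ \nu_\mathcal{S}(\bar c)\,\|v_a-v_b\|\ \ge\ \eta\|\bar c\|\,\underline\gamma .
\]
The delicate point is justifying $\nu_\mathcal{S}(\bar c)\ge\eta\|\bar c\|$. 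Assumption~\ref{assumption:tau_lower_bound} is stated for predictions $h(x)$, not for $\bar c$. I would first try to transfer it via scale invariance, using a predictor in $\mathcal{H}^*$ that shares the direction of $\bar c$ (optimal by Condition~\ref{assu:risk_minimizer}). Failing that, I would invoke Proposition~\ref{prop:equiv_hypothsis} so that the relevant representative is one of the preprocessed vectors $c_j$ with margin at least $\eta$. Finally, using $\|\bar c\|\ge\underline c$ and combining with Step 1 yields $\alpha=\underline\mu\,\eta\,\underline c\,\underline\gamma$.
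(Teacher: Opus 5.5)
\textbf{Step 3 has a genuine gap: the margin of $\bar c$.} Your Step 1 matches the paper's reduction to $\mathrm{X}_{(i)}$. Your perturbation argument for $\bar c^\top(v_a-v_b)\ge\nu_{\mathcal{S}}(\bar c)\,\|v_a-v_b\|$ is correct and is the content of Equation (11) of El Balghiti et al., which the paper cites. The failure is exactly at the point you flag: you need $\nu_{\mathcal{S}}(\bar c)\ge\eta\|\bar c\|$, and neither of your fixes supplies it.

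Condition~\ref{assu:risk_minimizer} only says that some $h^*\in\mathcal{H}^*$ induces decisions optimal for $\bar c$. It does not place a positive multiple of $\bar c$ in $\mathcal{H}^*$; that is precisely what the condition is advertised as relaxing. So scale invariance transfers nothing. Algorithm~\ref{alg:aug-hypothesis} rewrites predictions, whereas $\bbE[c\mid x]$ is fixed by $\mathcal{D}$.

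In fact the bound cannot be derived from the stated hypotheses. Take the following instance:
\begin{itemize}
\item $\mathcal{S}=\mathrm{conv}\{(0,0),(1,0)\}\subset\bbR^2$, so $\mathcal{C}^o=\{c: c_1=0\}$, $\nu_{\mathcal{S}}(c)=|c_1|$, and $\underline\gamma=1$;
\item a single design with $\mu=1$;
\item $\mathcal{H}=\{h_1,h_2\}$ with $h_1(x)=(1,0)$ and $h_2(x)=(-1,0)$, so Assumption~\ref{assumption:tau_lower_bound} holds with $\eta=1$;
\item $\bbE[c\mid x]=(\epsilon,1)$ with $0<\epsilon<1$.
\end{itemize}
All hypotheses hold:
\begin{itemize}
\item $h_1$ induces $(0,0)=w^*(\bbE[c\mid x])$, so Condition~\ref{assu:risk_minimizer} holds.
\item Condition~\ref{assu:risk_gap} holds with $\underline c=1$.
\item The two decisions differ, and $\|h_1(x)-h_2(x)\|=2>1$, where $1$ is $\Delta$ under your safe-radius reading.
\end{itemize}
Yet $\rspo(h_2)-\rspo(h_1)=(\epsilon,1)^\top(1,0)=\epsilon<1=\underline\mu\,\eta\,\underline c\,\underline\gamma$. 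The quantity actually governing the gap here is $\nu_{\mathcal{S}}(\bar c)/\|\bar c\|=\epsilon/\sqrt{1+\epsilon^2}$.

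The paper's proof has the same hole. ``Condition~\ref{assu:risk_gap} and Equation (11)'' give only a bound in terms of $\nu_{\mathcal{S}}(\bar c)$, which is then silently replaced by $\eta\|\bar c\|$. A correct statement needs an extra hypothesis. One option is $\nu_{\mathcal{S}}(\bbE[c\mid\mathrm{X}_{(i)}])\ge\eta\,\|\bbE[c\mid\mathrm{X}_{(i)}]\|$. Another is some $h^*\in\mathcal{H}^*$ with $h^*(\mathrm{X}_{(i)})$ a positive multiple of $\bbE[c\mid\mathrm{X}_{(i)}]$. Under the second, your first fix works, because $\mathcal{C}^o$ is a cone and hence $\nu_{\mathcal{S}}$ is positively homogeneous. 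Either hypothesis also makes $\bar c$ nondegenerate. You need that anyway to identify $w^*(h^*(\mathrm{X}_{(i)}))$ with $w^*(\bar c)$; Condition~\ref{assu:risk_minimizer} alone gives only equality of objective values, $\bar c^\top w^*(h^*(\mathrm{X}_{(i)}))=\bar c^\top w^*(\bar c)$.

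\textbf{Step 2 also fails under the reading you adopt.} If $\Delta$ is the uniform safe radius, then $\|u'-u^*\|>\Delta$ says only that \emph{some} perturbation of that size changes \emph{some} decision. It does not say that $u'-u^*$ changes the decision at $\mathrm{X}_{(i)}$: $u'$ can sit in the same normal cone as $u^*$ on a side where that cone is wide. Suppose instead you read $\Delta$ at $\mathrm{X}_{(i)}$ as an upper bound on the norm of every decision-preserving perturbation. Then the implication holds, but $v=2u^*$ is decision-preserving, so $\Delta=2$ and the strict inequality in Condition~\ref{assu:x_delta} can never be met. What the argument really uses is $w^*(h'(\mathrm{X}_{(i)}))\neq w^*(h^*(\mathrm{X}_{(i)}))$ for all $h'\in\mathcal{H}\setminus\mathcal{H}^*$. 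The paper's text glosses Condition~\ref{assu:x_delta} as exactly this, and its proof uses it without derivation. Take that as the operative hypothesis rather than trying to extract it from $\Delta$.
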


Proposition~\ref{prop:gap_example} provides a concrete example ensuring the existence of a non–arbitrarily small optimality gap. Together with the analysis in Section~\ref{sec: compare}, this optimality gap can be used to establish the superiority of our algorithm over the benchmark methods.

\section{Numerical Study}\label{sec:numerical}

To illustrate the effectiveness of our approach, we conducted two sets of numerical studies: one on the cost-sensitive classification problem and another on the LLM resource allocation problem. We benchmarked:
\begin{enumerate}
    \item \textbf{(Our Method)} Directional uncertainty with SPO loss-based model elimination/training
    \item $\ell_2$ uncertainty with SPO loss-based model elimination/training
    \item Directional uncertainty with $\ell_2$ loss-based model elimination/training
    \item $\ell_2$ uncertainty with $\ell_2$ loss-based model elimination/training
\end{enumerate}
Methods~1 and~2 use the SPO loss to shrink the hypothesis set, whereas Methods~3 and~4 rely on the squared $\ell_2$ loss. The contrast between Methods~1 and~3 versus Methods~2 and~4 isolates the impact of using directional uncertainty under a fixed loss criterion.

\subsection{Cost-Sensitive Classification}
\label{sec:cardio}

The cost-sensitive multiclassification problem can be formulated as a contextual stochastic linear optimization problem, as detailed in Appendix \ref{sec:cost_sensitive},. We now apply this framework to a modern clinical diagnosis setting, where a predictive model is trained to determine whether a patient has a disease based on a dataset containing objective, examination, and subjective features. If the model can accurately predict disease presence, the cost of performing precise diagnostic procedures can be avoided. At the same time, the cost-sensitive formulation explicitly accounts for the asymmetry between false negatives and false positives, thereby aligning the learning objective with the operational goal of minimizing overall diagnostic costs.

We consider an experimental design setting in clinical diagnosis, where controlled experiments can be conducted on different patients to obtain their true health conditions. However, performing precise diagnostic tests is often costly. Therefore, we aim to develop an efficient data collection pipeline that enables accurate model training while minimizing experimental costs.

Specifically, we used the cardiovascular disease dataset from Kaggle\footnote{\href{https://www.kaggle.com/datasets/sulianova/cardiovascular-disease-dataset}{https://www.kaggle.com/datasets/sulianova/cardiovascular-disease-dataset}. This dataset has been extensively studied in medical research; see, for example, \cite{sadr2024cardiovascular}.}. In cardiovascular diagnosis, the conventional approach involves conducting an MRI test, which is precise but costly. Alternatively, a predictive model can serve as a clinical decision-support tool by estimating the likelihood of disease presence based solely on readily available patient features.

The dataset contains more than 60,000 patient records, each indicating whether the individual has cardiovascular disease. To construct the candidate hypothesis set $\mathcal{H}$, we randomly sampled $N \in \{10, 25, 50, 100, 10000\}$ data points from the dataset and minimized the SPO$+$ loss to obtain five linear predictors (linear mappings from features to labels). Each minimization was performed using stochastic gradient descent with a step size of $5 \times 10^{-3}$, a batch size of 256, and 12 epochs. In this manner, we obtained five linear predictor hypotheses, where the first four are regarded as suboptimal and the fifth as the best-performing hypothesis. 

To align with our experimental design framework, we employed the K-means clustering algorithm to identify clusters that represent distinct patient types based on empirical density. The number of clusters was set to 15. In each iteration of our algorithm, we consider the experimental design to be selecting a sample from one of the cluster centers and drew its corresponding true label from the same cluster.

Finally, to evaluate the effectiveness of our algorithm relative to the benchmark methods, we conducted 100 trials for each method and computed the mean SPO loss on the testing set. An example illustrating the testing set loss trajectories with 95\% confidence intervals of our algorithm and the benchmarks is shown in Figure \ref{fig:cardio_1}, where the \textit{cost ratio} refers to the ratio between unit false negative cost and false positive cost, and is set to 3 in the basic setting.

Another important consideration is how model performance changes when the objective value becomes more asymmetric in the cost of misclassification. In the cost-sensitive classification problem, this sensitivity is directly linked to the cost ratio. When the ratio is low, making accurate predictions aligns closely with making optimal decisions; when the ratio is high, this alignment weakens, and the objective becomes more sensitive to decision quality. To examine this effect, we varied the cost ratio and plotted the testing set loss ratio of our methods relative to decision-blind benchmarks after 100 iterations. Figure \ref{fig:loss_to_costratio} presents the test-set loss ratio with 95\% confidence intervals across cost ratios ranging from 1 to 15.

\begin{figure}[htbp]
  \centering
   
  \begin{minipage}[t]{0.46\textwidth}
    \centering
    \includegraphics[width=\linewidth]{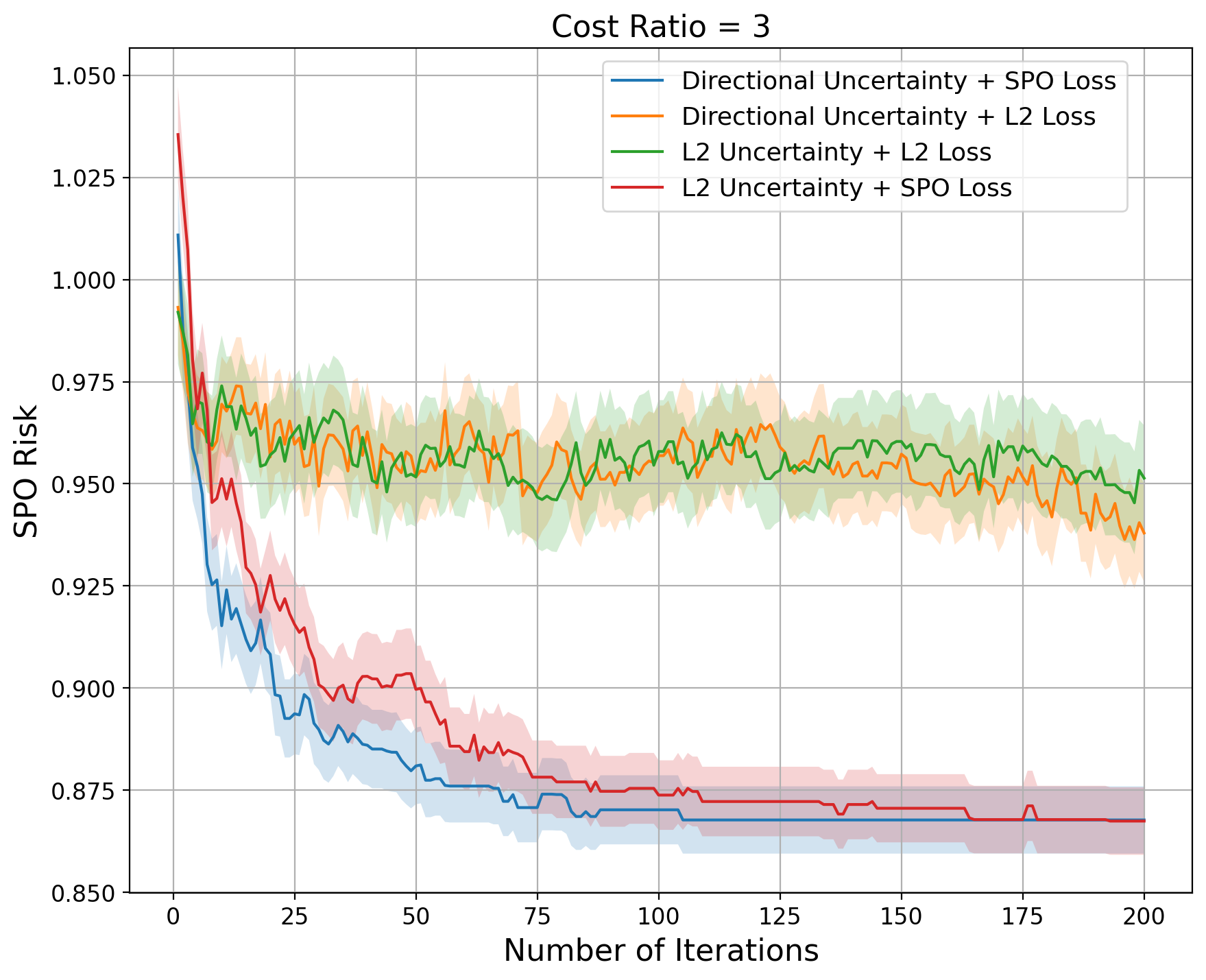}
     
    \caption{Cost-Sensitive Classification}
    \label{fig:cardio_1}
  \end{minipage}
  \hfill                 
  \begin{minipage}[t]{0.46\textwidth}
    \centering 
    \includegraphics[width=1\linewidth]{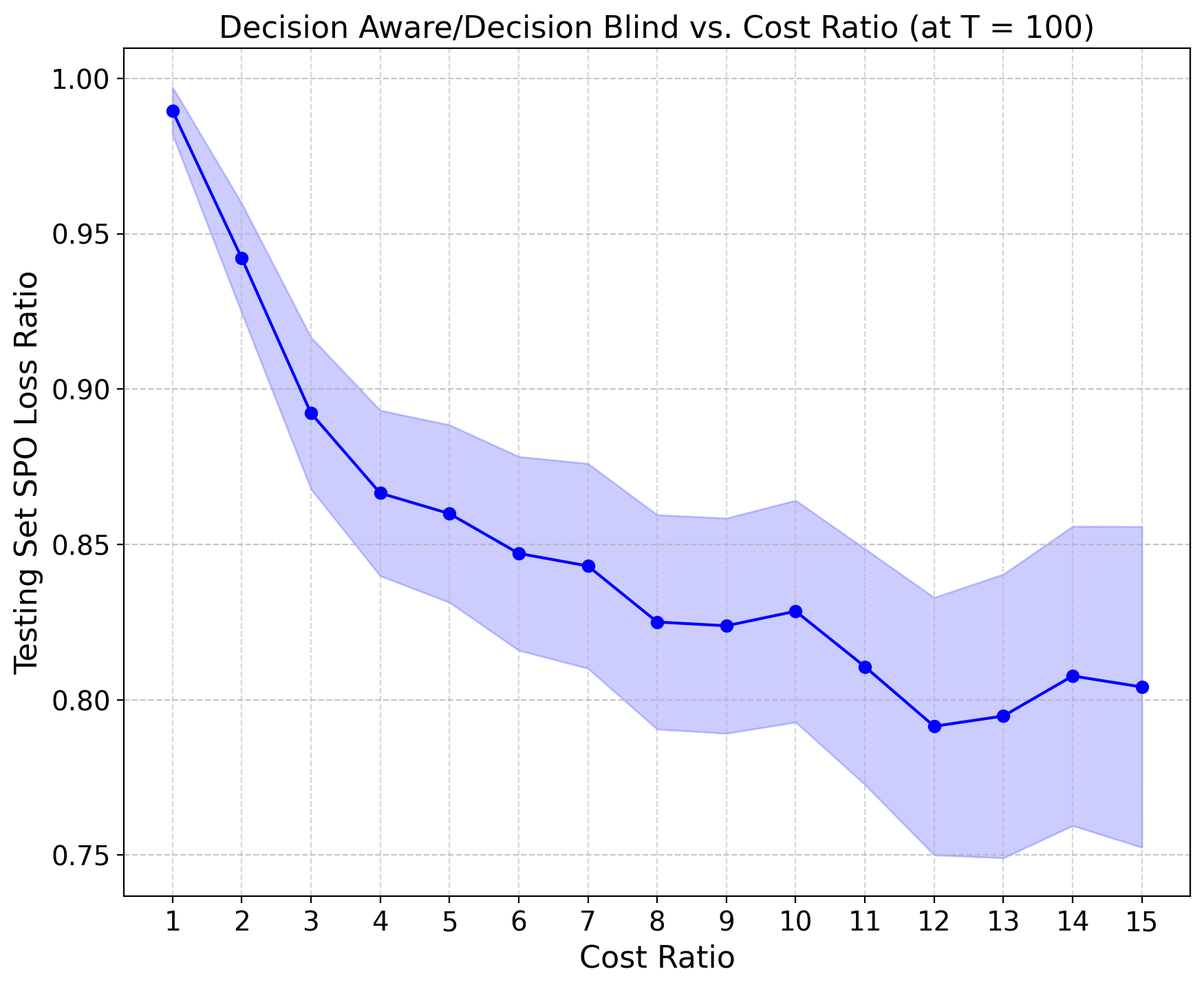}
    \caption{SPO Loss Ratio vs. Cost Ratio} 
    \label{fig:loss_to_costratio}
  \end{minipage}
\end{figure}

Figure \ref{fig:cardio_1} reveals the following key insights:
\begin{itemize}
    \item Using the SPO loss as the elimination criterion leads to a substantial reduction in loss. This can be observed by comparing the directional-uncertainty-based and $\ell_2$-uncertainty-based algorithms trained with $\ell_2$ loss against their counterparts trained with $\lspo$ loss.
    \item Incorporating directional uncertainty yields a moderate improvement, as shown by comparing the directional-uncertainty-based algorithms with the $\ell_2$-uncertainty-based algorithms under both $\ell_2$ and $\lspo$ loss criteria.
     
\end{itemize}

On the other hand, Figure \ref{fig:loss_to_costratio} showed that, as the cost ratio increases, the performance advantage of our decision-focused method over the decision-blind benchmarks grows correspondingly.

These results demonstrate that incorporating directional uncertainty provides consistent improvements, while minimizing $\lspo$ better aligns the learning objective with decision-making performance. Further experimental details and results are provided in Appendix \ref{appendix:cost_sensitive}.

\subsection{LLM Assignment Problem}

In Section \ref{sec:intro_example}, we demonstrated that the sequential design can also be applied to an LLM job assignment problem. Following this formulation, we conducted an experiment using collected web text data and incorporated human evaluation.

We consider a setting where text generation tasks for an input from a specific category are assigned to different LLMs. We assumed there is a consistent quality score measure for these generated texts, and our objective is to maximize this score for each generated text (see evaluation rubrics in Appendix \ref{appendix:llm}.). We also assumed the text generation tasks are in a parallel setting where the tasks are not sequentially dependent, which allows us to adopt the formulation in Section \ref{sec:intro_example}. We assume the hypotheses are mappings from text categories to task scores. In this way, by predicting task performance for any given text category, we can solve the assignment problem and determine an optimal allocation of the models.

In our experiment setting, we considered five tasks: text summarization, text extension, text translation, bullet point generation, and sentiment analysis. These tasks cover two main categories: condensing tasks, which reduce or analyze texts, and generative tasks, which create new text beyond the given texts. Condensing tasks are more common tasks while generative tasks requires more model knowledge and capability.

Regarding the LLMs, we used ChatGPT5, Gemma3, Qwen1.5, and Deepseek-llm. Among these models, ChatGPT5 represents the more advanced but expensive models, while the other three represent older, less powerful, and more accessible models. Reflecting this categorization, we imposed capacity constraints: ChatGPT5 was limited to a single use, while the other three models could each be used up to two times.

To collect and label data, we gathered 50 short texts from five diverse categories: Wikipedia mathematics, Wikipedia animals, medical journal abstracts, UNC alumni news, and movie reviews. These categories were chosen to represent a wide range of text types, with different topics and tones. The generated output for each text was then assessed by human evaluators using a 1-to-5 quality scale (1=poor, 5=perfect) based on a consistent criterion. The summary of the task scores is provided in Table \ref{table:average_scores} in Appendix \ref{appendix:llm}.

In the experiment, each prediction model was constructed by drawing bootstrap samples of size $N$ from each category (with replacement) and taking their averages. The candidate hypothesis class $\mathcal{H}$ was then formed by varying $N \in \{1,3,5,7,10\}$. This resampling technique was used to overcome the problem of limited data, with scores calculated as the categorical sample means. For each method, we conducted 100 trials and computed the average SPO loss and $\ell_2$ loss on the testing set, which comprises 20\% of the dataset. The results are shown in Figure \ref{fig:llm_spo} and  Figure \ref{fig:llm_l2}.

\begin{figure}[htbp]
 
  \centering
   
  \begin{minipage}[t]{0.46\textwidth}
    \centering
    \includegraphics[width=\linewidth]{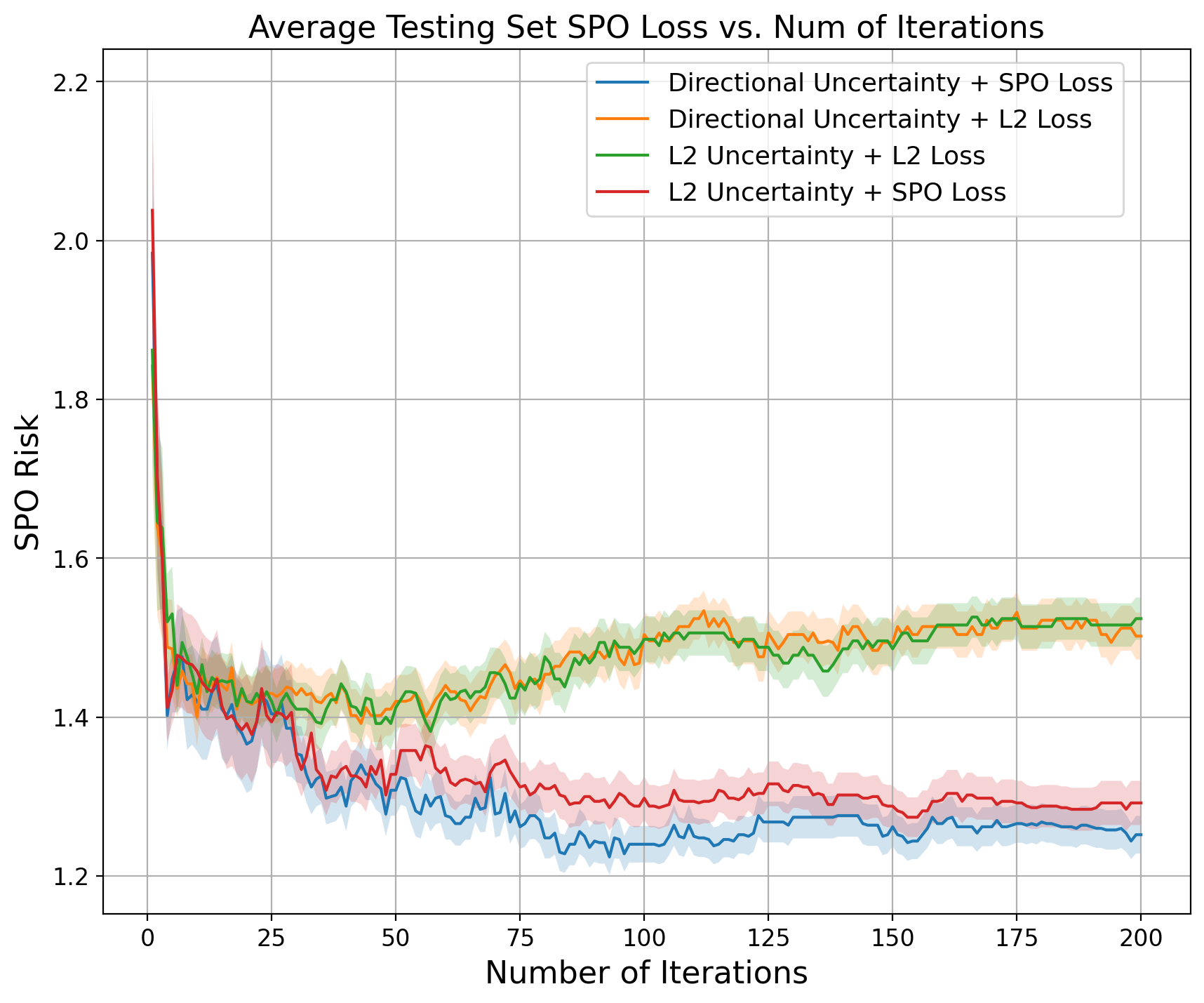}
    \caption{LLM Allocation Problem SPO Loss}
    \label{fig:llm_spo}
  \end{minipage}
  \hfill                 
  \begin{minipage}[t]{0.46\textwidth}
    \centering 
    \includegraphics[width=1\linewidth]{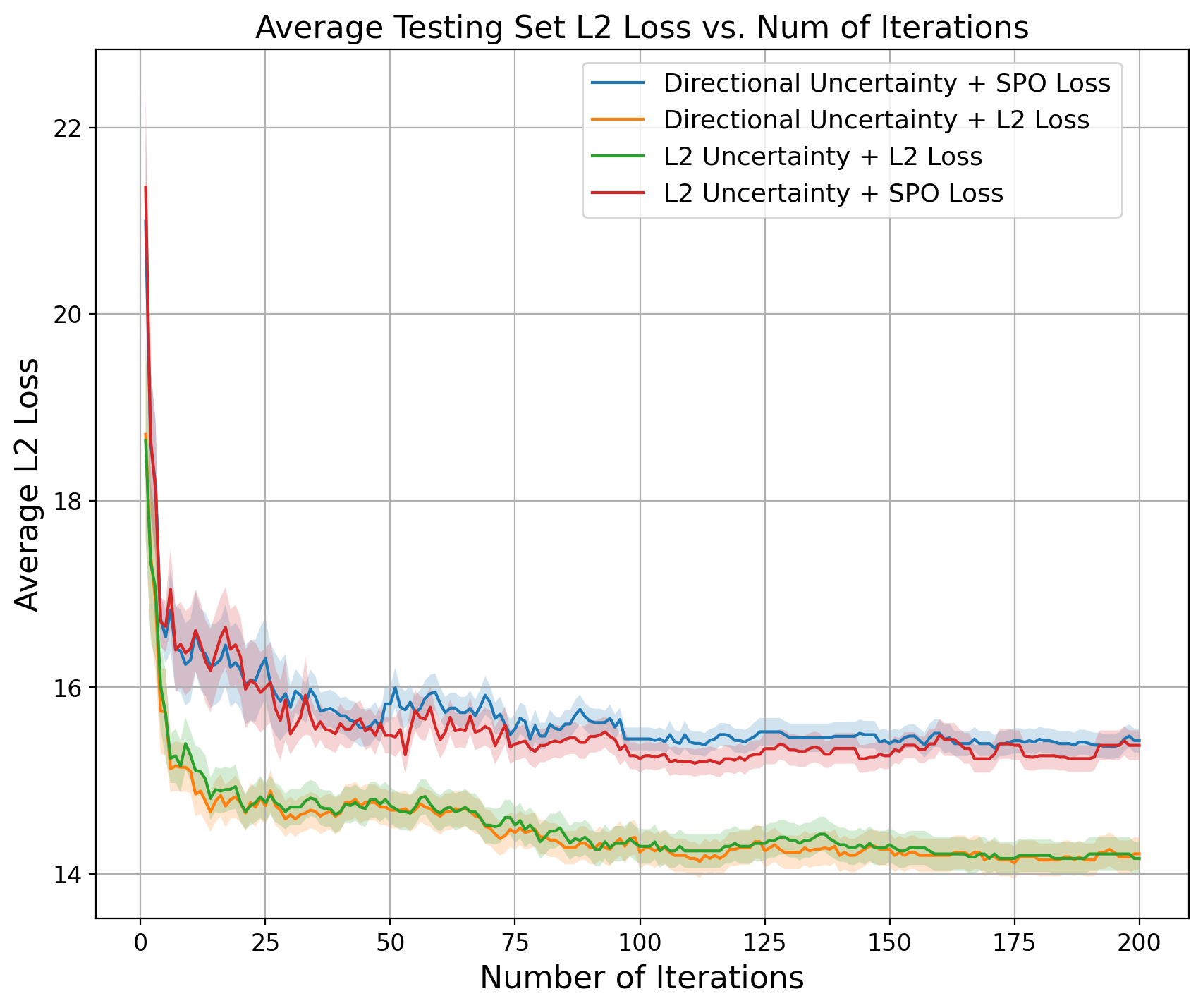}
    \caption{LLM Allocation Problem L2 Loss} 
    \label{fig:llm_l2}
  \end{minipage}
\end{figure}

\vskip -1cm 
For decision quality, the results in Figure \ref{fig:llm_spo} are consistent with the cost-sensitive classification experiment. Our decision-focused algorithm with directional uncertainties outperforms other benchmarks. Moreover, Figure \ref{fig:llm_l2} further highlights the fundamental mismatch between prediction and decision objectives. Although benchmark decision blind methods perform well on the $\ell_2$ prediction metric, they fail to converge to a predictor that induces the optimal decision. 

These results from a real-world experiment demonstrate the strength of our algorithm and provide a concrete example of the mismatch induced by decision-blind algorithms. More details on the experimental setup are provided in Appendix \ref{appendix:llm}.

\section{Conclusion}\label{sec:conclude}

In this paper, to address the misalignment between decision loss and traditional experimental design criteria based solely on prediction uncertainty, we propose a new uncertainty measure that captures the \emph{directional} uncertainty of predictions. This criterion leads to a theoretically earlier stopping time compared with decision-blind design methods. Real-world numerical experiments further demonstrate the advantages of our approach over benchmark algorithms. This directional uncertainty-based criterion provides new insights into quantifying uncertainty in the predict-then-optimize paradigm for future studies.

\bibliographystyle{informs2014}
\bibliography{reference}

@inproceedings{mandi2020smart,
  title={Smart predict-and-optimize for hard combinatorial optimization problems},
  author={Mandi, Jayanta and Stuckey, Peter J and Guns, Tias and others},
  booktitle={Proceedings of the AAAI conference on artificial intelligence},
  volume={34},
  number={02},
  pages={1603--1610},
  year={2020}
}

@article{el2022generalization,
  title={Generalization bounds in the predict-then-optimize framework},
  author={El Balghiti, Othman and Elmachtoub, Adam N and Grigas, Paul and Tewari, Ambuj},
  journal={Mathematics of Operations Research},
  year={2022}
}

@inproceedings{castro2005faster,
  title={Faster rates in regression via active learning},
  author={Castro, Rui and Willett, Rebecca and Nowak, Robert},
  booktitle={NIPS},
  volume={18},
  pages={179--186},
  year={2005}
}

@article{sugiyama2009pool,
  title={Pool-based active learning in approximate linear regression},
  author={Sugiyama, Masashi and Nakajima, Shinichi},
  journal={Machine Learning},
  volume={75},
  number={3},
  pages={249--274},
  year={2009},
  publisher={Springer}
}

@article{cai2016batch,
  title={Batch mode active learning for regression with expected model change},
  author={Cai, Wenbin and Zhang, Muhan and Zhang, Ya},
  journal={IEEE transactions on neural networks and learning systems},
  volume={28},
  number={7},
  pages={1668--1681},
  year={2016},
  publisher={IEEE}
}

@inproceedings{beygelzimer2009importance,
  title={Importance weighted active learning},
  author={Beygelzimer, Alina and Dasgupta, Sanjoy and Langford, John},
  booktitle={Proceedings of the 26th annual international conference on machine learning},
  pages={49--56},
  year={2009}
}

@article{ho2020risk,
  title={Risk guarantees for end-to-end prediction and optimization processes},
  author={Ho-Nguyen, Nam and K{\i}l{\i}n{\c{c}}-Karzan, Fatma},
  journal={Management Science},
  year={2022},
  publisher={INFORMS}
}

@article{elmachtoub2022smart,
  title={Smart “predict, then optimize”},
  author={Elmachtoub, Adam N and Grigas, Paul},
  journal={Management Science},
  volume={68},
  number={1},
  pages={9--26},
  year={2022},
  publisher={INFORMS}
}

@article{bertsimas2020predictive,
  title={From predictive to prescriptive analytics},
  author={Bertsimas, Dimitris and Kallus, Nathan},
  journal={Management Science},
  volume={66},
  number={3},
  pages={1025--1044},
  year={2020},
  publisher={INFORMS}
}

@article{zhu2022joint,
  title={Joint estimation and robustness optimization},
  author={Zhu, Taozeng and Xie, Jingui and Sim, Melvyn},
  journal={Management Science},
  volume={68},
  number={3},
  pages={1659--1677},
  year={2022},
  publisher={INFORMS}
}

@article{donti2017task,
  title={Task-based end-to-end model learning in stochastic optimization},
  author={Donti, Priya and Amos, Brandon and Kolter, J Zico},
  journal={Advances in neural information processing systems},
  volume={30},
  year={2017}
}

@inproceedings{amos2017optnet,
  title={Optnet: Differentiable optimization as a layer in neural networks},
  author={Amos, Brandon and Kolter, J Zico},
  booktitle={International conference on machine learning},
  pages={136--145},
  year={2017},
  organization={PMLR}
}

@book{wainwright2019high,
  title={High-dimensional statistics: A non-asymptotic viewpoint},
  author={Wainwright, Martin J},
  volume={48},
  year={2019},
  publisher={Cambridge University Press}
}

@article{liu2023active,
  title={Active Learning in the Predict-then-Optimize Framework: A Margin-Based Approach},
  author={Liu, Mo and Grigas, Paul and Liu, Heyuan and Shen, Zuo-Jun Max},
  journal={arXiv preprint arXiv:2305.06584},
  year={2023}
}

@article{sadr2024cardiovascular,
  title={Cardiovascular disease diagnosis: a holistic approach using the integration of machine learning and deep learning models},
  author={Sadr, Hossein and Salari, Arsalan and Ashoobi, Mohammad Taghi and Nazari, Mojdeh},
  journal={European Journal of Medical Research},
  volume={29},
  number={1},
  pages={455},
  year={2024},
  publisher={Springer}
}

@article{zhao2025pigeonhole,
  title={Pigeonhole design: Balancing sequential experiments from an online matching perspective},
  author={Zhao, Jinglong and Zhou, Zijie},
  journal={Management Science},
  volume={71},
  number={3},
  pages={1889--1908},
  year={2025},
  publisher={INFORMS}
}

@article{li2025strong,
  title={Strong Formulations and Algorithms for Regularized A-optimal Design},
  author={Li, Yongchun},
  journal={arXiv preprint arXiv:2505.14957},
  year={2025}
}

@article{huang2024decision,
  title={Decision-focused learning with directional gradients},
  author={Huang, Michael and Gupta, Vishal},
  journal={Advances in Neural Information Processing Systems},
  volume={37},
  pages={79194--79220},
  year={2024}
}

@article{bennouna2025data,
  title={What Data Enables Optimal Decisions? An Exact Characterization for Linear Optimization},
  author={Bennouna, Omar and Bennouna, Amine and Amin, Saurabh and Ozdaglar, Asuman},
  journal={arXiv preprint arXiv:2505.21692},
  year={2025}
}

@article{er2025decision,
  title={Decision-Focused Bias Correction for Fluid Approximation},
  author={Er, Can and Liu, Mo},
  journal={arXiv preprint arXiv:2512.15726},
  year={2025}
}

@article{zhu2025power,
  title={The Power of One Change: Bayesian Sequential Experimentation under Limited Flexibility},
  author={Zhu, Yvonne Huijun and Feng, Yifan and Ni, Tu},
  journal={Available at SSRN},
  year={2025}
}

@article{cristian2025efficient,
  title={Efficient End-to-End Learning for Decision-Making: A Meta-Optimization Approach},
  author={Cristian, Rares and Harsha, Pavithra and Perakis, Georgia and Quanz, Brian},
  journal={arXiv preprint arXiv:2505.11360},
  year={2025}
}

@article{chung2022decision,
  title={Decision-aware learning for optimizing health supply chains},
  author={Chung, Tsai-Hsuan and Rostami, Vahid and Bastani, Hamsa and Bastani, Osbert},
  journal={arXiv preprint arXiv:2211.08507},
  year={2022}
}

@article{elmachtoub2023estimate,
  title={Estimate-then-optimize versus integrated-estimation-optimization: A stochastic dominance perspective},
  author={Elmachtoub, Adam N and Lam, Henry and Zhang, Haofeng and Zhao, Yunfan},
  journal={arXiv preprint arXiv:2304.06833},
  volume={107},
  year={2023}
}

@article{hu2022fast,
  title={Fast rates for contextual linear optimization},
  author={Hu, Yichun and Kallus, Nathan and Mao, Xiaojie},
  journal={Management Science},
  volume={68},
  number={6},
  pages={4236--4245},
  year={2022},
  publisher={INFORMS}
}

@article{anderer2022adaptive,
  title={Adaptive clinical trial designs with surrogates: When should we bother?},
  author={Anderer, Arielle and Bastani, Hamsa and Silberholz, John},
  journal={Management science},
  volume={68},
  number={3},
  pages={1982--2002},
  year={2022},
  publisher={INFORMS}
}

@article{ouyang2022training,
  title={Training language models to follow instructions with human feedback},
  author={Ouyang, Long and Wu, Jeffrey and Jiang, Xu and Almeida, Diogo and Wainwright, Carroll and Mishkin, Pamela and Zhang, Chong and Agarwal, Sandhini and Slama, Katarina and Ray, Alex and others},
  journal={Advances in neural information processing systems},
  volume={35},
  pages={27730--27744},
  year={2022}
}

@article{feng2025contextual,
  title={Contextual data-integrated newsvendor solution with operational data analytics (oda)},
  author={Feng, Qi and Shanthikumar, J George and Wu, Jian},
  journal={Management Science},
  year={2025},
  publisher={INFORMS}
}

@article{feng2022developing,
  title={Developing operations management data analytics},
  author={Feng, Qi and Shanthikumar, J George},
  journal={Production and Operations Management},
  volume={31},
  number={12},
  pages={4544--4557},
  year={2022},
  publisher={SAGE Publications Sage CA: Los Angeles, CA}
}

@article{mandi2024decision,
  title={Decision-focused learning: Foundations, state of the art, benchmark and future opportunities},
  author={Mandi, Jayanta and Kotary, James and Berden, Senne and Mulamba, Maxime and Bucarey, Victor and Guns, Tias and Fioretto, Ferdinando},
  journal={Journal of Artificial Intelligence Research},
  volume={80},
  pages={1623--1701},
  year={2024}
}

@article{sadana2025survey,
  title={A survey of contextual optimization methods for decision-making under uncertainty},
  author={Sadana, Utsav and Chenreddy, Abhilash and Delage, Erick and Forel, Alexandre and Frejinger, Emma and Vidal, Thibaut},
  journal={European Journal of Operational Research},
  volume={320},
  number={2},
  pages={271--289},
  year={2025},
  publisher={Elsevier}
}

@article{box1960some,
  title={Some new three level designs for the study of quantitative variables},
  author={Box, George EP and Behnken, Donald W},
  journal={Technometrics},
  volume={2},
  number={4},
  pages={455--475},
  year={1960},
  publisher={Taylor \& Francis}
}

@article{kiefer1959optimum,
  title={Optimum designs in regression problems},
  author={Kiefer, Jack and Wolfowitz, Jacob},
  journal={The annals of mathematical statistics},
  volume={30},
  number={2},
  pages={271--294},
  year={1959},
  publisher={Institute of Mathematical Statistics}
}

@article{che2024optimization,
  title={Optimization-driven adaptive experimentation},
  author={Che, Ethan and Jiang, Daniel R and Namkoong, Hongseok and Wang, Jimmy},
  journal={arXiv preprint arXiv:2408.04570},
  year={2024}
}

@incollection{zhao2024experimental,
  title={Experimental design for causal inference through an optimization lens},
  author={Zhao, Jinglong},
  booktitle={Tutorials in Operations Research: Smarter Decisions for a Better World},
  pages={146--188},
  year={2024},
  publisher={INFORMS}
}

@article{lan2025bias,
  title={The Bias-Variance Tradeoff in Data-Driven Optimization: A Local Misspecification Perspective},
  author={Lan, Haixiang and Liao, Luofeng and Elmachtoub, Adam N and Kroer, Christian and Lam, Henry and Zhang, Haofeng},
  journal={arXiv preprint arXiv:2510.18215},
  year={2025}
}

@article{elmachtoub2025dissecting,
  title={Dissecting the Impact of Model Misspecification in Data-driven Optimization},
  author={Elmachtoub, Adam N and Lam, Henry and Lan, Haixiang and Zhang, Haofeng},
  journal={arXiv preprint arXiv:2503.00626},
  year={2025}
}

\newpage
\begin{APPENDICES}

\AppendixStretch
\section{Benchmark Algorithms}\label{append:benchmarkalg}

\begin{algorithm}[ht]
	\caption{Benchmark: General Decision-blind Sequential Design}\label{alg:blind}
	\begin{algorithmic}[1]
    \STATE \textbf{Input:}   
		Initial slackness $r_0$

		\STATE Set $W_0 \gets \emptyset$, $n_0 \gets 0$, $H_0 \gets \mathcal{\tilde H}$.

		\FOR{$t =$  1,2,$\cdots$, $T$}

		\STATE For each potential design $\mathrm{X}_{(i)} \in \mathcal{X}$, calculate $p_{t,i} \gets  \max_{h_1, h_2 \in H_t}\left\{\|h_1(\mathrm{X}_{(i)}) - h_2(\mathrm{X}_{(i)})\|\right\}$, $\pi_{t,i} = \frac{p_{t,i}}{\sum_{j=1}^m p_{t,j}}$ 
        
        \IF {$\sum_{j=1}^m p_{t,j}>0$}
        \STATE Sample $x_t$ according to the probability $\pi_t = (\pi_{t,1},\dots,\pi_{t,m})$. Let $Q_{t,i}\in\{0,1\}$ denotes the indicator for the realization of $x_t$
        \ELSE 
        \STATE stop the algorithm and return $h_T$
        \ENDIF

		\STATE Conduct experiment and obtain a label $c_t$ associated with $x_t$ 
		\STATE Update $W_t \gets W_{t-1} \cup \{(x_t, c_t, \pi_t,Q_t)\}$, $n_t \gets n_{t-1} + 1$
		
		\STATE Let empirical reweighted loss $\hat{\ell}_{}^t(h) \gets \frac{1}{t} \sum_{(x_t, c_t, \pi_t,Q_t) \in W_t} \sum_{i=1}^m \frac{\mu(x_i)}{\pi_{t,i}}Q_{t,i} \|h(x_i)- c_i\|_2^2$
		
		\STATE Update $h_t \gets \arg\min_{h \in H_{t-1}} \hat{\ell}_{}^t (h)$ and $\hat{\ell}_{}^{t, *} \gets \min_{h \in H_{t-1}} \hat{\ell}_{}^t (h)$

		\STATE Update the confidence set of the predictor $H_t$ by $H_t \gets \{h\in H_{t-1}: \hat{\ell}_{}^{t}(h) \le \hat{\ell}_{}^{t,*} + r_{t} \}$

        \STATE Update $r_{t+1} \gets r_t \sqrt{\frac{t}{t+1} \frac{\log(2(t+1))}{\log (2t)}}$

		\ENDFOR
		 
		\STATE \textbf{Return} $h_T$
	\end{algorithmic}
\end{algorithm}

\section{Application Example: Cost-Sensitive Multiclass classification}\label{sec:cost_sensitive}

One application of decision-focused learning is in the cost-sensitive multiclass classification. Let $X$ denote the feature input and $Y$ denote the corresponding label. 
Suppose there are $m$ classes and $\mathcal{C}$ is a cost matrix with non-negative entries. Here $\mathcal{C}_{ij}$ denotes the misclassification cost for assigning a sample to a class $j$ when the true class is $i$.  Assuming that a correct classification incurs no cost, then the diagonal of $\mathcal{C}$ is zero. Under this setup, for a given input $x$, the classification goal is to minimize the expected misclassification cost, i.e., $\min_{j\in\{1,\cdots,m\}} \sum_{i=1}^m  \mathbb{P}(Y=i|X=x)\mathcal{C}_{i,j}$. Let $c=[c_1,...,c_m]^\top$ be the binary unit vector encoding the true class label, i.e., $c_i = 1$ if the true class of a sample is $i$ and $c_k = 0$ for all $k \neq i$. Then the expected cost of predicting the sample as class $j$ can be expressed as $\sum_{i=1}^m  \mathbb{P}(Y = i|X = x)\mathcal{C}_{i,j} =\sum_{i=1}^m \bbE [c_i|x] \mathcal{C}_{i,j}$. If we further use a probability vector $w\in \bbR^m$ to denote the decision vector, the resulting contextual stochastic linear programming formulation of the classification problem is:

\noindent  
\begin{minipage}{0.4\textwidth}
\centering
\begin{align*}
\min_{w \in \mathbb{R}^m} \quad & \bbE [(c^\top \mathcal{C}) w | x] \\
\text{s.t.} \quad & \mathbf{1}^\top w = 1, \\
& w \geq 0,
\end{align*}
\end{minipage} 
which is equivalent to 
\begin{minipage}{0.3\textwidth}
\begin{align*}
\min_{w,y \in \mathbb{R}^m} \quad & \bbE [c^\top y | x] \\
\text{s.t.} \quad & \mathbf{1}^\top w = 1, \\
&y = Cw, \\
& w \geq 0.
\end{align*}
\end{minipage}

As an illustrative example, consider a clinical testing scenario such as cardiovascular diagnosis, where there are $m=2$ classes: class 1 corresponds to negative and class 2 corresponds to positive.  Suppose the cost for a false positive is 1 and a false negative is 2, then the cost matrix is $\mathcal{C} = \begin{bmatrix}
0 & 1 \\
2 & 0
\end{bmatrix}$.

\section{Experiment Details}

\subsection{Cost-Sensitive Classification Experiment Details}
\label{appendix:cost_sensitive}

This section details the experimental setup and additional findings for cost-sensitive classification. All experiments were executed on a MacBook Pro (Apple M1 Pro, 16 GB RAM).

In the experiment, we first removed outliers from the dataset. Specifically, we excluded samples with systolic blood pressure greater than 250 or less than 70, and those with diastolic blood pressure greater than 200 or less than 40.

We conducted $100$ independent trials of the experiment. The hyperparameter $r_0$ for $\lspo$ and $\ell_2$ minimizer are set to be 5 and 100 respectively. In each trial, $1\%$ of the total dataset was randomly designated as the testing set. We then reported the average SPO risk on the training set, aggregated across all trials, along with its $95\%$ confidence interval.
A subset of our experimental results is presented in Figure \ref{fig:figures_of_cardio}. 

As illustrated in Figure \ref{fig:figures_of_cardio}, when the cost ratio is 1, the objective reduces to a pure prediction task; consequently, our method yields performance comparable to benchmark approaches. However, as the cost ratio increases to 3, decision-focused methods (SPO training loss) begin to outperform decision-blind algorithms (L2 training loss) due to their alignment with the decision-making objective. Furthermore, when both methods utilize SPO training loss, our directional uncertainty algorithm demonstrates a clear advantage over the L2 uncertainty algorithm. At higher cost ratios (5, 7, 9, and 11), these trends intensify: the superiority of decision-focused over decision-blind algorithms widens, and the benefits of incorporating directional uncertainty become increasingly pronounced.

However, as this cost ratio increases, the performance advantage of our method becomes significant, which highlights its alignment with the decision-focused objective. 

In addition to Figure \ref{fig:figures_of_cardio}, table \ref{tab:loss_ratio_summary} summarizes the experiment result by using the percentage SPO risk metric. The percentage SPO risk for a hypothesis $h\in \mathcal{H}$ is defined as $\frac{\sum_{i=1}^{N_{test}} \lspo(h(x_i),c_i) -\sum_{i=1}^{N_{test}} \lspo(h^*(x_i),c_i) }{\sum_{i=1}^{N_{test}} \lspo(h^*(x_i),c_i)}$, which is the average percentage increase in SPO risk relative to the best-performing hypothesis. A low percentage SPO risk indicates that the hypothesis is performing well within the current candidate hypothesis class, whereas a high percentage implies the hypothesis is significantly sub-optimal. The results in the table confirm that the relative effectiveness of our algorithm improves as the cost ratio increases, thereby strengthening our previous conclusion.

\begin{figure}[H]
    \centering

    \begin{subfigure}{0.45\textwidth}
        \centering
         
        \includegraphics[width=0.8\linewidth]{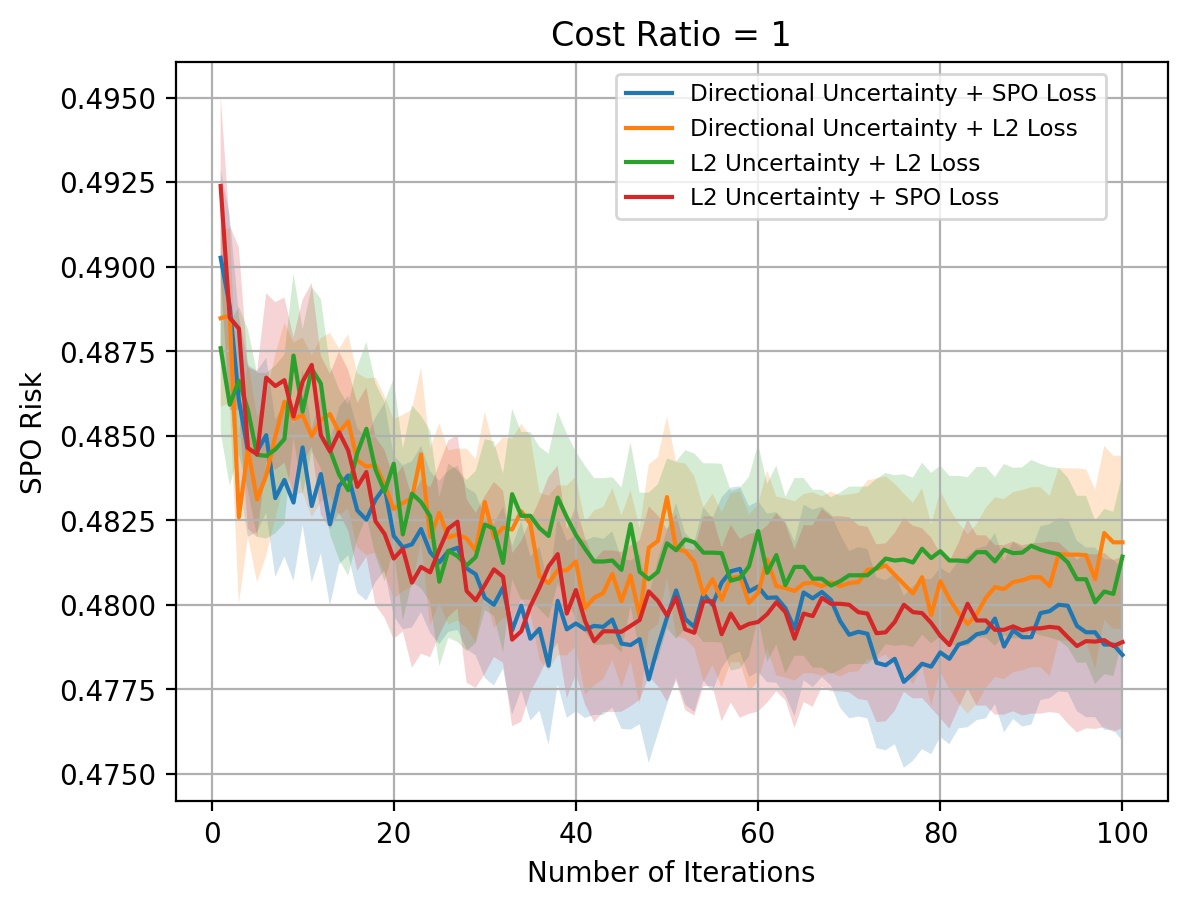}
        \label{fig:sub1}
    \end{subfigure}
     
    \hspace{0.01\textwidth} 
    \begin{subfigure}{0.45\textwidth}
        \centering
        \includegraphics[width=0.8\linewidth]{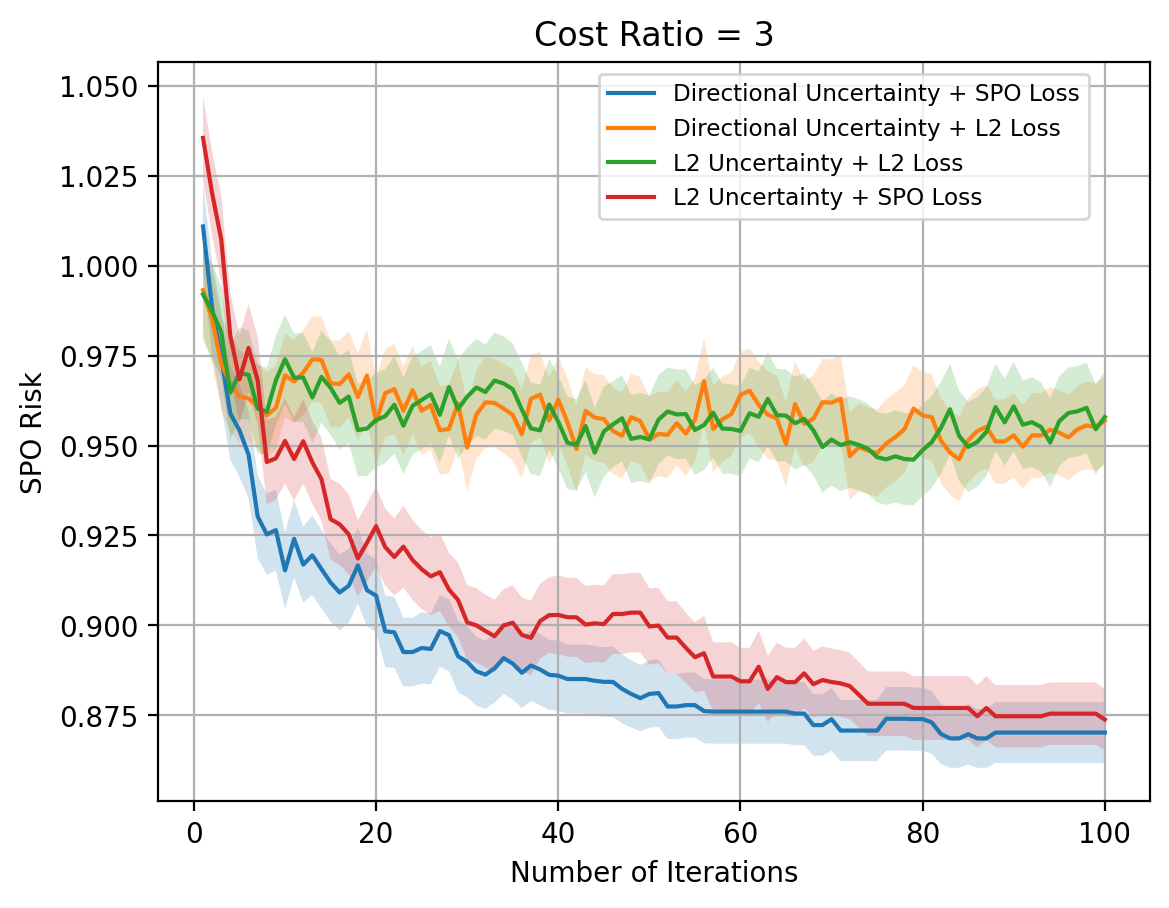}
        \label{fig:sub2}
    \end{subfigure}

    \begin{subfigure}{0.45\textwidth}
        \centering
        \includegraphics[width=0.8\linewidth]{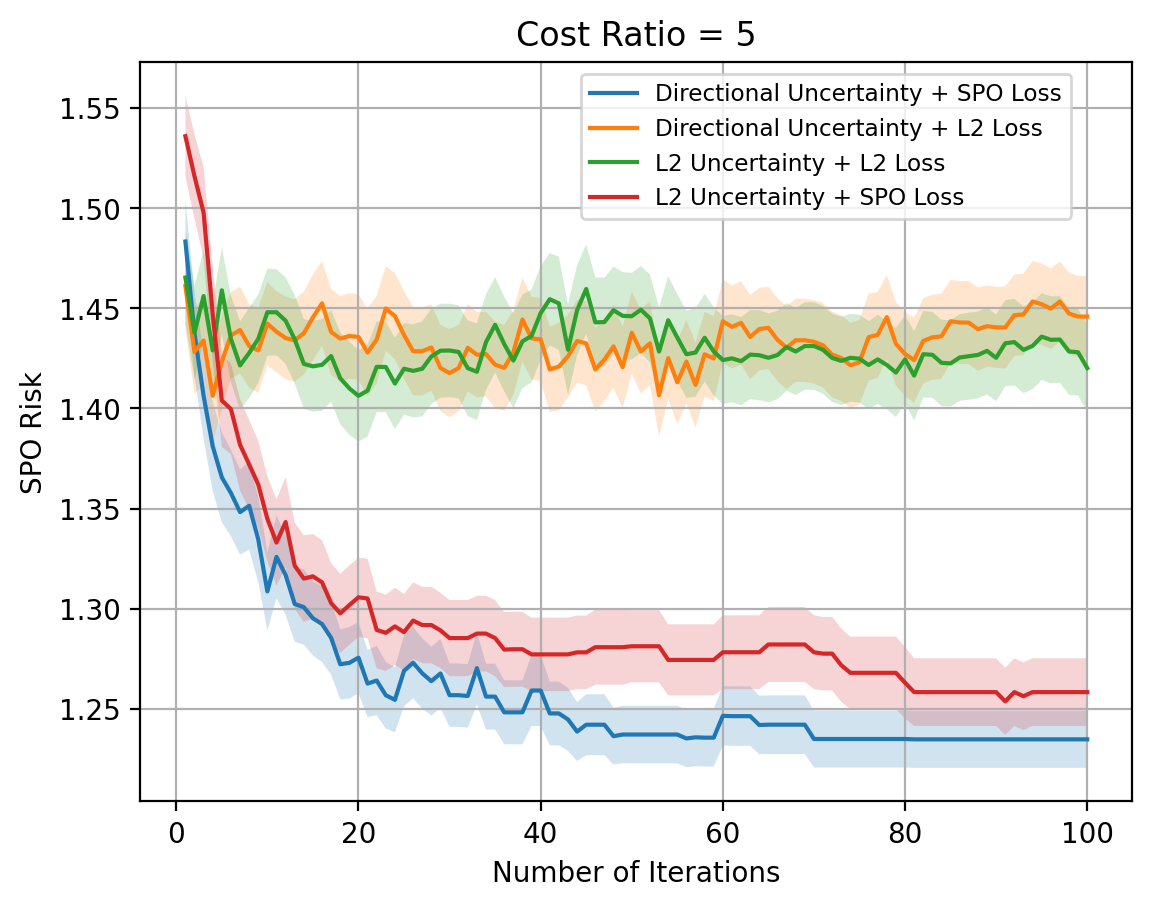}
        \label{fig:sub3}
    \end{subfigure}
    \hspace{0.01\textwidth}
    \begin{subfigure}{0.45\textwidth}
        \centering
        \includegraphics[width=0.8\linewidth]{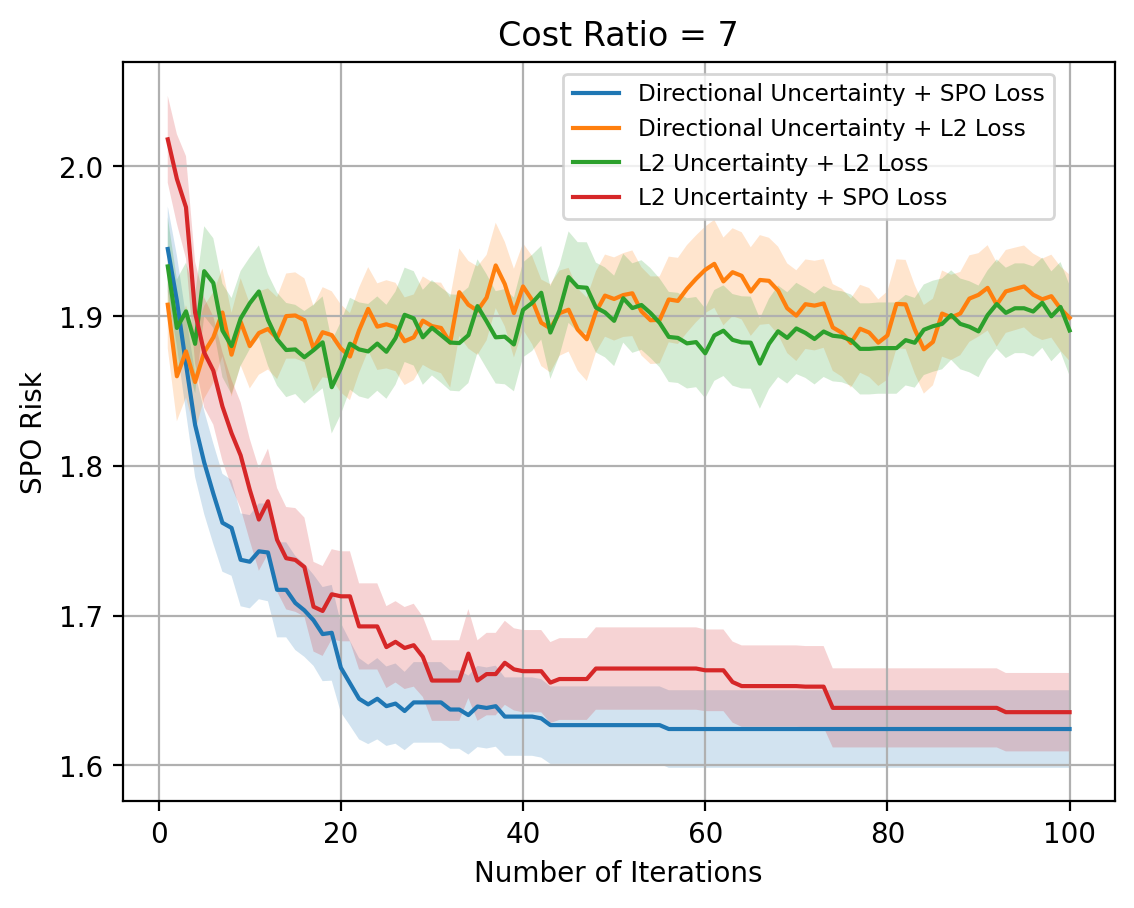}
        \label{fig:sub4}
    \end{subfigure}

    \begin{subfigure}{0.45\textwidth}
        \centering
        \includegraphics[width=0.8\linewidth]{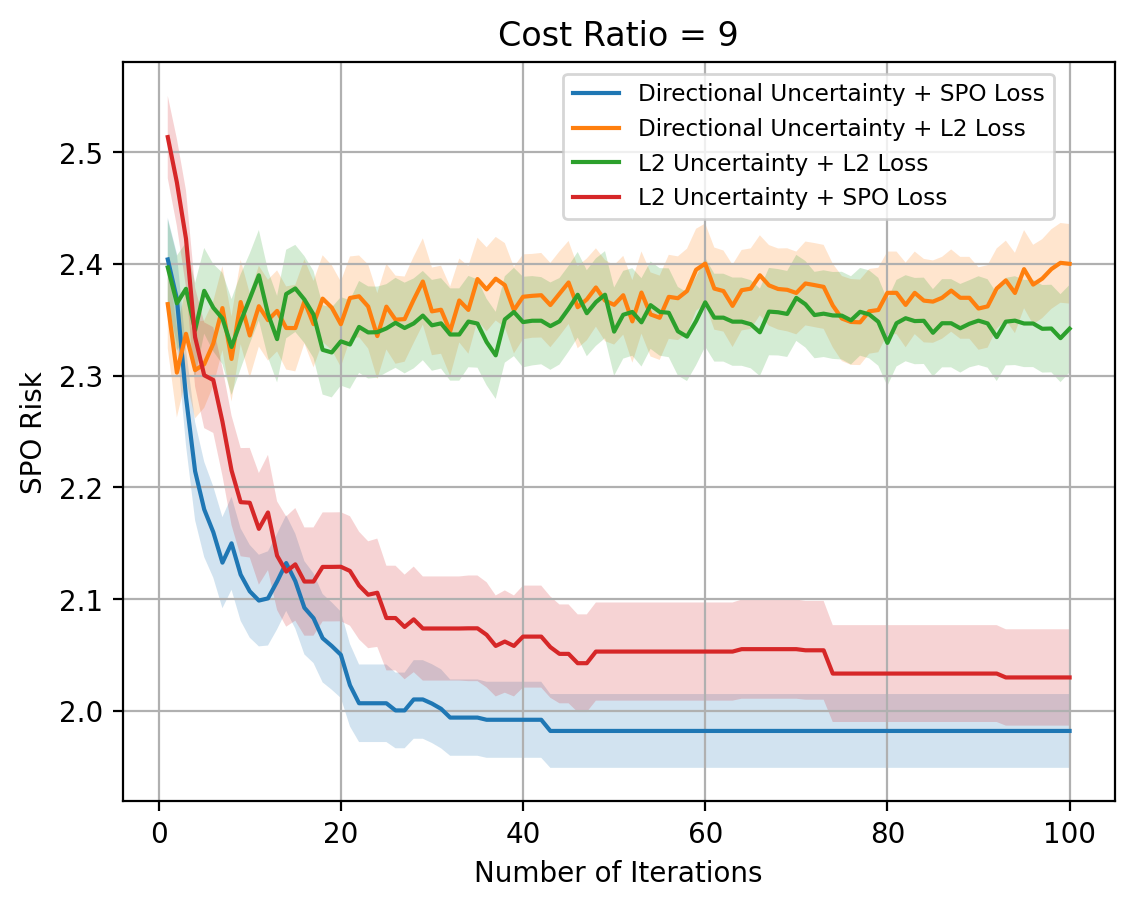}
        \label{fig:sub5}
    \end{subfigure}
    \hspace{0.01\textwidth}
    \begin{subfigure}{0.45\textwidth}
        \centering
        \includegraphics[width=0.8\linewidth]{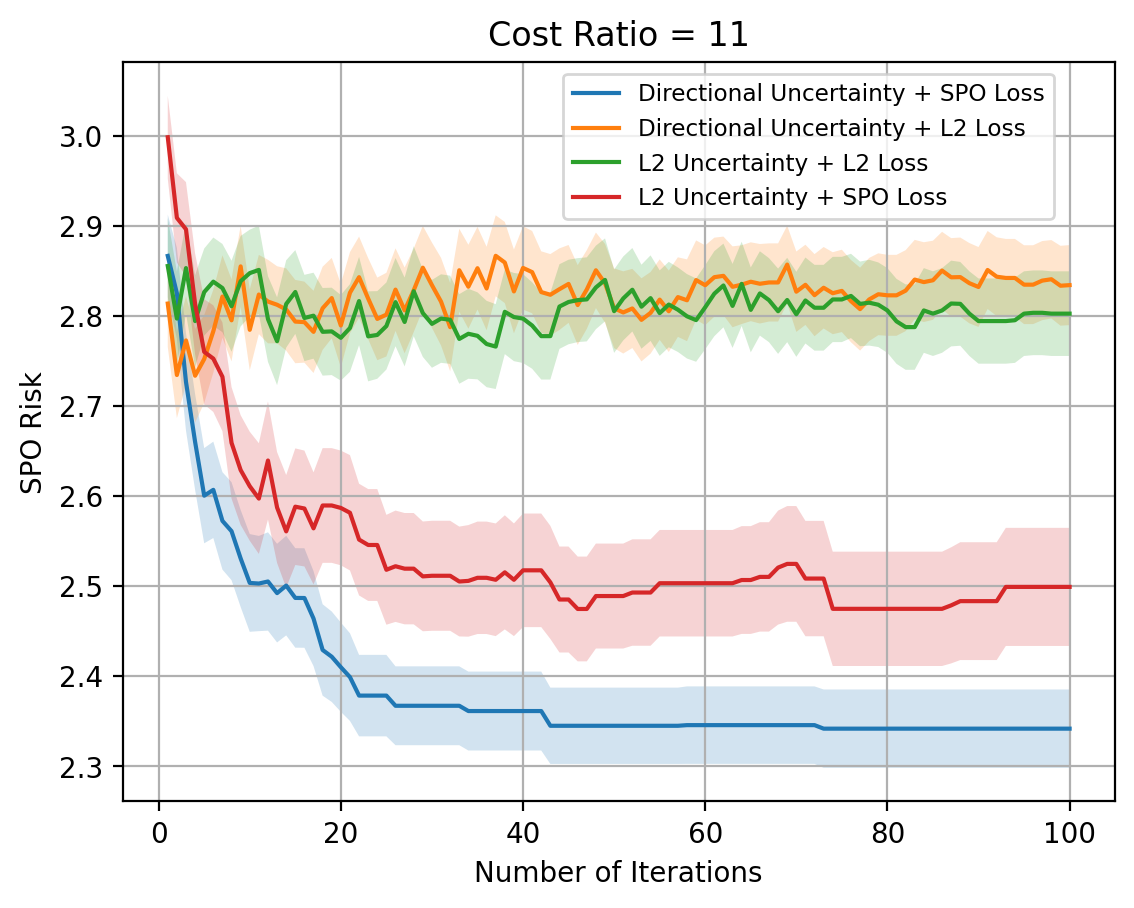}
        \label{fig:sub6}
    \end{subfigure}

    \vspace{-2mm}
    
    \caption{Average SPO Risk Across Different Cost Ratios}
    \label{fig:figures_of_cardio}
\end{figure}

\begin{table}[htbp]
\centering
\caption{Percentage SPO Risk (\%) by Different Method.}
\label{tab:loss_ratio_summary}
{\small  
\begin{tabular}{lcccc}
\toprule
{} & Directional Uncertainty & Directional Uncertainty  & $\ell_2$ Uncertainty  & $\ell_2$ Uncertainty \\
{} &     + $\lspo$ criterion           &     + $\ell_2$     criterion            &     + $\ell_2$    criterion                 &         + $\lspo$ criterion              \\
Cost Ratio &                &                      &                          &                     \\
\midrule
1          &          2.47\% &               3.20\% &                   3.13\% &              2.56\% \\
2          &          0.08\% &               4.98\% &                   5.04\% &              0.75\% \\
3          &          0.29\% &              10.27\% &                  10.45\% &              0.26\% \\
4          &          0.00\% &              15.08\% &                  14.13\% &              0.62\% \\
5          &          0.00\% &              18.15\% &                  16.07\% &              2.43\% \\
6          &          0.00\% &              18.62\% &                  17.84\% &              2.08\% \\
7          &          0.00\% &              18.56\% &                  18.18\% &              1.98\% \\
8          &          0.00\% &              20.26\% &                  20.29\% &              1.76\% \\
9          &          0.00\% &              22.65\% &                  19.65\% &              2.92\% \\
10         &          0.00\% &              22.51\% &                  21.74\% &              3.00\% \\
11         &          0.00\% &              21.47\% &                  20.21\% &              3.63\% \\
12         &          0.00\% &              22.83\% &                  21.37\% &              2.82\% \\
13         &          0.00\% &              22.86\% &                  21.49\% &              3.68\% \\
14         &          0.00\% &              21.81\% &                  20.25\% &              2.42\% \\
15         &          0.00\% &              22.02\% &                  22.49\% &              2.70\% \\
\bottomrule
\end{tabular}
}  
\end{table}
\subsection{LLM Allocation Problem Experiment Details}
\label{appendix:llm}

This section details the experimental setup and additional findings for LLM allocation. All experiments were executed on a MacBook Pro (Apple M1 Pro, 16 GB RAM).

We first present the precise optimization formulation for the experiment. Let $w_{i,j}$ denote the binary decision variable for assigning task $i \in \{1,\dots,5\}$ to model $j \in \{1,\dots,4\}$. Furthermore, let $c_{i,j} \in \{-5,\dots,-1\}$ represent the negative performance score of the output generated for task $i$ by model $j$; consequently, minimizing the objective function in Equation \ref{obj:llm} is equivalent to maximizing the total performance scores. Regarding constraints, Equation \ref{con:gpt} defines the capacity limit for GPT-5, while Equation \ref{con:other} governs the capacity of the remaining three models. Equation \ref{con:demand} ensures that each task is executed exactly once, and Equation \ref{con:binary} imposes the binary constraint on the decision variables.

\begin{align}
    \min_{w \in  \mathbb{R}^{5\times 4}}&: ~~ \bbE[\sum_{i=1}^5\sum_{j=1}^4 c_{i,j}w_{i,j}|x]\label{obj:llm}\\
\text{subject to} & ~~ \sum_{i=1}^5 w_{i,1} \leq 1, \subeqn \label{con:gpt}\\
& ~~ \sum_{i=1}^5 w_{i,j} \leq 2,\forall j=2,3,4, \subeqn\label{con:other}\\
&~~ \sum_{j=1}^4w_{i,j} = 1 ,\forall i=1...,5,\subeqn \label{con:demand}\\
& w_{i,j} \in \{0,1\} ,\quad \forall i =1,...,5, j =1,...,4.  \subeqn \label{con:binary}
\end{align}

\subsubsection{Experiment Setup and Data Collection}

In this experiment, we ran 100 independent trials, the hyperparameter $r_0$ for $\lspo$ and $\ell_2$ minimizer are set to be 5 and 20 respectively.
We used human evaluation for labeling the generated outputs. Regarding the labeling process, Table \ref{tab:score} summarizes the criteria used to evaluate the generated responses, and Table \ref{tab:avg_scores} shows the summarized statistics for the labeled output. In this collected data, we observed that the ChatGPT-5 is the most advanced model that performs well accross all tasks, while Deepseek-llm, Gemma3, Qwen1.5 have good performances only on some tasks. Among these tasks, summarization was the easiest, with a mean score of 4.55, while the other four tasks proved more challenging. The overall average score for the collected data is 3.66.

\begin{table}[htbp]
\begin{small}
\centering
\caption{Overall Quality Score Criterion (1-5 Scale)}
\label{tab:score}
 
\begin{tabular}{| p{2cm} | p{12cm} |} 
\hline
\textbf{Score} & \textbf{Criterion (Overall Quality)}\\
\hline \hline

\textbf{Score=5} (Excellent) &
The text is fluent, accurate, and fully addresses the prompt. It can not be greatly improved. \\
\hline

\textbf{Score=4} (Good) &
The text is good but has minor issues that do not impact its overall meaning or usefulness. These issues include one sentence that repeats content from the original text, small translation mistakes, and reasoning for the sentiment that is not fully accurate. \\
\hline

\textbf{Score=3} (Acceptable) &
The text addresses the prompt but contains noticeable errors, such as content repetition in a few sentences and noticeable translation mistakes.\\
\hline

\textbf{Score=2} (Poor) &
The text has significant problems, such as short garbled sections, an unfinished response, incorrect translations, or inaccurate sentiment analysis.\\
\hline

\textbf{Score=1} (Unusable) &
The text is completely inaccurate. It could include nonsense sentences that are not related to the text, copy the original text when summarizing or making bullet points, or contain large paragraphs of garbled text. \\
\hline
\end{tabular}
\end{small}
\end{table}

Finally, an example of text generation with its evaluated score is provided below. Table \ref{tab:llm_output_examples} shows the prompt that was used, what the generated text looks like, and what its score is.

\begin{table}[]
\caption{Average Scores for Tasks and Models}
\label{tab:avg_scores}
\begin{tabular}{|l|l|l|l|l|l|l|}
\hline
Model/Task   & Summary & Extension & Translation & Bullet Points & Sentiment Analysis & All Tasks \\ \hline
ChatGPT-5     & 5       & 4.96      & 5           & 4.93          & 4.66               & 4.91      \\ \hline
Deepseek-llm & 4.83    & 2.25      & 2.16        & 2.5           & 2.66               & 2.88      \\ \hline
Gemma3       & 3.94    & 2.73      & 2.2         & 2.26          & 2.1                & 2.64      \\ \hline
Qwen1.5      & 4.44    & 3.72      & 3.55        & 4.44          & 4.83               & 4.2       \\ \hline
All Models   & 4.55    & 3.42      & 3.23        & 3.53          & 3.56               & 3.66      \\ \hline
\end{tabular}
\label{table:average_scores}
\end{table}

\noindent\textbf{Source Text:} (from movie ratings category)
\begin{quote}
\textit{Christopher Nolan’s film *Oppenheimer* received widespread critical acclaim and exceptional audience interest. According to its Wikipedia reception summary, critics placed it near the top of year‑end lists and praised its direction, performances, and cinematography. The film became the highest‑grossing biographical feature ever and one of the top IMAX releases, reflecting sustained demand across weeks in theaters. Aggregators reported high scores, and notable filmmakers publicly lauded the work, calling it a defining achievement of the decade. The box‑office trajectory was unusual: even without immediately topping the domestic chart, it maintained momentum and ultimately crossed \$900 million worldwide, with IMAX formats delivering a significant share. Taken together, the critical consensus and commercial performance position *Oppenheimer* as both a prestige title and a mainstream event, bridging awards attention with durable audience appeal.}
\end{quote}

\begin{table}[htbp]
\begin{small}

\linespread{0.6}\selectfont

\renewcommand{\arraystretch}{1.2}

\setlength{\tabcolsep}{4pt}

\centering
\caption{Qualitative examples of generated text.}
\label{tab:llm_output_examples}

\begin{tabular}{| p{2.5cm} | p{11cm} |p
{2cm}|}
\hline
\textbf{Task (Prompt)} & \textbf{Generated Output}&
\textbf{Score}\\
\hline \hline

\textbf{Summarization} \newline
\textit{"Summarize the following text in one or two sentences..."} &
Christopher Nolan's film Oppenheimer received exceptional critical acclaim, high audience interest, and impressive commercial success, positioning it as both a prestige title and a mainstream event, with awards attention and durable audience appeal.&
5\\
\hline

\textbf{Extensions} \newline
\textit{"Continue writing the following text, adding one more paragraph..."} &
In conclusion, Christopher Nolan's film Oppenheimer has truly been a game-changer. It has received exceptional critical acclaim, audience interest, and commercial success, making it a defining achievement of the decade. The film's ability to bridge the gap between awards attention and durable audience appeal has solidified its place as both a prestige title and a mainstream event. Its unique box-office trajectory, with sustained demand across weeks in theaters and high-performing IMAX formats, has resulted in an impressive worldwide gross of over \$900 million. This unprecedented success has not only solidified Nolan's position as a visionary filmmaker but has also solidified Oppenheimer's place as a cinematic masterpiece that will be remembered for years to come.&
3
\\
\hline

\textbf{Bulletpoints} \newline
\textit{"Extract the key points from the following text as a bulleted list..."} &
* Oppenheimer received widespread critical acclaim and exceptional audience interest
* The film placed near the top of year-end lists and was praised for its direction, performances, and cinematography
* It became the highest-grossing biographical feature ever and one of the top IMAX releases
* Aggregators reported high scores and notable filmmakers lauded the work as a defining achievement of the decade
* The box-office trajectory was unusual, but the film ultimately grossed over \$900 million worldwide with a significant share from IMAX formats
* The critical consensus and commercial performance position Oppenheimer as a prestige title and a mainstream event, bridging awards attention with durable audience appeal. &
5\\
\hline

\textbf{Sentiment} \newline
\textit{"Analyze the sentiment of the following text... and explain your reasoning..."} &
The sentiment of the text is positive. The reasons for this positive sentiment are evident in the various positive phrases and indicators used throughout the text:

1. "Widespread critical acclaim" - This phrase indicates that the film received positive reviews and high praise from critics.
2. "Exceptional audience interest" - This phrase suggests that the film also attracted a significant and engaged audience.
3. "Highest-grossing biographical feature ever" - This statement highlights the commercial success of the film, which is another positive indicator.
4. "One of the top IMAX releases" - This phrase indicates that the film did well in a specific format, which is another positive aspect.
5. "Defining achievement of the decade" - This statement suggests that the film has had a lasting impact and is considered a significant work in its genre.
6. "Unusual box-office trajectory" - This phrase indicates that the film did not immediately top the domestic chart but still managed to maintain momentum and achieve significant global success.
7. "Prestige title and mainstream event" - This statement highlights the film's ability to appeal to both critics and audiences, which is a strong positive sentiment.
&
5\\
\hline

\end{tabular}
\end{small}
\end{table}

\subsubsection{Robustness Check Under Extra Constraint}

To demonstrate our method's robustness across different scenarios, we introduced an additional constraint to the baseline problem. Specifically, we restricted the joint usage of Qwen1.5 and Deepseek-llm to a maximum of two instances. This constraint was designed to enforce a more even distribution of model utilization. Maintaining all other settings constant, we introduced this additional constraint and compared our method’s performance against the other benchmark algorithms. 

During this experiment, the hyperparameter $r_0$ for $\lspo$ and $\ell_2$ minimizer are set to be 2.5 and 20, respectively. The results are presented in Figure \ref{fig:llm_extra}.

\begin{figure}[H]
    \centering
    \includegraphics[width=0.6\linewidth]{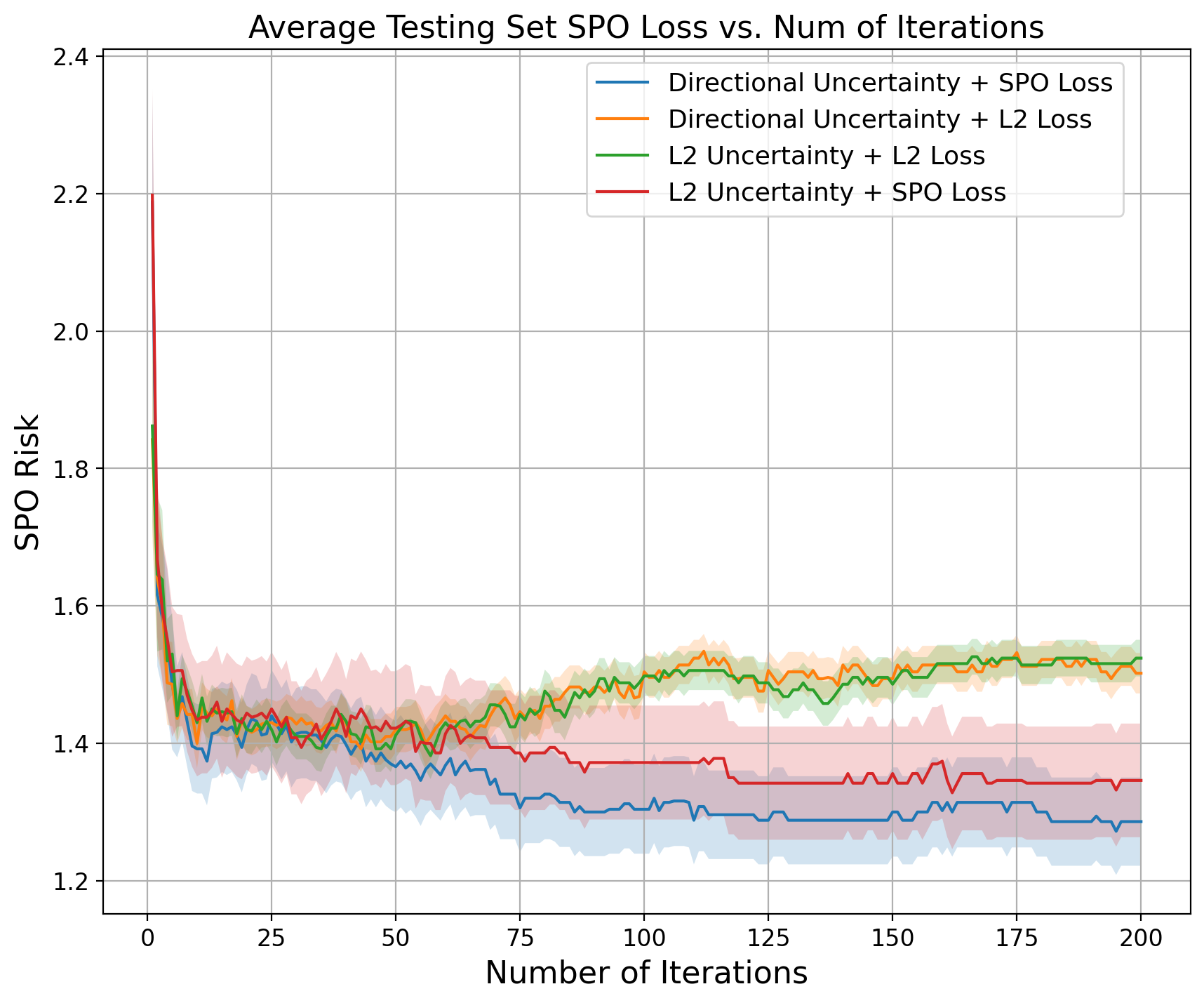}
    \caption{SPO Risk Under Extra Constraint}
    \label{fig:llm_extra}
\end{figure}

Figure \ref{fig:llm_extra} demonstrates that, under the additional constraint, the testing set SPO risk curves are similar with the original problem. Our method continues to outperform other benchmark algorithms. These results indicate that our method remains consistent under additional constraints, demonstrating its robustness to changes in the downstream decision-making problem.

\subsubsection{Empirical evidence of the Degeneracy Angle and Optimality Gap}\label{append:distribution_of_gap}

To demonstrate the presence of the degeneracy angle and the optimality gap, we constructed various hypothesis sets to analyze their empirical distributions. Specifically, we generated 100 independent random hypothesis sets, denoted as $\mathcal{H}_1, \dots, \mathcal{H}_{100}$. Each set $\mathcal{H}_k$ consists of five random hypotheses $h_{1}^k, \dots, h_{5}^k$. We constructed each individual hypothesis by drawing $n$ samples with replacement from each category—where $n$ is drawn from $\text{Uniform}[5, 10]$—and calculating the categorical sample mean. Figure \ref{fig:angle} shows the histogram of the experiment result. 
\begin{figure}[H]
    \centering
    \includegraphics[width=0.5\linewidth]{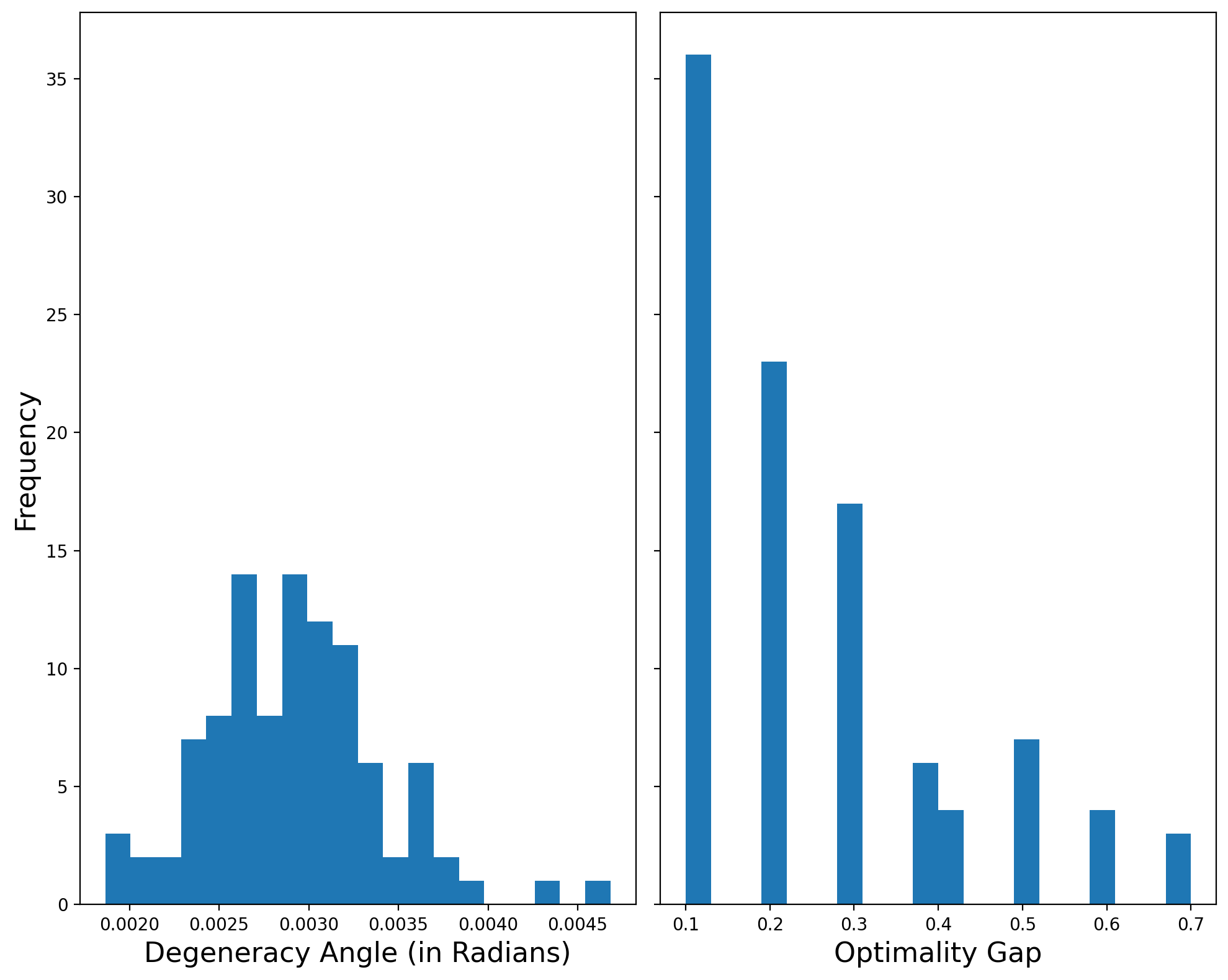}
    \caption{A Illustration of the Degeneracy Angle and Optimality Gap}
    \label{fig:angle}
\end{figure}

Figure \ref{fig:angle} illustrates the existence of a strictly positive degeneracy angle and optimality gap. These metrics are pivotal to the convergence analysis and comparative evaluation of the algorithms discussed previously. This empirical evidence substantiates our earlier theoretical assumptions, verifying that the geometric conditions required for convergence are met in practice. Consequently, the observed behavior aligns closely with the theoretical result.

\section{Proofs}

\subsection{Proof of Theorem \ref{thm:prediction}}

In this appendix, we provide the proof of Theorem \ref{thm:prediction}. The high-level idea consists of two steps. First, we show that when the slackness $r_t$ used to shrink the nested hypothesis class is not too small, the true optimal predictor is safely contained in $\mathcal{H}_t$ (Lemma~\ref{lemma:thm1}). Second, we show that the specific choice of $r_t$ in Theorem~\ref{thm:prediction} satisfies the condition in Lemma~\ref{lemma:thm1}, which completes the proof.

Recall that for all $t\ge 1$,
\begin{align*}
	\bbE[\ell^{\mathtt{rew}}(h; z_t)|\mathcal{F}_{t-1}]=\bbE[\lspo(h(x_t),c_t)] = \rspo(h).
\end{align*}
Consider the above applied to both $h \in \mathcal{H}$ and $h^\ast$ and averaged over $i \in \{1, \ldots, t\}$ to yield:
\begin{equation}\label{equ:surrogate_weighted_averaged}
	\rspo(h) - \rspo(h^\ast) ~=~ \frac{1}{t}\sum_{i = 1}^t \left(\bbE[\ell^{\mathtt{rew}}(h; z_i)|\mathcal{F}_{i-1}] - \bbE[\ell^{\mathtt{rew}}(h^\ast; z_i)|\mathcal{F}_{i-1}]\right).
\end{equation}

For any given $h \in H_{T-1}$, we denote the discrepancy between the conditional expectation and the realized excess re-weighted loss of predictor $h$ at time $t$ by $Z^{\mathtt{t}}_h$, i.e., $Z^{\mathtt{t}}_h := \bbE[\ell^{\mathtt{rew}}(h; z_t) - \ell^{\mathtt{rew}}(h^*; z_t)| \mathcal{F}_{t-1}] -  (\ell^{\mathtt{rew}}(h; z_t) - \ell^{\mathtt{rew}}(h^*; z_t))$. In the subsequent proof, we will show that $Z_h^t$ is a bounded random variable. Since $Z^{\mathtt{t}}_{h_T}$ is bounded and  $\bbE[Z^{\mathtt{t}}_{h_T}|\mathcal{F}_{t-1}]=0$, we have that $\sum_{t=1}^TZ^{\mathtt{t}}_{h_T}$ is a martingale, and we can apply Azuma's lemma to obtain finite sample risk bound.

Thus, for any given $h \in H_{T-1}$, \eqref{equ:surrogate_weighted_averaged} is equivalently written as:
\begin{equation}\label{equ:surrogate_exc}
	\rspo(h) - \rspo(h^\ast) ~=~ \frac{1}{t} \sum_{i = 1}^t 
	Z^{\mathtt{i}}_h + \frac{1}{t}\sum_{i = 1}^t\left(\ell^{\mathtt{rew}}(h; z_i) - \ell^{\mathtt{rew}}(h^*; z_i)\right).
\end{equation}

Before providing the proof of Theorem \ref{thm:prediction}, we first show that the confidence set $H_{T-1}$ contains the true optimal predictor $h^*$ at each iteration if $r_t$ satisfies some conditions in Lemma \ref{lemma:thm1}.
\begin{lemma}\label{lemma:thm1} Given $T \ge 1$, if $r_t$ satisfies that $\sup_{h \in H_{t-1}}\left|\frac{1}{t} \sum_{i = 1}^t Z^{\mathtt{i}}_{h}\right| \le r_t$, for any $t \le T-1$, then we have $h^* \in H_{T-1}$.
	 
\end{lemma}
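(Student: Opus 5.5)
We argue by induction on $t$ that $h^*\in H_t$ for every $t=0,1,\dots,T-1$. The base case is immediate, since $H_0=\mathcal{H}$ contains $h^*$.

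For the inductive step, suppose $h^*\in H_{t-1}$ for some $1\le t\le T-1$. Recall that $H_t=\{h\in H_{t-1}:\hat\ell^t(h)\le \hat\ell^{t,*}+r_t\}$ with $\hat\ell^{t,*}=\min_{h\in H_{t-1}}\hat\ell^t(h)$. Since $h^*\in H_{t-1}$ by hypothesis, it suffices to show
$$\hat\ell^t(h^*)-\hat\ell^t(h)\le r_t \quad\text{for every } h\in H_{t-1}.$$
Applying this to the minimizer $h=h_t$ then gives $\hat\ell^t(h^*)\le \hat\ell^{t,*}+r_t$, so $h^*\in H_t$.

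To obtain the displayed inequality, fix $h\in H_{t-1}$ and use identity \eqref{equ:surrogate_exc}. By definition, $\hat\ell^t(h)=\frac1t\sum_{i=1}^t\ell^{\mathtt{rew}}(h;z_i)$, so \eqref{equ:surrogate_exc} reads
$$\hat\ell^t(h)-\hat\ell^t(h^*)=\rspo(h)-\rspo(h^*)-\frac1t\sum_{i=1}^t Z_h^i .$$
Optimality of $h^*$ gives $\rspo(h)-\rspo(h^*)\ge 0$. The hypothesis of the lemma gives $\left|\frac1t\sum_{i=1}^t Z^i_h\right|\le r_t$, because $h\in H_{t-1}$ and $t\le T-1$. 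Combining the two,
$$\hat\ell^t(h)-\hat\ell^t(h^*)\ge -r_t,$$
which is exactly the required inequality. This closes the induction and yields $h^*\in H_{T-1}$.

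The only delicate point is bookkeeping, not estimation. First, the hypothesis controls only the supremum over $H_{t-1}$, so the induction hypothesis must already place both $h$ and $h^*$ in $H_{t-1}$ before the bound can be applied. Second, identity \eqref{equ:surrogate_exc} has to be applied with the empirical average over the first $t$ rounds, matching the definition of $\hat\ell^t$. If $T=1$ the claim is trivial, since $H_0=\mathcal{H}$. No probabilistic argument is needed here; the lemma is purely deterministic given the stated bound on the martingale averages.
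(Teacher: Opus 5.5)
Your proof is correct and matches the paper's argument: induct from $h^*\in H_0=\mathcal{H}$, rearrange \eqref{equ:surrogate_exc}, and combine $\rspo(h)\ge\rspo(h^*)$ with the assumed bound on $\frac1t\sum_{i\le t}Z^i_h$ for $h\in H_{t-1}$ to get $\hat\ell^t(h^*)\le\hat\ell^{t,*}+r_t$. The only differences are cosmetic: the paper applies the bound directly to the empirical minimizer $h_{T-1}\in H_{T-2}$ rather than to every $h\in H_{t-1}$, and your induction over $t$ is indexed more cleanly.
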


\begin{proof}{\bfseries Proof of Lemma \ref{lemma:thm1}} Since $H_{T-1} \subseteq H_{T-2} \subseteq ... \subseteq H_0$, we prove Lemma \ref{lemma:thm1} by induction. Obviously, we have that $h^* \in H_0 = \mathcal{H}$. 
	Assume that $h^* \in H_{t}$ for all $t \le T-2$. Next, we will show that $h^* \in H_{T-1}$.

	Since $H_{T-1} = \{h\in H_{T-2}: \hat{\ell}_{}^{T-1}(h) \le \hat{\ell}_{}^{T-1,*} + r_{T-1} \}$, to show  $h^* \in H_{T-1}$, it suffices to show that $\frac{1}{T-1}\sum_{i = 1}^{T-1} \ell^{\mathtt{rew}}(h^*; z_i)\le \frac{1}{T-1}\sum_{i = 1}^{T-1} \ell^{\mathtt{rew}}(h_{T-1}; z_i)+ r_{T-1}$.
	
	Since $\rspo(h_{T-1}) - \rspo(h^\ast)\ge 0$, by \eqref{equ:surrogate_exc}, we have that
	\begin{equation*}
		\rspo(h_{T-1}) - \rspo(h^\ast) =~\frac{1}{T-1} \sum_{i = 1}^{T-1} 
		Z^{\mathtt{i}}_{h_{T-1}} + \frac{1}{T-1}\sum_{i = 1}^{T-1}\left(\ell^{\mathtt{rew}}(h_{T-1}; z_i) - \ell^{\mathtt{rew}}(h^*; z_i)\right) \ge 0.
	\end{equation*}

	Since $h_{T-1} \in H_{T-2}$, by the condition in Lemma \ref{lemma:thm1},  we have $\frac{1}{T-1} \sum_{i = 1}^{T-1} Z^{\mathtt{i}}_{h_{T-1}} \le r_{T-1}$. Therefore, we obtain that 
	\begin{align*}
		\frac{1}{T-1}\sum_{i = 1}^{T-1} \ell^{\mathtt{rew}}(h^*; z_i)\le \frac{1}{T-1}\sum_{i = 1}^{T-1} \ell^{\mathtt{rew}}(h_{T-1}; z_i)+ r_{T-1}. 
	\end{align*}
	Thus, we obtain $h^* \in H_{T-1}$. \hfill \Halmos 
\end{proof}

\begin{proof}{\bfseries Proof of Theorem \ref{thm:prediction}}
	Here we prove a stronger version: with probability $1-\delta$, for all $T\ge 1$, we have
	\[ 
	(A) \text{: For any $h \in H_{T-1}$, we have $R_\ell(h) - R_\ell^\ast \le 2 r_{T} $ and } \sup_{h \in H_{T-1}}\left|\frac{1}{T} \sum_{t = 1}^TZ^{\mathtt{t}}_{h}\right| \le r_T.
	\]
    where $r_T = 2\gamma L \sqrt{\frac{\log(2T|\mathcal{H}|/\delta)}{T}} = \otilde(T^{-1/2})$.

	We prove Claim $(A)$ by the chain rule of probability.
	When $T = 0$, the risk bound in Theorem \ref{thm:prediction} holds by the definition of $r_0  \geq 2 \omega_\ell(\hat{\mathcal{C}},\mathcal{C})$. Next, we assume that Claim $(A)$ holds for all $t \le T-1$ and our goal is to show that Claim {\em (A)} holds for $T$ with probability at least $1-\frac{\delta}{2T^2}$.  In other words, for any $h \in H_{T-1}$, we have $\rspo(h) - \rspo^\ast \le 2 r_{T}$ and $\sup_{h \in H_{T-1}}\left|\frac{1}{T} \sum_{t = 1}^TZ^{\mathtt{t}}_{h}\right| \le r_T$ with probability at least $1-\frac{\delta}{2T^2}$. If it is true, by the chain rule of probability and taking the union bound over $T \in \{1, 2,...,\infty\}$, we will have that Claim {\em (A)} holds for all $T \ge 1$ with probability at least 
	\begin{align*}
		1 - \sum_{T=1}^{\infty}\frac{\delta}{2 T^2} \ge 1 - \frac{\delta \pi^2}{12} \ge 1 - \delta.
	\end{align*}
	Thus, we will obtain Claim $(A)$.

	The rest of the proof of the risk bound in Theorem \ref{thm:prediction} is to show that Claim {\em (A)} holds for $T$ with probability at least $1-\frac{\delta}{2T^2}$.

	Given any $h \in H_{T-1} \subseteq H_{T-2}$, since Claim  {\em (A)} holds for $t \le T-1$, we have that $\rspo(h) - \rspo(h^\ast)\le 2 r_{T-1}$.  
	By the "Lipschitz-like" property of $\lspo$ and the decreasing property of $r_t$, we have that for any $j \le T$,

		\begin{align*}
		|\ell^{\mathtt{rew}}(h; z_j) - \ell^{\mathtt{rew}}(h^*; z_j)|
		& = | \sum_{i=1}^{m} \frac{\mu(x_i)}{w_{j,i}} Q_{j,i}(\lspo(h(x_i),c_i)-\ell{_{SPO}}(h^*(x_i,),c_i))|\\
        & = |\frac{\mu(x_j)}{w_{j,i}}(\ell{_{SPO}}(h(x_j),c_j)-\ell{_{SPO}}(h^*(x_j,),c_j))|\\
        & \leq |\frac{\sum_{k=1}^m p_{j,k}}{p_{j,i}}(\ell{_{SPO}}(h(x_j),c_j)-\ell{_{SPO}}(h^*(x_j,),c_j))|\\
        & \leq \frac{\sum_{k=1}^m p_{j,k}}{p_{j,i}} L\|\frac{h(x_j)}{\|h(x_j)\|}-\frac{h^*(x_j)}{\|h^*(x_j)\|}\| \\
        & \leq L\sum_{k=1}^m p_{t,k} \leq L \gamma.
	\end{align*}

        It implies that the  loss function $\ell^{\mathtt{rew}}(h; z_t) - \ell^{\mathtt{rew}}(h^*; z_t)$ is upper bounded by $L\gamma$, so we can apply Azuma's inequality to the sequence $\sum_{t=1}^TZ^{\mathtt{t}}_{h}$. By taking the average of $Z^{\mathtt{t}}_{h}$, we have that 
	$\left|\frac{1}{T} \sum_{t = 1}^TZ^{\mathtt{t}}_{h}\right| \le \epsilon,$
	with probability at least $1 - 2  e^{-\frac{ \epsilon^2 T}{2 L^2\gamma^2}}$.
	
	By setting the probability $1 - 2  e^{-\frac{ \epsilon^2 T}{2 L^2\gamma^2}} = 1 - \frac{\delta}{2T^2 |\mathcal{H}|^2}$, we obtain that $\epsilon \le 2 \gamma L\sqrt{\frac{ \ln(2T |\mathcal{H}|/\delta)}{T}} \leq r_T$. By applying the union bound over all $h \in H_{T-1} \subseteq \mathcal{H}$, and all $h^* \in \mathcal{H}$, we have that  
	\begin{equation*}
		\sup_{h \in H_{T-1}}\left|\frac{1}{T} \sum_{t = 1}^{T}Z^{\mathtt{t}}_h\right| ~\le~ \epsilon \leq r_{T},
	\end{equation*}
	with probability at least 
	\begin{align*}
		1 -  |\mathcal{H}|^2 \cdot\frac{\delta}{2 T^2|\mathcal{H}|^2} \ge 1- \frac{\delta}{2 T^2}.
	\end{align*}
	Next, conditioning on the occurrence of this event, we will prove that $\rspo(h) - \rspo(h^\ast) \le 2 r_T$. Since Claim $(A)$ holds for all $t \le T-1$, it implies that for any $t \le T-1$, $\sup_{h \in H_{t-1}}\left|\frac{1}{t} \sum_{i = 1}^tZ^{\mathtt{i}}_{h}\right| \le r_t$. Thus, the condition in Lemma \ref{lemma:thm1} holds, and by Lemma \ref{lemma:thm1}, we have that $h^* \in H_{T-1}$.

	Since $h \in H_{T-1}$ and $h^* \in H_{T-1}$, we have that $\ell^{\mathtt{rew}}(h_T; z_t) - \ell^{\mathtt{rew}}(h^*; z_t) \le r_T$ in \eqref{equ:surrogate_exc} and we obtain that 
	$\rspo(h) - \rspo(h^\ast) \le 2 r_T$ if $\left|\frac{1}{T} \sum_{t = 1}^TZ^{\mathtt{t}}_{h}\right| \le r_T$.
	
	Therefore, we obtain that Claim {\em (A)} holds for $T$ with probability at least $1- \frac{\delta}{2 T^2}$.\hfill \Halmos
\end{proof}

\subsection{Other Proofs.}
\begin{proof}{\bfseries Proof of Fact \ref{lem:spo_scale_invariant}}
Fix any $\hat{c}$ and $\alpha>0$. Recall the oracle map
\[
w^*(c)\in\arg\min_{w\in S} w^\top c.
\]
For any $w\in S$, we have
\[
w^\top (\alpha \hat{c})=\alpha\,(w^\top \hat{c}).
\]
Since $\alpha>0$, multiplying the objective by $\alpha$ preserves the set of minimizers, i.e.,
\[
\arg\min_{w\in S} w^\top (\alpha \hat{c})
=
\arg\min_{w\in S} \alpha\,(w^\top \hat{c})
=
\arg\min_{w\in S} w^\top \hat{c}.
\]
Therefore, we can choose the oracle consistently so that $w^*(\alpha\hat{c})=w^*(\hat{c})$.

For the SPO loss, recall
\[
\lspo(\hat{c},c)=c^\top w^*(\hat{c})-c^\top w^*(c).
\]
Using $w^*(\alpha\hat{c})=w^*(\hat{c})$, we obtain
\[
\lspo(\alpha\hat{c},c)
=
c^\top w^*(\alpha\hat{c})-c^\top w^*(c)
=
c^\top w^*(\hat{c})-c^\top w^*(c)
=
\lspo(\hat{c},c),
\]
which proves the claim. \hfill\Halmos
\end{proof}

\begin{proof}{\bfseries Proof of Lemma \ref{lem:SPO_Lipschitz}}
From theorem 8 of \cite{el2022generalization}, we know the following hold for any $\hat{c},w,$ $$\hat{c}^\top(w-w^*(\hat{c})) \geq (\frac{\nu_\mathcal{S}(\hat{c})}{\Delta(\mathcal{S})})\|w-w^*(\hat{c})\|^2.$$

Plug in $\hat{c} = h_1(x), w = w^*(\frac{h_2(x)}{\|h_2(x)\|})$ and $\hat{c} = h_2(x), w = w^*(\frac{h_1(x)}{\|h_1(x)\|})$, we get the following two inequality:
\begin{align*}
    & h_1(x)^\top ( w^*(h_2(x))- w^*( h_1(x)) )\geq (\frac{\nu_\mathcal{S}( h_1(x))}{\Delta(\mathcal{S})})\|w^*(h_2(x))-w^*( h_1(x))\|^2, \\
    &h_2(x)^\top ( w^*(h_1(x))- w^*(h_2(x)) )\geq (\frac{\nu_\mathcal{S}(h_2(x))}{\Delta(\mathcal{S})})\|w^*(h_1(x))-w^*(h_2(x))\|^2.
\end{align*}

Divide $\|h_1(x)\|,\|h_2(x)\| $ on both side and use Assumption \ref{assumption:tau_lower_bound}, we get
\begin{align*}
    & \frac{h_1(x)}{\|h_1(x)\|}^\top ( w^*(h_2(x))- w^*( h_1(x)) )\geq (\frac{ \eta }{\Delta(\mathcal{S})})\|w^*(h_2(x))-w^*( h_1(x))\|^2 ,\\
    &\frac{h_2(x)}{\|h_2(x)\|}^\top ( w^*(h_1(x))- w^*(h_2(x)) )\geq (\frac{\eta}{\Delta(\mathcal{S})})\|w^*(h_1(x))-w^*(h_2(x))\|^2.
\end{align*}

Adding these two inequalities and applying the Cauchy-Schwarz inequality, we get 

\begin{align*}
    \frac{2\eta}{\Delta(\mathcal{S})}\|w^*(h_1(x))-w^*(h_2(x))\|^2  &\leq 
    \left(\frac{h_1(x)}{\|h_1(x)\|} -\frac{h_2(x)}{\|h_2(x)\|} \right)^\top (w^*(h_1(x))-w^*(h_2(x))) \\
    &\leq 
    \left\|\frac{h_1(x)}{\|h_1(x)\|} -\frac{h_2(x)}{\|h_2(x)\|} \right\| \|w^*(h_1(x))-w^*(h_2(x))\|.
\end{align*}

Hence $\|w^*(h_1(x))-w^*(h_2(x))\|$ is upper bounded by $\left\|\frac{h_1(x)}{\|h_1(x)\|} -\frac{h_2(x)}{\|h_2(x)\|} \right\|\frac{\Delta(\mathcal{S})}{2\eta}$. 

Put these results together, we get 
\begin{align*}
    &|\lspo(h_1(x),c)-\lspo(h_2(x),c)|\\
    &=|\lspo(\frac{h_1(x)}{\|h_1(x)\|},c)-\lspo(\frac{h_2(x)}{\|h_2(x)\|},c)|\\
    &=|c^\top \left(w^*(\frac{h_1(x)}{\|h_1(x)\|})-w^*(\frac{h_2(x)}{\|h_2(x)\|})\right)|\\
    &=|c^\top \left(w^*(h_1(x))-w^*(h_2(x))\right)|\\
    &\leq \|c\| \left\|w^*(\frac{h_1(x)}{\|h_1(x)\|})-w^*(\frac{h_2(x)}{\|h_2(x)\|})\right\| \\
    & \leq  \frac{\Delta(\mathcal{S})\rho(\mathcal{C})}{ 2\eta} \left\|\frac{h_1(x)}{\|h_1(x)\|}-\frac{h_2(x)}{\|h_2(x)\|}\right\|, 
\end{align*}
where the first and third equations used Fact \ref{lem:spo_scale_invariant}, the first inequality used the Cauchy-Schwarz inequality, the second inequality applied the upper bound of $\|w^*(h_1(x))-w^*(h_2(x))\|$.
\hfill\Halmos

\end{proof}

\begin{proof}{\bfseries Proof of Theorem \ref{thm:stoppingtime1}}
By Theorem \ref{thm:prediction}, with probability $1-\delta$ we have $R_\ell(h_\tau)-R_\ell(h^*)\leq 2r_\tau$ where $r_\tau = 2\gamma L \sqrt{\frac{\log(2\tau|\mathcal{H}|/\delta)}{\tau}}$. Then $R_\ell(h_\tau)-R_\ell(h^*) \leq \epsilon$ is implied by 
$
2r_\tau = 4\gamma L \sqrt{\frac{\log(2\tau|\mathcal{H}|/\delta)}{\tau}} \leq \epsilon$,
which is equivalent to 
$
\log(\frac{2\tau |\mathcal{H}|}{\delta}) \leq  (\frac{\epsilon}{4\gamma L})^2 \tau
$.
Plug in $\tau = C \frac{1}{\epsilon^2}\log \frac{2|\mathcal{H}|}{\epsilon^2 \delta}$, we have that the RHS can be expressed as 
$$RHS = \frac{C}{(4\gamma L)^2} \log \frac{2|\mathcal{H}|}{\epsilon^2 \delta}.$$

The LHS $\log(\frac{2\tau |\mathcal{H}|}{\delta})$ can be written as
$$
\begin{aligned}
    LHS &= \log \tau + \log \frac{2|\mathcal{H}|}{\delta} \\
    & = \log C + \log(\frac{2|\mathcal{H}|}{\epsilon^2 \delta}) + \log \log (\frac{2|\mathcal{H}|}{\epsilon^2 \delta})\\
    & \leq \log C + 2\log(\frac{2|\mathcal{H}|}{\epsilon^2 \delta}).
\end{aligned}
    $$

Pick $C = 3 (4\gamma L)^2$ we will have $LHS \leq RHS$. Hence 
$\tau =\frac{3(4\gamma L)^2}{\epsilon^2}\log \frac{2|\mathcal{H}|}{\epsilon^2 \delta} = \otilde(\epsilon^-2)$ will safice to bound the risk under $\epsilon$. \hfill \Halmos
\end{proof}

\begin{proof}{\bfseries Proof of Proposition \ref{sublinear_iwal}}
We prove the two statements separately.

\noindent\textbf{Part (a).}
By Theorem~\ref{thm:prediction}, with probability at least $1-\delta$, for all $T\ge 1$,
\[
\rspo(h_T)-\rspo(h^*)
\le
4\gamma L \sqrt{\frac{\log(2T|\mathcal{H}|/\delta)}{T}}.
\]
Since $\log(2T|\mathcal{H}|/\delta)=\log T + \log(2|\mathcal{H}|/\delta)$, the right-hand side is of order
\[
\otilde\!\left(T^{-1/2}\sqrt{\log(1/\delta)}\right),
\]
where $\otilde(\cdot)$ hides logarithmic factors in $T$ and $|\mathcal{H}|$. Moreover, $\rspo^*=\rspo(h^*)$ by definition of $h^*\in\arg\min_{h\in\mathcal{H}}\rspo(h)$, hence
\[
\rspo(h_T)-\rspo^* \le \otilde\!\left(T^{-1/2}\sqrt{\log(1/\delta)}\right),
\]
which establishes (a).

\medskip
\noindent\textbf{Part (b).}
Fix any $\epsilon>0$ and $\delta\in(0,1)$. Let $\tau$ be the stopping time defined in Theorem~\ref{thm:stoppingtime1}:
\[
\tau := \frac{3(4\gamma L)^2}{\epsilon^2}\log \frac{2|\mathcal{H}|}{\epsilon^2 \delta}.
\]
By Theorem~\ref{thm:stoppingtime1}, under the same assumptions as Theorem~\ref{thm:prediction}, we have that when $T\ge \tau$, with probability at least $1-\delta$,
\[
\rspo(h_T)-\rspo(h^*)\le \epsilon.
\]
It remains to verify the claimed asymptotic scaling of $\tau$. Since
\[
\log \frac{2|\mathcal{H}|}{\epsilon^2 \delta}
=
\log\frac{1}{\delta} + \log\frac{2|\mathcal{H}|}{\epsilon^2},
\]
we obtain
\[
\tau
=
\frac{3(4\gamma L)^2}{\epsilon^2}
\left(\log\frac{1}{\delta} + \log\frac{2|\mathcal{H}|}{\epsilon^2}\right)
=
\otilde\!\left(\epsilon^{-2}\log\frac{1}{\delta}\right),
\]
where $\otilde(\cdot)$ hides factors logarithmic in $|\mathcal{H}|$ and $1/\epsilon$. This proves (b).

\medskip
Finally, both (a) and (b) hold on the same $1-\delta$ probability event guaranteed by Theorem~\ref{thm:prediction} and Theorem~\ref{thm:stoppingtime1}, completing the proof.
\end{proof}

\begin{proof}{\textbf{Proof of Theorem \ref{thm:better}}}
We consider the two algorithms in turn.

Regarding the proof of the early stopping of Algorithm \ref{alg:general-iwal}, by
Theorem \ref{thm:prediction} (with
constant \(c=4\gamma L\)), for any confidence parameter
\(\delta\in(0,1]\),
\[
  \Pr\!\Bigl[
      \rspo(h_T)-\rspo^\star
      \le c\sqrt{\frac{\log(2T|\mathcal H|/\delta)}{T}}
     \Bigr]
  \;\ge\;1-\delta .
\]

Choose
\(
  \delta_T = 2T|\mathcal H|\,
            \exp\!\bigl(-\alpha^{2}T/16\gamma^{2}L^{2}\bigr)
      = 2T|\mathcal H|\,e^{-k_1T}.
\)
With this choice, we have
\(c\sqrt{\log(2T|\mathcal H|/\delta_T)/T}=\alpha\). Hence, we have
\[
  \Pr\!\bigl[
        \rspo(h_T)-\rspo^\star >\alpha
      \bigr]
  \;\le\;\delta_T
  = 2T|\mathcal H|\,e^{-k_1T}.
\]

Because of the optimality gap,
\(\rspo(h_T)-\rspo^\star\in\{0\}\cup[\alpha,\infty)\).
Therefore, for some sufficiently large $T$,
$T \ge \frac{3(4\gamma L)^2}{\alpha^2}\log \frac{2|\mathcal{H}|}{\alpha^2 \delta}$, with probability $1-\delta$ we have $\rspo(h_T)-\rspo(h^*)= 0$.

Consequently, after designing $T$ experiments, we have $H_t \subseteq \mathcal{H}^*$.

\begin{figure}[ht]
    \centering
    \includegraphics[width=0.4\linewidth]{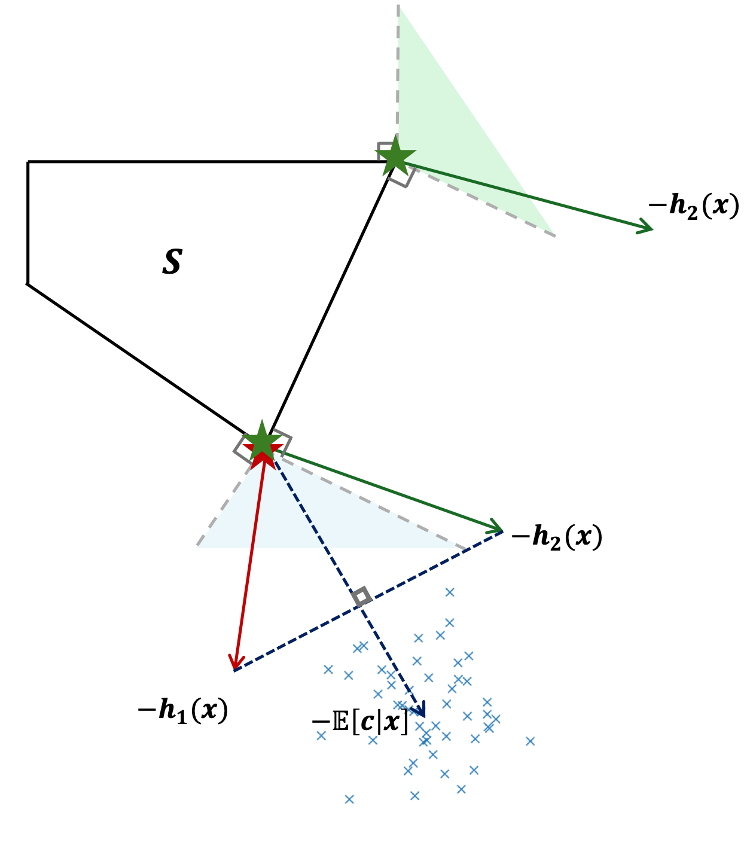}
    \caption{An Example of a Distribution Under Which a Decision-Blind Algorithm Results in a Later Stopping Time}
    \label{fig:counterexample}
\end{figure}

Next, to show that Algorithm \ref{alg:blind} may not converge, we consider a setting where the candidate pool contains a single feature value \(x\), and we assume the ambient space dimension is \(d\ge2\). Consider the specific distribution illustrated in Figure \ref{fig:counterexample}. Suppose we have two prediction candidates, \(h_1(x)\) and \(h_2(x)\in\mathbb{R}^d\), that possess equal norm. Furthermore, assume that the resulting optimal decision variables differ, i.e., \(w^*(h_1(x))\not = w^*(h_2(x))\). Let the conditional expectation \(\bbE[c|x]\) be positioned such that it equally divides the angle \(\angle(h_1(x), h_2(x))\). Additionally, assume that \(\bbE[c|x]\) lies within the same optimal decision cone as \(h_1(x)\), which implies \(w^*(h_1(x)) = w^*(\bbE[c|x])\), making \(h_1\) the optimal prediction model. Suppose the distribution of the cost vector is generated by
\[
c_t \;=\; \bbE[c|x] + \epsilon_t,\qquad t=1,\dots,T,
\]
where \(\{\epsilon_t\}_{t=1}^T\) are i.i.d. with \(\bbE[\epsilon_t]=0\), coordinatewise constant variance \(\sigma_\epsilon^2\in(0,\infty)\). Define $\bar{\Delta} := h_2(x) - h_1(x)$.

Since \(\|h_1(x)\|=\|h_2(x)\|\) and they equally divide the angle, for any mean \(\bbE[c|x]\) we have
\[
\|h_2(x)-\bbE[c|x]\|^2 - \|h_1(x)-\bbE[c|x]\|^2 \;=\; 2\,\bbE[c|x]^\top\bar{\Delta} = 0.
\] 
Consequently, for one draw,
\[
\|h_2(x)-c_t\|^2 - \|h_1(x)-c_t\|^2 = 2(h_2(x)-h_1(x))^\top c_t = 2\,\bar{\Delta}^\top ( \bbE[c|x] + \epsilon_t)
= 2\,\bar{\Delta}^\top\epsilon_t.
\]
Averaging gives
\[
\overline G_T \;:=\; \frac1T\sum_{t=1}^T\big(\|h_2-c_t\|^2 - \|h_1-c_t\|^2\big)
=  2\,\bar{\Delta}^\top \overline\epsilon_T,\qquad
\overline\epsilon_T:=\frac1T\sum_{t=1}^T\epsilon_t.
\]
Define \(W_T:=\bar{\Delta}^\top(\sqrt{T}\,\overline\epsilon_T)\). By the classical central limit theorem,
\[
W_T \ \Rightarrow\ \mathcal N\!\big(0,\ \sigma_\epsilon^2\,\|\bar{\Delta}\|^2\big).
\]
Thus, the event that the blind rule prefers \(h_2\) (or, equivalently, fails to eliminate it) is
\[
\{\overline G_T\le 0\}
\quad\Longleftrightarrow\quad
\Big\{\, \frac{2 W_T}{\sqrt{T}}\le 0\,\Big\}\quad\Longleftrightarrow\quad
\Big\{\,W_T \;\le\; 0\,\Big\}.
\]

By CLT, we have $\Pr(W_T \;\ge\; 0) = 1/2$ when $T$ goes to infinity. Thus, by continuity of the Gaussian cdf, there exists a constant $
p_\infty \;:=\; 1-\Phi\!\Big(\tfrac{1}{2\,\sigma_\epsilon\,}\Big)\ \in (0,1/2]$, (for example, $0.25$), 
and \(T_0<\infty\) such that for all \(T\ge T_0\),
\[
\Pr(\overline G_T\le 0)\ \ge\ \frac{p_\infty}{2}\ =:\delta_0\ >0.
\]
Therefore, with probability at least \(\delta_0\) (uniformly for all large \(T\)) the blind baseline does not eliminate \(h_2\). Since \(\rspo(h_2)>\rspo^\ast\) under \(\mathcal D_T\), we have exhibited a distribution for which the blind algorithm still retains a suboptimal predictor in \(H_T\), which proves item~(2) of Theorem~\ref{thm:better}. \hfill \Halmos
\end{proof}

\begin{proof}{\bfseries Proof of Proposition~\ref{prop:equiv_hypothsis}}
We first verify Assumption~\ref{assumption:tau_lower_bound}. By construction, Algorithm~\ref{alg:aug-hypothesis} augments the hypothesis class so that, for every $h \in \mathcal{H}$ and every $x \in \mathcal{X}$, either
$
\frac{\nu_{\mathcal{S}}(h(x))}{\|h(x)\|} \ge \eta$,

or the prediction $h(x)$ is replaced by a nondegenerate cost vector $c$ corresponding to the cone that induces the same optimizer. By the definition of $\eta$,
it follows that, for all $\tilde{h} \in \tilde{\mathcal{H}}$ and all $x \in \mathcal{X}$, $
\frac{\nu_{\mathcal{S}}(\tilde{h}(x))}{\|\tilde{h}(x)\|} \ge \eta$.
Hence, Assumption~\ref{assumption:tau_lower_bound} holds for the augmented hypothesis class.

We now show that $\mathcal{H}$ and $\tilde{\mathcal{H}}$ are equivalent with respect to $\mathcal{X}$ in the sense of Definition~\ref{def:equiv_hypothesis}.  
First, consider any $x \in \mathcal{X}$ and any $h \in \mathcal{H}$. By the construction of Algorithm~\ref{alg:aug-hypothesis}, there exists a corresponding hypothesis $\tilde{h} \in \tilde{\mathcal{H}}$ such that either $\tilde{h}(x) = h(x)$ or $\tilde{h}(x)$ is replaced by a nondegenerate cost vector lying in the same normal cone. In both cases, the downstream optimizer satisfies $
w^*(h(x)) = w^*(\tilde{h}(x))$,
which implies $
R_{\mathrm{SPO}}(h(x)) = R_{\mathrm{SPO}}(\tilde{h}(x))$.

Conversely, for any $x \in \mathcal{X}$ and any $\tilde{h} \in \tilde{\mathcal{H}}$, by construction there exists an original hypothesis $h \in \mathcal{H}$ such that $\tilde{h}$ coincides with $h$ except possibly at finitely many design points where $\tilde{h}(x)$ is replaced by a cost vector inducing the same optimizer. Therefore, $
R_{\mathrm{SPO}}(\tilde{h}(x)) = R_{\mathrm{SPO}}(h(x))$.
Both conditions in Definition~\ref{def:equiv_hypothesis} are satisfied, which establishes the equivalence of $\mathcal{H}$ and $\tilde{\mathcal{H}}$.\hfill\Halmos
\end{proof}

\begin{proof}{\bfseries Proof of Proposition \ref{prop:gap_example}}
Let $\mathrm{X}_{(j)}$ denote the corresponding design in Condition \ref{assu:x_delta}, for any suboptimal predictor 
$h\in \mathcal{H} \setminus \mathcal{H}^*$ we have
$$
\begin{aligned}
\rspo(h) - \rspo^\star &= \bbE[\lspo(h(x),c)] - \bbE[\lspo(h^*(x),c)]
\\ &=  \bbE [\bbE[\lspo(h(x),c) - \lspo(h^*(x),c)|x]]
\\ &= \sum_{i=1}^n \mu(\mathrm{X}_{(i)})\bbE[\lspo(h(x),c) - \lspo(h^*(x),c)|x = \mathrm{X}_{(i)}]
\\  & \geq \mu (\mathrm{X}_{(j)})\bbE[\lspo(h(x),c) - \lspo(h^*(x),c)|x = \mathrm{X}_{(j)}]
\\ &= \mu (\mathrm{X}_{(j)})\bbE[c^\top \left(w^*(h(x)) - w^*(h^*(x)\right)|x = \mathrm{X}_{(j)}]
\\ &= \mu (\mathrm{X}_{(j)})\bbE[c^\top |x = \mathrm{X}_{(j)}]\left(w^*(h(\X_{(j)})) - w^*(h^*(\X_{(j)})\right),
\end{aligned}
$$
where the first inequality used condition \ref{assu:risk_minimizer}, conditioned on any $x\in \X$ we have $\bbE[\lspo(h(x),c)] \geq \bbE[\lspo(\bbE[c|x],c)] = \bbE[\lspo(h^*(x),c)]$. 

By Condition \ref{assu:risk_gap} and Equation (11) in \cite{el2022generalization}, for $w_1 = w^*(h(\X_{(j)}))$, we have,
\[ \bbE[c|x = \mathrm{X}_{(i)}]^\top\Big(w_1 - w^*(h^*(\mathrm{X}_{(i)}))\Big) = \bbE[c|x = \mathrm{X}_{(i)}]^\top\Big(w_1 - w^*(\bbE[c|(\mathrm{X}_{(i)})])\Big) \ge  {\eta \underline{c}}\underline{\gamma} .\]
Consequently, by the lower bound of $\mu(\cdot)$ in Condition \ref{assu:x_delta}, we have that the suboptimal gap is at least $\underline{\mu}{\eta \underline{c}}\underline{\gamma}$.\hfill \Halmos
\end{proof}

\end{APPENDICES}
\end{document}